\documentclass{article}

\usepackage{PRIMEarxiv}

\usepackage[utf8]{inputenc} 
\usepackage[T1]{fontenc}    
\usepackage{hyperref}       
\usepackage{url}            
\usepackage{booktabs}       
\usepackage{amsfonts}       
\usepackage{nicefrac}       
\usepackage{microtype}      
\usepackage{lipsum}
\usepackage{graphicx}
\usepackage{natbib}  
\usepackage{caption} 
\usepackage{algorithm}
\usepackage{algorithmic}
\usepackage{tabularx}
\usepackage{amsmath,amssymb,amsthm,mathtools}
\usepackage{enumitem}
\usepackage{booktabs}
\usepackage{microtype}
 
\newtheorem{theorem}{Theorem}[section]
\newtheorem{lemma}[theorem]{Lemma}
\newtheorem{proposition}[theorem]{Proposition}

\theoremstyle{definition}
\newtheorem{definition}[theorem]{Definition}
\newtheorem{assumption}[theorem]{Assumption}

\theoremstyle{remark}
\newtheorem{remark}[theorem]{Remark}

\newcommand{\E}{\mathbb E}

\newcommand{\Reg}{\operatorname{Reg}}
\newcommand{\Regsec}{\operatorname{Reg}^{\mathrm{sec}}}
\newcommand{\TV}{\operatorname{TV}}
\newcommand{\conv}{\operatorname{conv}}

\newcommand{\Sset}{\mathcal S}
\newcommand{\Aset}{\mathcal A}

\newcommand{\F}{\mathcal F} 

\newcommand{\DeltaP}{\Delta}   
\newcommand{\phys}{\mathrm{phys}}

\newcommand{\one}{\mathbf 1}

\graphicspath{{media/}}     

\title{Online Security Learning in Cooperative Multi-Agent Systems under Hidden Byzantine Attacks}

\author{
  Ximing Sun, Yue Wang \\
  Department of Electrical and Computer Engineering \\
  University of Central Florida 
}

\begin{document}
\maketitle

\begin{abstract}
We study online cooperative control of a multi-agent system under Byzantine attacks. Namely, an unknown, fixed subset of agents are Byzantine comprised and can stealthily overwrite its own coordinates of the team's planned joint action after observing that plan. The learner observes planned actions, public rewards, and public states, but neither the overwrite nor the executed joint action. Our objective is security: to optimize the team performance against the worst overwrites and achieve the optimal security value. We first show that the attacker's information determines the geometry.  An attacker that observes the planned action induces an exact $(s,a)$-rectangular robust Markov decision process (MDP) whose rows are convex hulls of overwrite-induced public-outcome laws, whereas a blind attacker induces an $s$-rectangular model.  We then identify the information-theoretic limit of security learning, showing that the security regret decomposes exactly into return regret against the response generating the data and a cumulative response gap $D_K$.  Two indistinguishable horizon-one instances force $\Omega(K)$ expected security regret while return regret is zero, showing that dependence on $D_K$ is unavoidable. Finally, we develop a stage-tied robust estimation-to-decisions learner and prove a regret bound of $\widetilde{\mathcal O}\!\left(H^2S\sqrt{AK}\right)+\mathbb E[D_K]$. Our studies thus provide comprehensive theoretical and algorithmic foundations of reliable multi-agent systems under Byzantine attacks. 
\end{abstract}


\section{Introduction}

Cooperative multi-agent systems (MAS) have been studied extensively, expanding from embodied intelligence, including autonomous vehicle fleets and coordinated energy systems \cite{yadav2023comprehensive,hua2025multi, park2022coordinated, yun2022cooperative,shin2019cooperative}, to agentic systems built from foundation models \cite{wu2023autogen, li2023camel, li2026robust}.  Across these domains, the team's value emerges from coordination: heterogeneous agents contribute complementary capabilities and jointly produce outcomes that no individual agent could achieve alone.

Such coordination, however, rests on every teammate executing the action assigned to it, and this premise can fail at deployment: an agent may malfunction, be silently compromised, for example through prompt injection \cite{greshake2023not}, or deliberately deviate from the joint plan.  These failures are Byzantine in the classical sense \cite{lamport2019byzantine}: the affected teammate does not stop but continues to act while appearing functional.  Because its actions directly move the shared dynamical state, standard Byzantine defenses that filter corrupted messages are insufficient; the damage is inflicted by the executed action itself, and no amount of screening undoes it \cite{karimireddy2021learning}.  The threat is not hypothetical: adversarial policies reliably exploit learned agents \cite{gleave2019adversarial}, and a single compromised teammate can severely degrade state-of-the-art cooperative policies \cite{lin2020robustness, li2025attacking, li2026empirical}.  Reliable deployment therefore requires robustness to Byzantine \emph{action overwrites}, the threat model of this paper. 

To address these issues, we propose to study a finite-horizon cooperative team MDP in which an unknown, fixed coalition $B^\star$ observes the team's planned joint action and may replace its own action coordinates before execution. The learner sees the state, planned action, reward, and next state, but not $B^\star$, the overwrite, or the executed action.  We named this a \emph{Byzantine team MDP}. To simplify the problem setting, we consider centralized policy in this work, and aim to develop both theoretical and algorithmic foundations of online learning of such a robust policy under the Byzantine attacked system.



Our contributions can be summarized as answers to three central questions.

\textbf{How does attack information change the decision problem?} We prove an exact public-law reduction.  When the coalition observes the planned action, each state-action row is the convex hull of the laws induced by feasible overwrites, producing an $(s,a)$-rectangular robust MDP \cite{iyengar2005robust}.  When the coalition is blind to the current plan, the coupled choice produces an $s$-rectangular model.  In the insider model, deterministic nonstationary Markov team policies and deterministic Markov worst responses suffice.

\textbf{What can public feedback certify?} We adopt as objective the \emph{security regret}, which charges each deployed policy the gap between the optimal worst-case return and that policy's own worst-case return, and decompose it exactly into the \emph{return regret}, the gap under the attack that actually generated the data, plus a nonnegative \emph{response gap} measuring how far the realized attack is from worst case.  We then construct two single-stage instances that generate identical public transcripts under one shared feasible attack, yet in at least one of them the security regret grows linearly in the number of episodes while the return regret is exactly zero.  The response-gap term is therefore information-theoretically necessary: no algorithm can certify security from public feedback alone against unrestricted realized attacks.

\textbf{What security rate is attainable?} 
We design a learner based on the robust estimation-to-decisions (E2D) framework \cite{foster2021statistical, appel2025regret},  with a stage-tied estimator: one continuation certificate and one occupancy-calibration witness per stage cut the cumulative estimation budget from $\widetilde{\mathcal O}(HS^2)$ to $\widetilde{\mathcal O}(HS)$, where $H$ is the horizon and $S$ the number of public states.  Under the framework's standard approximate decision oracle, we proved the learner attains expected return regret $\widetilde{\mathcal O}(H^2S\sqrt{AK})$ over $K$ episodes with $A$ joint actions, under no assumption on how the coalition plays, and its expected security regret exceeds this by exactly the expected response gap.  The $\sqrt S$ improvement over the direct statewise construction comes entirely from the stage-tied estimator.  The learner never identifies the compromised subset and never estimates overwrites or executed actions.


\subsection{Related Work}
\label{sec:related-work} 

\textbf{Action robustness.}
Action-robust MDPs model replacement or perturbation of a controller's actions \cite{tessler2019action}.  In cooperative MARL, different approaches for such action robustness have been developed. For example, \cite{phan2021resilient} trains against teams with changing agents; \cite{yuan2023robust} trains against a population of limited policy attackers; \cite{bukharin2023robust} uses adversarial regularization for robustness to malicious actions; and \cite{lee2025wolfpack,mcmahan2024roping} studies coordinated attacks and adversarial training, include action manipulation in a general class of online attacks and formulate optimal defense as a stochastic Stackelberg game.  These works primarily study robust training, attack-defense planning, equilibrium convergence, or sample complexity under an explicit attack model, but there is no understandings of learning under direct interactions with hidden compromised agents.

Along the research line of Byzantine robustness, the most related works are \cite{li2024byzantine,nika2024defending}. \cite{li2024byzantine} places a prior over latent Byzantine types and establish robust-equilibrium existence and asymptotic actor-critic convergence. \cite{nika2024defending} further extends the framework to general-sum Markov games and computes adversarially robust Nash equilibria for deployment-time corruption of up to a prescribed number of agents.  We assume no prior, observe neither identity nor execution, and study identifiability and finite-time security regret during online interaction.  Other Byzantine studies are mostly developed in distributed/federated learning and RL, which instead consider malicious messages or updates \cite{chen2023byzantine,xia2023byzantine,fang2025provably,qiao2024br,jiang2025byzantine,lin2022byzantine,zhu2023byzantine}; our attacker changes the physical joint action, so truthful public feedback is itself generated by a strategically selected law.

\textbf{Robust MDPs, robust RL, and corruption robustness.} Classical robust MDPs optimize over a known ambiguity set of state transition probability, with dynamic programming enabled by rectangularity \cite{iyengar2005robust,nilim2005robust,wiesemann2013robust}.  Statistical distributionally robust RL commonly assumes a prescribed ambiguity construction around a nominal kernel and data from that nominal model, and develop sample complexity analysis under different settings \cite{panaganti2022sample,yang2022toward,shi2023curious,wang2021online,lu2024distributionally,he2025sample}.
In our setting, samples instead come from selector-chosen public laws, while the overwrite-induced ambiguity is unknown, thus all these nominal-sampling guarantees cannot be applied. 

Corruption-robust RL instead posits an underlying nominal MDP and measures deviations by a corruption budget \cite{ye2023corruption,zhang2021corruption,lykouris2021corruption}.  Here overwrites may occur at every stage and are part of the security comparator itself, rather than exceptional contamination.  A corruption-budget guarantee therefore does not certify our worst-case overwrite value.

For robust online decision making under general ambiguity sets, \cite{appel2025regret} extend the DEC/E2D framework of \cite{foster2021statistical} to multivalued models in which nature selects adaptively from a convex outcome set.  Their approach results in a regret of $\widetilde{\mathcal O}\!\left(H^2S^{3/2}\sqrt{AK}\right)$ (under normalization to our settings). Our stage-tied estimator reduces the inaccuracy budget by $S$, yielding $\widetilde{\mathcal O}(H^2S\sqrt{AK})$.   

\textbf{Uninformed Markov games and adaptive opponents.} The hidden identity and overwritten actions connect our Byzantine team MDPs to uninformed Markov games, where the opponent's actions are unobservable. \cite{tian2021online} study Markov games with hidden opponent actions and obtain $\widetilde{\mathcal O}\!\left(H^2S^{1/3}A^{2/3}K^{2/3}\right)$ Nash-value regret, which is orderly worse than ours; More importantly, their benchmark evaluates the response actually played, whereas security regret evaluates a worst response to each deployed policy.  \cite{liu2026online} strengthen the former to empirical Nash-value regret and adapt between $\widetilde{\mathcal O}(\sqrt K)$ for fixed opponents and $\widetilde{\mathcal O}(K^{2/3})$ in the worst case.  Related adaptive-opponent work uses external or policy regret under different response conditions \cite{liu2022learning,nguyen2024learning}, which are not adopted in our settings. We defer the more detailed discussion and other related works to Appendix \ref{app:uninformed-games}.

\section{Byzantine Team MDPs}\label{sec:BTMDPs}
\subsection{Model and Security Benchmark}

Let $n\ge2$ be the number of agents, let $\mathcal S$ be a finite
nonempty public state space with fixed initial state $s_1$, and let
$\mathcal A_i$ be agent $i$'s finite nonempty action set.  A centralized
team policy chooses a planned joint action in $\mathcal A\coloneqq\prod_{i=1}^n\mathcal A_i.$ For each stage $h$, state $s$, and executed joint action $a$, the
physical environment has a joint public-outcome kernel $P_h^{\rm phys}(\mathrm d r,s'\mid s,a)
\in\Delta([0,1]\times\mathcal S).$ 
This formulation allows arbitrary dependence between reward and next state;
its marginals give the usual reward and transition kernels.

An unknown, episode-invariant set $B^\star\subseteq[n]$ is compromised.
After the team announces $a$, the coalition may replace its coordinates
by $u\in\mathcal A_{B^\star}\coloneqq
\prod_{j\in B^\star}\mathcal A_j$.  The executed action is then $a\oplus_{B^\star}u\coloneqq(a_{-B^\star},u).
$ 
The overwrite may be randomized and depend on the complete public history
and the announced action, but it is chosen before the current reward and
next state.  The identity $B^\star$, overwrite $u$, and executed action
are hidden.  The team observes only its planned actions, rewards, and
public states.

Suppressing the episode index, the history immediately before the
stage-$h$ planned action is
\begin{equation}
\eta_h=
(s_1,a_1,r_1,s_2,\ldots,a_{h-1},r_{h-1},s_h).
\label{eq:public-preaction-history}
\end{equation}
Let $\mathcal H_h$ be the set of histories.  A behavioral team policy
is
\begin{equation}
\varpi=(\varpi_h)_{h=1}^H,
\qquad
\varpi_h(\cdot\mid\eta_h)\in\DeltaP(\Aset),
\quad \eta_h\in\mathcal H_h.
\label{eq:general-learner-policy}
\end{equation}
Write $\Pi_{\mathrm{hist}}$ for this class.  After observing
$(\eta_h,a)$, a randomized history-dependent overwrite response uses
\begin{equation}
q_h(\cdot\mid\eta_h,a)\in\DeltaP(\Aset_{B^\star}).
\label{eq:general-overwrite-policy}
\end{equation}
Let $\Omega(B^\star)$ contain all collections of these nonanticipating
kernels. This post-action information pattern is the insider model studied in the main paper; Appendix~\ref{app:s-rec} treats a coalition that is blind to the current plan.

For $\varpi\in\Pi_{\mathrm{hist}}$, define its security value as the worst case
\begin{equation}
W(\varpi)
\coloneqq
\inf_{q\in\Omega(B^\star)}
\E^{\varpi,q}\!\left[\sum_{h=1}^H R_h\right].
\label{eq:policy-security-value}
\end{equation}

This is the return $\varpi$ guarantees against every feasible overwrite
response.  The optimal security value is
\begin{equation}
v^\star \coloneqq \sup_{\varpi\in\Pi_{\rm hist}} W(\varpi),
\qquad
\varpi^\star \in \operatorname*{arg\,max}_{\varpi\in\Pi_{\rm hist}}
W(\varpi).
\label{eq:optimal-security-value}
\end{equation}
Theorem~\ref{thm:markov-sufficiency} will prove that the supremum is attained by a deterministic nonstationary Markov policy.

\subsection{Online Security Learning}
\label{sec:online-security}

The same identity $B^\star$ and physical kernel are fixed for $K$
episodes.  In the online problem the learner uses some non-stationary, stochastic Markov policy $\pi_k\in\Pi_{\mathrm{M,stoch}}$, without changing $v^\star$ by
Theorem~\ref{thm:markov-sufficiency}.  Let $\mathcal F_{k-1}$ contain the
public transcript and learner randomness before episode $k$.  A feasible
response then generates the public trajectory.  Our primary objective is the security regret: 
\begin{equation}
\Regsec_K
\coloneqq
\sum_{k=1}^K\bigl\{v^\star-W(\pi_k)\bigr\}.
\label{eq:security-regret}
\end{equation}
If $\Regsec_K=o(K)$, the average worst-case suboptimality of deployed
policies vanishes.

For the learning proof, it is useful to define factual returns under the
episode-$k$ response rule.  Formally, $q_k$ is a predictable feasible
nonanticipating response kernel.  For any queried Markov policy $\pi$, let
$q_k[\pi]$ denote this same response rule paired with $\pi$, not a rule
that observes the policy itself.  We assume directly that $m_k(\pi)$ below
and the trajectory expectations used in the learning proof are Borel in
$\pi$, so they can be integrated against a predictable policy law.  Define
\begin{equation} 
m_k(\pi)
\coloneqq
\mathbb E^{\pi,q_k[\pi]}\!\left[
\sum_{h=1}^H R_{k,h}\,\middle|\,\mathcal F_{k-1}\right],
\label{eq:actual-return}
\end{equation}
with $m_k\coloneqq m_k(\pi_k).$
This permits the responder to adapt to every realized planned action and the
allowed public history, but not to an unobserved policy description.  The
corresponding return regret is only an analytical intermediate:
\begin{equation}
\Reg_K\coloneqq\sum_{k=1}^K(v^\star-m_k).
\label{eq:return-regret}
\end{equation}
It need not be nonnegative when the realized response is weaker than a
worst response.

We can directly show the following connection.
\begin{proposition}[Security decomposition]
\label{prop:security-decomposition}
Define $D_K\coloneqq\sum_{k=1}^K\{m_k-W(\pi_k)\}.$ 
Then $D_K\ge0$ and, pathwise,
\begin{equation}
\Regsec_K=\Reg_K+D_K.
\label{eq:regret-decomposition}
\end{equation}
If the episode-$k$ response is $\epsilon_k$-worst, meaning that a nonnegative $\sigma(\mathcal F_{k-1},\pi_k)$-measurable tolerance satisfies
$m_k\le W(\pi_k)+\epsilon_k$, then
$D_K\le\sum_{k=1}^K\epsilon_k$. If the attacker uses the exact worst responses against $\pi_k$, then $D_K=0$.
\end{proposition}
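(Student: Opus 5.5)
The argument is mostly bookkeeping; the one substantive point is showing that each term of $D_K$ is nonnegative.

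\textbf{Identity.} Write, for each episode $k$,
\[
v^\star-W(\pi_k)=(v^\star-m_k)+(m_k-W(\pi_k)),
\]
and sum over $k=1,\dots,K$. This gives $\Regsec_K=\Reg_K+D_K$ pathwise. The only care needed is that every quantity is finite: rewards lie in $[0,1]$, so $m_k$, $W(\pi_k)$ and $v^\star$ all lie in $[0,H]$, and the algebra raises no $\infty-\infty$ issues.

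\textbf{Nonnegativity of $D_K$.} It suffices to show $m_k\ge W(\pi_k)$ for each $k$, conditionally on $\mathcal F_{k-1}$ and on the realized $\pi_k$. The step to verify is that the realized response rule $q_k[\pi_k]$, with $\mathcal F_{k-1}$ and $\pi_k$ frozen, is an element of $\Omega(B^\star)$. That is, it must be a collection of nonanticipating kernels $q_h(\cdot\mid\eta_h,a)\in\Delta(\Aset_{B^\star})$ depending only on the within-episode public history and the announced action.

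This holds because $q_k$ is assumed predictable, feasible and nonanticipating. Its dependence on earlier episodes enters only through $\mathcal F_{k-1}$, which is fixed when we condition. By the convention stated before \eqref{eq:actual-return}, it does not observe the policy description.

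Therefore $m_k(\pi_k)=\E^{\pi_k,q_k[\pi_k]}[\sum_h R_{k,h}\mid\mathcal F_{k-1}]$ is the return of $\pi_k$ against one particular element of $\Omega(B^\star)$. It is consequently at least the infimum $W(\pi_k)$ in \eqref{eq:policy-security-value}. Summing gives $D_K\ge0$.

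\textbf{Expected obstacle.} The main difficulty is precisely this identification of the conditional response with a member of $\Omega(B^\star)$. If the responder used learner randomness inside episode $k$ beyond what is revealed through planned actions, one would need to average over it. Even then, a mixture of feasible kernels can be realized as a randomized kernel in $\Omega(B^\star)$, since randomized overwrites are permitted, so the inequality survives.

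\textbf{Consequences.} If $m_k\le W(\pi_k)+\epsilon_k$ for every $k$, summing gives $D_K\le\sum_k\epsilon_k$; the measurability of $\epsilon_k$ is needed only to make the statement pathwise meaningful. If the attacker plays exact worst responses, we may take $\epsilon_k=0$. Then $0\le D_K\le0$, so $D_K=0$ and $\Regsec_K=\Reg_K$.
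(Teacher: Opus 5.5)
Your proposal is correct and follows the paper's own argument: feasibility of the realized response gives $m_k\ge W(\pi_k)$ in each episode, and the identity and the $\epsilon_k$ bound then follow by adding and subtracting $m_k$ and summing. Your explicit check that $q_k[\pi_k]$, with $\mathcal F_{k-1}$ and $\pi_k$ fixed, is an element of $\Omega(B^\star)$ is the step the paper states only as ``the realized response is feasible.''
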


\begin{proof}
The realized response is feasible, so
$m_k\ge W(\pi_k)$ for every episode.  Thus $D_K\ge0$; expanding the
definitions proves the identity and the approximation claim.
\end{proof}

The response gap is not observed by or supplied to the learner.  It records
the counterfactual security information missing from the realized
trajectory.  Section~\ref{sec:hardness} proves that it can be linear and
unidentifiable under unrestricted responses; Section~\ref{sec:learning}
controls security regret by bounding return regret and adding exactly this
unavoidable gap.

\section{Structure of Byzantine Team MDPs}
\label{sec:geometry}
This section develops the structure of Byzantine team MDPs.  The
information available to the attacker decides the geometry: an insider that
observes the planned action induces an $(s,a)$-rectangular robust MDP
\cite{iyengar2005robust}, while a blind attacker induces an $s$-rectangular
one.  Building on the reduction, we then identify sufficient policy classes
for both the team and the attacker.

\subsection{Exact rectangular public-law reduction}
\label{sec:exact-rectangularity}

Let $\mathcal Y\coloneqq[0,1]\times\Sset$ be the one-step public-outcome
space (reward and next state).  For the fixed identity $B^\star$, define
the public ambiguity row
\begin{equation}
\Gamma^{B^\star}_{h,s,a}
\coloneqq
\conv\!\left\{
P_h^{\phys}(\cdot\mid s,a\oplus_{B^\star}u):
u\in\Aset_{B^\star}
\right\},
\label{eq:fixed-identity-row}
\end{equation}
where $P_h^{\phys}$ is the joint law of $R_h$ and $S_{h+1}$.
Thus $\Gamma^{B^\star}_{h,s,a}$ is a nonempty compact convex subset of
$\Delta(\mathcal Y)$.

The family is state-action rectangular: selecting a row at one
$(h,s,a)$ constrains no other row.  The fixed identity determines which
coordinates are overwritable everywhere, but the absence of an overwrite
budget or commitment lets the coalition select an overwrite distribution
independently at every public history and planned action.  Appendix
\ref{sec:robust-mdps} recalls the robust-MDP definitions.

The following result shows that this rectangular robust-MDP representation is not merely a relaxation: it exactly reproduces the conditional public-outcome laws generated by the operational Byzantine interaction. 

\begin{proposition}[Exact public-law reduction] 
\label{prop:fixed-identity-reduction}
Fix $B^\star\subseteq[n]$.  The conditional public-outcome laws attainable by randomized, history-dependent, nonanticipating overwrite strategies with identity $B^\star$ are exactly the nonanticipating selectors of the rectangular row family $\Gamma^{B^\star}=(\Gamma^{B^\star}_{h,s,a})_{h,s,a}$.  Moreover, for every bounded measurable continuation payoff $f:\mathcal Y\to\mathbb R$, it holds that
\begin{equation}
\inf_{\nu\in\Gamma^{B^\star}_{h,s,a}}\E_\nu [f]
=
\min_{u\in\Aset_{B^\star}}
\E_{P_h^{\phys}(\cdot\mid s,a\oplus_{B^\star}u)}[f].
\label{eq:convex-row-minimum}
\end{equation}
Consequently, if $\{V^\pi_h\}$ is the robust value function of any Markov policy $\pi$ under the $(s,a)$-rectangular robust MDP with uncertainty set $\{\Gamma^{B^\star}\}$, then $W(\pi)=V_1^\pi(s_1)$, even though the infimum in \eqref{eq:policy-security-value} ranges over randomized full-history overwrite strategies.
\end{proposition}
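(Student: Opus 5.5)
The proof has three parts, taken in order: the exact selector correspondence, the row-wise minimization identity \eqref{eq:convex-row-minimum}, and the equality $W(\pi)=V_1^\pi(s_1)$.

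\textbf{Step 1 (selector correspondence).} Fix a public history $\eta_h$ ending in state $s$ and a planned action $a$. A randomized overwrite kernel $q_h(\cdot\mid\eta_h,a)\in\DeltaP(\Aset_{B^\star})$ is chosen before the current outcome is drawn. Conditioning on $(\eta_h,a)$ and integrating out the hidden overwrite, the conditional law of $(R_h,S_{h+1})$ is
\[
\nu_h(\cdot\mid\eta_h,a)=\sum_{u\in\Aset_{B^\star}} q_h(u\mid\eta_h,a)\,P_h^{\phys}(\cdot\mid s,a\oplus_{B^\star}u).
\]
This uses only that the physical kernel is Markov in the executed action and that $u$ is conditionally independent of the fresh outcome given $(\eta_h,a)$. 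Since $\Aset_{B^\star}$ is finite, this is a convex combination of the generating laws, so $\nu_h\in\Gamma^{B^\star}_{h,s,a}$.

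Conversely, take any nonanticipating selector $\nu_h(\cdot\mid\eta_h,a)\in\Gamma^{B^\star}_{h,s,a}$. The convex hull of finitely many points consists exactly of their finite convex combinations; no closure is needed, because a finite hull is already compact. Hence one can write $\nu_h=\sum_u \lambda_u P_h^{\phys}(\cdot\mid s,a\oplus u)$ and set $q_h(u\mid\eta_h,a)=\lambda_u$. Measurability in $\eta_h$ needs some care. I would handle it with a measurable selection of the weights, for instance by taking the lexicographically minimal solution of the linear feasibility problem, or by noting that the weight set is a measurable closed-valued correspondence and applying Kuratowski--Ryll-Nardzewski. Finally, both the realized public law and the interaction law are built from these one-step conditionals by Ionescu-Tulcea, so they coincide on public trajectories.

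\textbf{Step 2 (row minimum).} The map $\nu\mapsto\E_\nu[f]$ is linear. A linear functional on the convex hull of finitely many points attains its infimum at one of the generating points. This gives \eqref{eq:convex-row-minimum}, with the minimum attained by a deterministic $u$.

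\textbf{Step 3 ($W(\pi)=V_1^\pi(s_1)$).} By Step 1, $W(\pi)$ equals the infimum over all history-dependent nonanticipating selectors of $\Gamma^{B^\star}$ of the expected return. I would prove two inequalities.

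For $W(\pi)\le V_1^\pi(s_1)$, define the Markov deterministic overwrite $u_h(s,a)$ as an argmin in \eqref{eq:convex-row-minimum}, with $f(r,s')=r+V_{h+1}^\pi(s')$. It is feasible, and a standard policy-evaluation recursion shows that its return equals $V_1^\pi(s_1)$.

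For $W(\pi)\ge V_1^\pi(s_1)$, fix any history-dependent selector and show by backward induction on $h$ that
\[
\E\Bigl[\textstyle\sum_{t\ge h}R_t\,\Big|\,\eta_h\Bigr]\ge V_h^\pi(s_h).
\]
The induction step conditions on $(\eta_h,a)$, applies the induction hypothesis inside, and then uses that the selected row is an element of $\Gamma^{B^\star}_{h,s,a}$, so its expectation of $r+V^\pi_{h+1}(s')$ is at least the infimum over the row. Averaging over $a\sim\pi_h(\cdot\mid s_h)$ gives exactly the robust Bellman recursion.

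The main obstacle is Step 1. The delicate point is not the algebra but checking that history-dependent randomization (including randomization correlated across stages via public history) yields public laws exactly equal to selector-generated laws. The measurable-selection argument for the converse needs the same care.
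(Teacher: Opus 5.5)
Your proposal is correct and follows the paper's proof essentially step for step. Both use the mixture formula for the forward inclusion, surjectivity of the weight-to-row map with a measurable choice of weights for the converse, linear minimization at a generating law, and a backward induction in which the lower bound holds against every full-history response and is attained by the deterministic Markov argmin overwrite. The differences are cosmetic: the paper runs the Bellman induction directly on overwrite kernels through the mixture formula rather than on selectors, using continuation laws defined at every history so that null histories are covered, and it obtains measurable weights from a Borel right inverse of the continuous surjection $T_{h,s,a}$ between compact metric spaces rather than from a lexicographic or Kuratowski--Ryll-Nardzewski selection.
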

A history-dependent attacker may select different elements of
$\Gamma^{B^\star}_{h,s,a}$ after different public histories ending in the
same $(h,s,a)$; it is therefore formally more adaptive than a selector
that fixes one Markov kernel per row before the episode.  The fixed
identity, however, links none of these selections: the reduction is a
product over public histories, with no cross-stage budget, commitment, or
other coupling.  We will further show that this additional
adaptivity does not lower the worst-case value of a fixed Markov learner policy.

Recall from \eqref{eq:public-preaction-history} that $\eta_h$ denotes the public history immediately before the stage-$h$ planned action.  After the learner announces $a$, the coalition observes $(\eta_h,a)$.   By \eqref{eq:general-overwrite-policy}, a randomized behavioral overwrite strategy may use an arbitrary Borel kernel $q_h(\cdot\mid\eta_h,a)$ at this history.  A nonanticipating selector of the row family in \eqref{eq:fixed-identity-row} is, similarly, a collection of conditional laws
\begin{equation}
\sigma_h(\cdot\mid\eta_h,a)
\in\Gamma^{B^\star}_{h,s_h,a}
\label{eq:public-law-selector}
\end{equation}
that is measurable in the public history and is chosen before the current
outcome.

\subsection{Sufficiency of Markov policies}
\label{sec:markov-sufficiency}

Recall that $\Pi_{\mathrm{hist}}$, defined in \eqref{eq:general-learner-policy}, contains all randomized history-dependent behavioral policies.  Let $\Pi_{\mathrm{M,stoch}}$ be its subclass of randomized nonstationary Markov policies, and let $\Pi_{\mathrm{M,det}}$ be the subclass whose action distributions are point masses.  Episode-level mixtures and other uses of private randomness are included in $\Pi_{\mathrm{hist}}$ through their conditional behavioral action kernels. We then prove the Markov sufficiency under observed planned actions as follows.

\begin{theorem}[Markov sufficiency for an insider]
\label{thm:markov-sufficiency}
For a fixed identity $B^\star$, assume that the coalition observes each
realized planned action before overwriting it and has no cross-stage overwrite
budget or other coupling constraint.  Then:

(1). for every $\pi\in\Pi_{\mathrm{M,stoch}}$, the operational
  worst-case value is $W(\pi)=V_1^\pi(s_1)$, and the infimum over all
  randomized history-dependent overwrite strategies is attained by a deterministic Markov response;

(2).  Markov policies are sufficient for the learner:
  \begin{align}
  v^\star=
    \max_{\pi\in\Pi_{\mathrm{M,stoch}}}W(\pi)
   =\max_{\pi\in\Pi_{\mathrm{M,det}}}W(\pi)
   =V_1^\star(s_1).
  \nonumber
  \end{align}
In particular, an optimal deterministic nonstationary Markov learner policy exists.  
\end{theorem}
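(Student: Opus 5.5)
Part (1) is essentially already contained in Proposition~\ref{prop:fixed-identity-reduction}, so the plan is to make the attaining response explicit and then treat part (2) as a robust Bellman optimality argument on the $(s,a)$-rectangular model with rows $\Gamma^{B^\star}$. For a fixed $\pi\in\Pi_{\mathrm{M,stoch}}$, I will define the robust evaluation recursion backward from $V^\pi_{H+1}\equiv 0$:
\begin{equation*}
Q^\pi_h(s,a)=\min_{u\in\Aset_{B^\star}}\E_{P_h^{\phys}(\cdot\mid s,a\oplus_{B^\star}u)}\bigl[R+V^\pi_{h+1}(S')\bigr],\qquad V^\pi_h(s)=\sum_a\pi_h(a\mid s)Q^\pi_h(s,a),
\end{equation*}
where the second equality in \eqref{eq:convex-row-minimum} converts the convex-row infimum into a minimum over finitely many pure overwrites. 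I will let $u^\star_h(s,a)$ be a measurable minimizer, which exists because $\Aset_{B^\star}$ is finite. This gives a deterministic Markov response, and playing it yields return exactly $V^\pi_1(s_1)$ by backward induction.

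For the matching lower bound against arbitrary $q\in\Omega(B^\star)$, I will show by backward induction on $h$ that for every public history $\eta_h$ ending in $s$,
\[
\E^{\pi,q}\Bigl[\textstyle\sum_{t\ge h}R_t\Bigm|\eta_h\Bigr]\ge V^\pi_h(s).
\]
The induction step conditions on the planned action $a$. The history-dependent kernel $q_h(\cdot\mid\eta_h,a)$ induces a public law lying in $\Gamma^{B^\star}_{h,s,a}$, by the reduction. The continuation is bounded below by $V^\pi_{h+1}$ by the induction hypothesis, since $\pi$ is Markov and its future actions depend only on the state. The minimum in \eqref{eq:convex-row-minimum} therefore lower-bounds the stage term. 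Combining the two bounds gives $W(\pi)=V^\pi_1(s_1)$, with the infimum attained, which proves (1).

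For (2), I will define the robust optimal recursion $V^\star_h(s)=\max_a Q^\star_h(s,a)$, with $Q^\star_h$ given as above but continuing with $V^\star_{h+1}$. I will take $\pi^\star$ to be greedy and deterministic. Part (1) then gives $W(\pi^\star)=V^\star_1(s_1)$, which yields the chain $V^\star_1(s_1)\le \max_{\Pi_{\mathrm{M,det}}}W\le\max_{\Pi_{\mathrm{M,stoch}}}W\le v^\star$.

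The main obstacle is the reverse inequality, $W(\varpi)\le V^\star_1(s_1)$ for every history-dependent randomized $\varpi\in\Pi_{\mathrm{hist}}$; here part (1) does not apply directly, since $\varpi$ is not Markov. My plan is to exhibit a single attacker that does not depend on $\varpi$: the deterministic Markov response $u^\star_h(s,a)$ attaining the minimum in $Q^\star_h$. Against this response I will show by backward induction that
\[
\E^{\varpi,u^\star}\Bigl[\textstyle\sum_{t\ge h}R_t\Bigm|\eta_h\Bigr]\le V^\star_h(s_h)
\]
for every history $\eta_h$. The key observation is that under $u^\star$ the public outcome law at $(h,s,a)$ depends only on $(s,a)$, not on the earlier history. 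Thus for any action distribution $\varpi_h(\cdot\mid\eta_h)$, the conditional continuation value is at most $\sum_a\varpi_h(a\mid\eta_h)Q^\star_h(s,a)\le V^\star_h(s)$, using the induction hypothesis to bound the continuation from each next history. The care needed is in the measurability of conditional expectations over the continuous reward component: I would handle this by working with the joint outcome kernel on $\mathcal Y$ and the Ionescu--Tulcea construction, so that conditioning on $\eta_{h+1}$ is well defined. The bound then gives $W(\varpi)\le V^\star_1(s_1)$, so $v^\star=V^\star_1(s_1)$, which is attained by $\pi^\star$.
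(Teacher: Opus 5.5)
Your proposal is correct and follows essentially the same route as the paper. Part (1) rests on the backward-induction lower bound against arbitrary full-history overwrites, plus the pointwise deterministic Markov minimizer from the Bellman portion of the proof of Proposition~\ref{prop:fixed-identity-reduction}. Part (2) uses a greedy deterministic policy attaining $V_1^\star(s_1)$, together with the single universal Markov attacker $u^\star_h(s,a)$ that holds every $\varpi\in\Pi_{\mathrm{hist}}$ below $V^\star_h$ after every history. The only cosmetic difference is that the paper uses continuation expectations $\E^{\pi,q}_{\eta_h}$, defined at every history including null ones, in place of conditional expectations; this addresses the measurability point you flag.
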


\begin{remark} 
\label{rem:scope-markov-sufficiency}
Our result makes two carefully limited assertions.  First, for every fixed behavioral Markov learner policy, there exists a deterministic nonstationary Markov worst-case response, where the response may depend on the current stage, state, and realized planned action.  Second, a deterministic nonstationary Markov learner policy attains the optimal max-min value. For a fixed arbitrary history-dependent learner policy, however, an exact best response may itself need to depend on the history: two public histories can end in the same current state and planned action while inducing different future learner behavior.  Moreover, the responder that generates the online data may remain randomized and history-dependent.

Thus, for the purpose of computing the optimal max-min value, it is without loss of generality to restrict both players to deterministic nonstationary Markov policies. This is a value-level sufficiency statement and does not imply that every history-dependent learner policy or every data-generating response admits a value-equivalent Markov representation.
\end{remark}

If instead the coalition chooses its overwrite before observing the planned
action, the induced robust MDP is $s$-rectangular and randomized Markov
team policies are sufficient; randomization can be essential.  Thus the
attacker's information, not corruption alone, determines rectangularity.
Appendix~\ref{app:s-rec} proves this blind-attacker result.

\section{What Public Feedback Cannot Certify}
\label{sec:hardness}

The rectangular reduction identifies the correct robust decision problem,
but it does not reveal its rows.  More strongly, arbitrary realized
responses may conceal precisely the counterfactual information needed for
security.

\subsection{Why Rectangularity Is Not a Learning Oracle}

Given the reduction of Section~\ref{sec:exact-rectangularity}, one might hope to apply an existing robust RL algorithm directly.  Two gaps prevent this.  First, the learner knows neither $B^\star$ nor the physical public-outcome kernel, hence not the uncertainty set $\Gamma^{B^\star}_{h,s,a}$ that robust dynamic programming takes as given
\cite{iyengar2005robust,nilim2005robust,wiesemann2013robust}, and the observations do not
reveal it: the learner sees the planned action and the public outcome, but not the overwrite or the executed action, so conditional on a reached
post-action history the single observed outcome is drawn from an
attacker-selected point of an unknown convex row.  The physical kernel and
the response are confounded in every sample.
 
Second, the learning literature on robust MDPs predominantly studies
uncertainty sets of a prescribed distributional form, such as a
total-variation, $\chi^2$, KL, or contamination neighborhood around a
nominal kernel, and assumes samples from that nominal kernel through a
generative model, an offline data set, or a distinguished training
environment \cite{wang2021online, shi2023curious, lu2024distributionally,
he2025sample}; the known ambiguity-set construction then turns an estimate
of the nominal kernel into an estimate of the robust model.  Our feedback
has no nominal center: the data-generating law may be any point of the
row, and $\Gamma^{B^\star}_{h,s,a}$, a convex hull of overwrite-induced
kernels, need not equal a neighborhood of any prescribed form.

\subsection{Linear Regret under Unrestricted Responses}

Here \emph{unrestricted} means that the realized responder may use any
feasible randomized, history-dependent, nonanticipating overwrite under the
one fixed coalition.  It does not mean that the identity may switch.
Crucially, the response generating the observations need not be worst-case,
approximately worst-case, or informative about counterfactual overwrites.

\begin{theorem}[Linear security regret under unrestricted feasible responses]
\label{thm:security-impossibility} 
Fix any, possibly randomized, online algorithm and any $K\ge1$.  There
exist two horizon-one, single-state Byzantine team MDPs, indexed by
$\theta\in\{0,1\}$, with two binary-action agents and the same fixed,
even known, identity $B^\star=\{2\}$, together with a single
deterministic realized overwrite strategy feasible in both, such that the
public transcripts generated under the two instances are identical,
$\operatorname{Reg}_K=0$ pathwise in both, and $\max_{\theta\in\{0,1\}}
\mathbb E_\theta\bigl[\operatorname{Reg}^{\rm sec}_K\bigr]
=\max_{\theta\in\{0,1\}}\mathbb E_\theta[D_K]\ \ge\ K/2,$ 
where $\mathbb E_\theta$ denotes expectation over the algorithm's
randomness in instance $\theta$.  Consequently, no algorithm guarantees
$o(K)$ security regret uniformly over feasible realized responses from
public-trajectory feedback alone.
\end{theorem}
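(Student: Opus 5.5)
We will build the two instances explicitly and then run a two-point (Le Cam-style) argument. The key point is that the public transcript is literally the same random object under both instances, so no second-moment bound is needed.

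\textbf{Construction.} Take one state, $H=1$, agents with $\Aset_1=\Aset_2=\{0,1\}$, and $B^\star=\{2\}$. The only learner-relevant choice is the planned action $a_1\in\{0,1\}$, since agent $2$'s planned coordinate is always overwritten. Rewards are deterministic, $R=r_\theta(a_1,a_2)$ with executed $a_2$. The realized overwrite is the deterministic strategy $u\equiv 0$, which is feasible in both instances because the identity and action sets coincide.

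Set $r_\theta(a_1,0)=1$ for all $a_1,\theta$, so every transcript consists of planned actions and reward $1$; it is identical across $\theta$ given the algorithm's randomness. For the counterfactual column, set $r_\theta(a_1,1)=\one\{a_1=\theta\}$. By \eqref{eq:convex-row-minimum}, or directly for $H=1$, $W(a_1)=\min_u r_\theta(a_1,u)=\one\{a_1=\theta\}$, and a randomized $\pi$ has $W(\pi)=\pi(\theta)$ because the insider minimizes after seeing $a_1$. Theorem~\ref{thm:markov-sufficiency} then gives $v^\star=1$, attained by $a_1=\theta$.

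\textbf{Regret accounting.} Under the realized response, $m_k=1=v^\star$ for every $k$ and every policy, so $\Reg_K=0$ pathwise in both instances. By Proposition~\ref{prop:security-decomposition}, $\Regsec_K=D_K=\sum_k(1-\pi_k(\theta))$.

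Because the transcript $\mathcal F_{k-1}$ and the algorithm's randomness have the same joint law under $\theta=0$ and $\theta=1$, the policy sequence $(\pi_k)$ has the same law in both instances. Hence
\[
\E_0[D_K]+\E_1[D_K]=\sum_k\E\bigl[2-\pi_k(0)-\pi_k(1)\bigr]=K,
\]
using $\pi_k(0)+\pi_k(1)=1$. So $\max_\theta\E_\theta[D_K]\ge K/2$, and the same bound holds for $\Regsec_K$ by the identity $\Regsec_K=D_K$.

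\textbf{Main obstacle.} The delicate step is justifying that the policy laws coincide across instances. I would argue by induction on $k$ that the joint law of (learner randomness, transcript up to episode $k$) is the same under both $\theta$. This holds because each episode's observation $(s_1,a_1,R=1)$ is a fixed function of the learner's draw, independent of $\theta$. The induction needs the realized responder to be the same map in both instances, which is where the shared deterministic $u\equiv0$ matters.

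I would also check the measurability and feasibility conventions of Section~\ref{sec:online-security}: $q_k$ is predictable and nonanticipating, and $m_k$ is Borel in $\pi$. With $u\equiv0$ both are trivial.
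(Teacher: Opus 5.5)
Your proposal is correct and follows the paper's proof essentially step for step: you use the same reward $r_\theta(x,u)=\one\{u=0 \text{ or } x=\theta\}$ and the same realized overwrite $u\equiv 0$. You also obtain the same security value $W_\theta(\pi)$, namely the first-coordinate marginal at $\theta$, and the same sum $\E_0[D_K]+\E_1[D_K]=K$ with $\Regsec_K=D_K$. The only cosmetic difference is that the paper couples the two experiments through a common algorithm seed to get the pathwise identity $\Regsec_{K,0}+\Regsec_{K,1}=K$, whereas you argue equality of the policy laws by induction; either suffices.
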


Therefore, $D_K$ in our positive result is information-theoretically necessary, not a proof artifact.  The theorem does not say that security learning is impossible whenever the coalition has unrestricted overwrite power.  Instead, arbitrary \emph{data-generating} responses may conceal the worst counterfactual row.  Exact or approximate worst responses, for example, may remove this obstruction, as we shown below. 


\section{Security Learning through Response Gaps}
\label{sec:learning}

This section develops the learning method in two passes.  We first explain what the algorithm does in one episode and identify the role of each object. We then define the statistical estimator and decision rule precisely and derive the regret guarantee.  The central separation is between the \emph{security objective}, which evaluates every deployed policy against its own worst feasible response, and the \emph{realized-response feedback} available to the learner.



Recall the pathwise response-gap identity $\Regsec_K=\Reg_K+D_K$.   The response gap \(D_K\) measures the missing counterfactual information: the difference between the return under the response that actually generated episode \(k\) and the worst return that could have been generated against the same deployed policy.   Theorem~\ref{thm:security-impossibility} shows that \(D_K\) cannot be controlled uniformly from public feedback when the realized responses are arbitrarily uninformative. We will then develop our regret bound in terms of $D_k$.

By Theorem~\ref{thm:markov-sufficiency}, restricting the learner to randomized nonstationary Markov policies does not change \(v^\star\).  We therefore use $\Pi
\coloneqq
\prod_{h=1}^{H}\prod_{s\in\mathcal S}\Delta(\mathcal A)
=\Pi_{\mathrm{M,stoch}}.$

\textbf{Method.}
Our algorithm is designed based on an imported decision module, with our novel estimator design.  The robust estimation-to-decisions (E2D) reduction of \cite{appel2025regret}, building on \cite{foster2021statistical}, converts prediction and optimism bounds into robust return regret.  Our new \emph{stage-tied market} supplies the required prediction bound with one continuation certificate and one calibration witness per layer, instead of per predecessor state, which will be shown efficient. 
In each episode, the learner constructs a policy-indexed predictor, passes it to the E2D module, privately samples and deploys a policy from the returned distribution, and then privately codes the observed rewards to update the market.  The final response-gap identity converts the resulting return guarantee into a security guarantee.

\subsection{Decision-Relevant Representation}
\label{sec:learning-target}

The learner does not estimate the Byzantine identity, overwrite, executed action, or physical kernel.  Instead, it predicts finite public-outcome laws and evaluates only the Bellman inequalities relevant to robust decisions.

\textbf{Private original-scale code.}
Adapting the Bernoulli conversion in \cite{appel2025regret}, after episode \(k\) the learner privately draws ($O_{k,h}\coloneqq(Y_{k,h},S_{k,h+1})\in\mathcal O\coloneqq\{0,1\}\times\mathcal S$)
\begin{equation} 
Y_{k,h}\mid R_{k,h}
\sim\operatorname{Bernoulli}(R_{k,h}/H),
\label{eq:learning-conversion}
\end{equation}
The statistical score is \(HY_{k,h}\), so
\(\mathbb E[HY_{k,h}\mid R_{k,h}]=R_{k,h}\). This step is mainly for technical convenience.  The deferred learner-private sampling and its validity against full-history responses are proved in
Lemmas~\ref{lem:app-deferred-coding} and~\ref{lem:app-conversion}.

\textbf{Stage-tied Bellman certificates.}
For \(v\in[0,H]^{\mathcal S}\) and \(c\in[0,H]\), let $\Psi_H(v,c)
\coloneqq
\left\{\nu\in\Delta(\mathcal O):
\mathbb E_\nu[HY+v(S')]\ge c\right\}$ be a Bellman hyperplane. 

We use set-valued candidates
\begin{equation}
N_h(s,a)=
\begin{cases}
\Psi_H\bigl(v_{h+1},c_h(s)\bigr),&a=a_h(s),\\[2pt]
\Delta(\mathcal O),&a\ne a_h(s),
\end{cases}
\label{eq:learning-stage-class}
\end{equation}
where the recommendation \(a_h(s)\) and threshold \(c_h(s)\) may depend on
\(s\), while the continuation vector \(v_{h+1}\) is shared by every
predecessor state in layer \(h\).  Let
\(\mathfrak H_{\rm st}^{(H)}\) be the \(H\)-bounded members of this class:
each admits a feasible Markov row selector whose conditional task-scale
suffix return is at most \(H\) from every starting triple.  For
\(N\in\mathfrak H_{\rm st}^{(H)}\), write \(v_N^\star\in[0,H]\) for its
optimal robust task-scale value.


The following original-scale and private-coding interface is proved in Appendix~\ref{app:learning-original-scale}.

\begin{proposition}
\label{prop:learning-surrogate}
Let \(V_h^\star\) and \(a_h^\star(s)\) be the optimal robust values and a
Bellman-optimal action for the fixed unknown Byzantine identity.  Taking
\(
a_h=a_h^\star,
v_{h+1}=V_{h+1}^\star,
c_h(s)=V_h^\star(s)
\)
defines an analysis-only candidate
\(N^\circ\in\mathfrak H_{\rm st}^{(H)}\) that contains every feasible coded
row at its recommended actions and satisfies
\(v_{N^\circ}^\star=v^\star\).
\end{proposition}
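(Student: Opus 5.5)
The plan is to verify the three claims about $N^\circ$ directly from the robust Bellman equations of Theorem~\ref{thm:markov-sufficiency} and Proposition~\ref{prop:fixed-identity-reduction}, working throughout with the coded rows. The coded row of a physical law $\nu\in\Gamma^{B^\star}_{h,s,a}$ is its image $\tilde\nu\in\Delta(\mathcal O)$ under $(r,s')\mapsto(\mathrm{Bernoulli}(r/H),s')$. Because this map is affine in $\nu$, the coded row family is again the convex hull of the coded laws induced by the overwrites $u\in\Aset_{B^\star}$. It also satisfies
\[
\E_{\tilde\nu}[HY+v(S')]=\E_\nu[R+v(S')]
\]
for every $v$.

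\textbf{Containment.} The robust Bellman optimality equation under the $(s,a)$-rectangular model gives
\[
V_h^\star(s)=\max_a\inf_{\nu\in\Gamma^{B^\star}_{h,s,a}}\E_\nu[R+V_{h+1}^\star(S')],
\]
and $a_h^\star(s)$ attains the maximum. This uses \eqref{eq:convex-row-minimum} together with part (2) of Theorem~\ref{thm:markov-sufficiency}, which identifies $V_1^\star(s_1)=v^\star$. Hence every feasible row at the recommended action satisfies
\[
\E_\nu[R+V_{h+1}^\star(S')]\ge V_h^\star(s),
\]
so its coded version lies in $\Psi_H(V_{h+1}^\star,V_h^\star(s))$. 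At non-recommended actions the candidate is all of $\Delta(\mathcal O)$, so containment there is trivial. The parameter ranges also hold: $v_{h+1}=V^\star_{h+1}\in[0,H]^{\mathcal S}$ and $c_h(s)=V_h^\star(s)\in[0,H]$, because rewards lie in $[0,1]$ and there are $H$ stages.

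\textbf{$H$-boundedness.} I need a feasible Markov row selector of $N^\circ$ whose suffix task-scale return is at most $H$ from every triple $(h,s,a)$. I will take the deterministic Markov worst response from Theorem~\ref{thm:markov-sufficiency}(1) and code it. Containment shows it is a selector of $N^\circ$. Its suffix return is a sum of at most $H$ coded scores whose expectations are rewards in $[0,1]$, so the return is at most $H$.

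\textbf{Value identity.} This is the main step. The value $v^\star_{N^\circ}$ is the optimal robust value of the set-valued model, in which nature picks the worst row in $N_h(s,a)$. For the upper bound $v^\star_{N^\circ}\le v^\star$, I argue by backward induction that the robust value of $N^\circ$ at layer $h$ is at most $V_h^\star$. At a non-recommended action nature may choose any law in $\Delta(\mathcal O)$, for instance a point mass with $Y=0$ leading to a next state minimizing the continuation value. That action is then worth at most the minimal continuation, which by the $H$-boundedness convention is at most $V_h^\star(s)$; this is the step I expect to need care. At the recommended action nature may pick a hyperplane point with $\E[HY+V_{h+1}^\star(S')]=c_h(s)=V_h^\star(s)$, for example the coded worst feasible row, which attains equality. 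Chaining this with the induction hypothesis on the continuation gives the bound.

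For the lower bound $v^\star_{N^\circ}\ge v^\star$, the learner plays $a_h^\star$. Every row in the hyperplane then guarantees one-step-plus-continuation at least $V_h^\star(s)$ when measured with $V^\star_{h+1}$. Backward induction shows the robust value of this policy is at least $V_h^\star$, since the continuation under the policy dominates $V^\star_{h+1}$ by the induction hypothesis. Combining the two bounds at $h=1$, $s=s_1$ gives $v^\star_{N^\circ}=V_1^\star(s_1)=v^\star$. The main obstacle is making the upper bound rigorous when nature is restricted to $H$-bounded selectors. I would handle it by exhibiting one explicit admissible selector, namely the coded worst response at recommended actions and degenerate minimizing laws elsewhere, and evaluating it directly.
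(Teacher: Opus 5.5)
Your route differs from the paper's. The paper (Lemma~\ref{lem:app-surrogate-applicability}) does not verify the value identity by hand. It checks that the coded model, with $Y$ as unit reward, satisfies the hypotheses of the surrogate construction in the proof of Appel--Kosoy's Corollary~3: finite spaces, rectangular compact convex rows, and $1$-boundedness, because every coded row has $\E[Y]\le1/H$. It then reads off row containment and value equality in the unit coordinates $V_h^\star/H$, rescales by $H$, and checks that their terminal-state coarsening is harmless because $V_{H+1}^\star\equiv0$. That route mainly certifies that $N^\circ$ sits inside the imported framework later used for the DEC/E2D theorems. Your direct backward induction is self-contained and needs no terminal adapter. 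Your containment and $H$-boundedness arguments are correct. Indeed any coded physical Markov selector works, since $\E[HY]=\E[R]\le1$ at every step, exactly as in the paper.

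The step you flagged, however, is justified incorrectly. Write $W_h$ for the robust value function of $N^\circ$. A non-recommended action is worth $\min_{s'}W_{h+1}(s')$, and you need this to be at most $V_h^\star(s)$. $H$-boundedness cannot give this: it only asserts that some selector has suffix return at most $H$, i.e.\ $v_N^\star\le H$, and it compares nothing with $V_h^\star(s)$. The correct reason is nonnegativity of rewards. By the induction hypothesis and the Bellman equation at $a_h^\star(s)$,
\[
\min_{s'}W_{h+1}(s')\le\min_{s'}V_{h+1}^\star(s')
\le\min_{u\in\Aset_{B^\star}}\E_{P_h^{\phys}(\cdot\mid s,a_h^\star(s)\oplus_{B^\star}u)}\bigl[R+V_{h+1}^\star(S')\bigr]=V_h^\star(s),
\]
using $R\ge0$ in the middle step.

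Also, the ``main obstacle'' you anticipate is not there. $v_N^\star$ is the ordinary robust value computed by the recursion $\max_a\min_{\nu\in N_h(s,a)}\E_\nu[HY+V_{h+1}(S')]$ (see the proof of Lemma~\ref{lem:app-class-regularity}), with nature unrestricted. $H$-boundedness is a membership condition on $N$, not a constraint on nature's selectors. With the displayed inequality, that recursion gives $W_h=V_h^\star$ at every layer in a single induction, which merges your upper and lower bounds. Your explicit-selector fallback also works, but it rests on the same inequality.
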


\textbf{Prediction loss.}
For a predicted row \(\rho\in\Delta(\mathcal O)\) and a compact convex row
\(C\subseteq\Delta(\mathcal O)\), use the directed squared-Hellinger loss $d_C(\rho)
\coloneqq
\min_{q\in C}
\left(1-\sum_{o\in\mathcal O}\sqrt{\rho(o)q(o)}\right).$
A policy-indexed predictor is a map
\(\widehat M:\pi\mapsto M_\pi\), where
\(M_{\pi,h}(s,a)\in\Delta(\mathcal O)\).  Its trajectory loss against
candidate \(N\) is then defined as 
\begin{equation}
L(\widehat M,N,\pi)
\coloneqq
\mathbb E_{M_\pi,\pi}\!\left[
\sum_{h=1}^H
d_{N_h(S_h,A_h)}
\bigl(M_{\pi,h}(S_h,A_h)\bigr)
\right].\nonumber
\end{equation}
This is the loss interface used by the E2D module.  Predictor continuity,
policy coherence, and all measurability details are verified in
Appendix~\ref{app:learning-dec}.

\subsection{Stage-Tied E2D}
\label{sec:learning-algorithm}

Fix \(K\ge2\), set \(\delta=K^{-2}\), and define $\Lambda_K(x)
\coloneqq
\log\!\left(\frac{128H^2A(K+1)^2}{x^2}\right),$ $\beta\coloneqq144HS\Lambda_K(\delta),$ $\varepsilon\coloneqq\sqrt{\beta/K}.$

\textbf{Imported decision module.}
Given earlier predictor--policy pairs, E2D \cite{appel2025regret} retains the localized class
\begin{equation}
\mathcal H_k
\coloneqq
\left\{N\in\mathfrak H_{\rm st}^{(H)}:
\sum_{i=1}^{k-1}
\mathbb E_{\pi\sim p_i}L(\widehat M_i,N,\pi)
\le\beta
\right\}.
\label{eq:learning-localized-class}
\end{equation}

For a queried policy \(\pi\), define its predicted task-scale return as $\widehat v_k(\pi)
\coloneqq
H\,
\mathbb E_{M_{k,\pi},\pi}
\left[\sum_{h=1}^{H}Y_h\right].$ For \(p\in\Delta(\Pi)\), define
\begin{equation}
\begin{split}
Q_k(p)
\coloneqq
\sup_{\substack{
\mu\in\Delta_{\le1}(\mathcal H_k):\\
\mathbb E_{N\sim\mu,\,\pi\sim p}
L(\widehat M_k,N,\pi)\le\varepsilon^2
}}
\mathbb E_{N\sim\mu,\,\pi\sim p}
\left[v_N^\star-\widehat v_k(\pi)\right].
\end{split}
\label{eq:learning-fuzzy-objective}
\end{equation}
Here \(\Delta_{\le1}(\mathcal H_k)\) denotes Borel subprobability
measures, integrated without renormalization.

It is shown in Theorem~4 of \cite{appel2025regret} that the corresponding
unit-scale DEC has coefficient
\(2\sqrt{2(HSA+1)}\).  Our original-scale and terminal-outcome adapters in
Lemma~\ref{lem:app-dec-interface} implies
\begin{equation}
\inf_{p\in\Delta(\Pi)}Q_k(p)
\le
2H\sqrt{2(HSA+1)}\,\varepsilon.
\label{eq:learning-dec-bound}
\end{equation}
Together with the E2D master theorem of
\cite[Theorem~1]{appel2025regret}, this gives the following black-box
implication.  If a fixed value-preserving candidate has cumulative
estimation loss at most \(\beta\) and cumulative optimism at most \(\alpha\),
each with failure probability at most \(\delta\), then an
additive-\(\theta\) decision oracle satisfies
\[
\mathbb E[\Reg_K]
\le
4H\sqrt{2(HSA+1)K\beta}
+\alpha+2K\theta+2HK\delta.
\]
The numerical unit-scale DEC and exact-oracle E2D theorem are imported.
The original task scale, complete terminal outcome, predictable approximate
oracle, and displayed \(2K\theta\) term are established in
Appendices~\ref{app:learning-dec}--\ref{app:learning-e2d}.

Here \(\mathcal F_{k-1}\) denotes the learner's information before episode
\(k\), including its private variables; the precise learner and responder
filtrations are given in Lemma~\ref{lem:app-deferred-coding}.

We further adopt the planning oracle as follows. 
\begin{assumption}[Approximate E2D decision oracle]
\label{ass:decision-oracle}
Given \(\widehat M_k\), \(\mathcal H_k\), and \(\varepsilon\), there exists some oracle that returns an \(\mathcal F_{k-1}\)-measurable
\(p_k\in\Delta(\Pi)\) satisfying
\begin{equation}
Q_k(p_k)
\le
\inf_{p\in\Delta(\Pi)}Q_k(p)+\theta_K,
\qquad
\theta_K\coloneqq K^{-2}.
\label{eq:learning-oracle}
\end{equation}
\end{assumption}
Exact DEC minimization is the oracle in
\cite[Algorithm~1]{appel2025regret}; inexact E2D minimization is standard
\cite[Remark~4.1]{foster2021statistical}.  

\subsubsection{Our new component: the stage-tied market}
\label{sec:learning-estimator}

The statewise market of \cite[Theorem~5]{appel2025regret} provides the
general prediction architecture.  It combines directed-Hellinger fragment
bets, calibration transport, full-support reference bets,
return-overprediction control, backward row minimization, and multiplicative
wealth updates.  As derived in
Appendix~\ref{app:learning-market}, directly adapting its statewise
certificate indexing gives an estimation budget
\(\widetilde{\mathcal O}(HS^2)\).

Our modification preserves the shared continuation vector already present
in the value-preserving surrogate and makes it the statistical unit.  A
layer certificate \(g=(h,a,\bar v,\bar c)\) uses one continuation vector
\(\bar v\) for all predecessor states.  For a point-valued model \(M\), define
\[
x_g^{M,\pi}(s)
\coloneqq
\pi_h(a(s)\mid s)\,
d_{\Psi_H(H\bar v,H\bar c(s))}
\bigl(M_h(s,a(s))\bigr).
\]
Instead of maintaining a separate calibration process for every predecessor
state, in episode \(k\) our market sets
\(M=M_{k,\pi_k}\) and compares the predicted layer average
\(\mathbb E_{M,\pi_k}[x_g^{M,\pi_k}(S_h)]\) with the single factual score
\(x_g^{M,\pi_k}(S_{k,h})\).  Thus the certificate entropy and calibration
charge are paid once per layer.  The exact bettors, posterior update,
supernormalization, and rounding argument are given in
Appendices~\ref{app:learning-market}--\ref{app:learning-inaccuracy}.

\begin{theorem}[Stage-tied estimation]
\label{thm:stage-tied-estimation}
Fix \(\delta\in(0,1)\).  For every fixed
\(N^\circ\in\mathfrak H_{\rm st}^{(H)}\), every randomized,
history-dependent, nonanticipating public-law selector feasible for
\(N^\circ\), and every predictable sequence \((p_k)_{k=1}^K\), the
stage-tied market constructs admissible predictors \(\widehat M_k\) such
that, w.p. \(\geq 1-\delta\),
\begin{equation}
\operatorname{Est}_K
\coloneqq
\sum_{k=1}^{K}
\mathbb E_{\pi\sim p_k}
L(\widehat M_k,N^\circ,\pi)
\le
144HS\Lambda_K(\delta).
\label{eq:learning-estimation-bound}
\end{equation}
\end{theorem}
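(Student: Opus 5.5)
The plan is to follow the betting-market (wealth) argument behind \cite[Theorem~5]{appel2025regret}, changing only how certificates are indexed, and to show that each layer pays its certificate-entropy and calibration cost once rather than once per predecessor state. Concretely, I would first discretize the stage-tied class. A layer certificate $g=(h,a,\bar v,\bar c)$ consists of a recommendation map $a:\Sset\to\Aset$, a single continuation vector $\bar v\in[0,1]^{\Sset}$ shared by every predecessor state, and thresholds $\bar c\in[0,1]^{\Sset}$. Rounding $\bar v,\bar c$ to a grid of mesh of order $1/(HK)$ gives a finite family $\mathcal G_h$ with $\log|\mathcal G_h|=\mathcal O(S\log A+S\log(HK))$. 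This is $\mathcal O(S)$ per layer rather than $\mathcal O(S^2)$: in the statewise construction each predecessor state carries its own $S$-dimensional continuation vector, whereas here one vector is reused. The rounding step must only enlarge the hyperplane sets $\Psi_H$, so that the rounded certificate still contains $N^\circ$ at its recommended rows, and must cost an additive $\mathcal O(1/K)$ in directed Hellinger loss. I would verify this using Lipschitz continuity of $c\mapsto d_{\Psi_H(v,c)}(\rho)$, which comes from a mixing argument toward a point of the halfspace.

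Second, the market. For each $g$ I would define a fragment bettor whose per-episode payoff is a Hellinger-type log-likelihood ratio between the predicted row $M_{k,\pi_k,h}(S_{k,h},a(S_{k,h}))$ and its projection onto $\Psi_H$. The bettor is weighted by $\pi_h(a(s)\mid s)$ and mixed with a full-support reference bet, so that its wealth is a nonnegative supermartingale under every feasible coded row. The key point is that the learner-private Bernoulli code (Lemmas~\ref{lem:app-deferred-coding} and~\ref{lem:app-conversion}) makes the factual row $O_{k,h}$ lie in the coded version of $\Gamma^{B^\star}$ conditionally on $(\eta_h,a)$, even against history-dependent selectors. By Proposition~\ref{prop:learning-surrogate}, this row lies in $N^\circ_h$. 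Ville's inequality applied to the prior-weighted aggregate wealth over $\bigcup_h\mathcal G_h$ then bounds $\sum_k x_g^{M_k,\pi_k}(S_{k,h})$ by $\log(1/\delta)+\log|\mathcal G_h|$, up to a constant factor, simultaneously for all $g$. The predictor $\widehat M_k$ is taken as the wealth-weighted posterior point. It is computed by backward row minimization layer by layer, which is where the shared $\bar v$ makes the per-layer minimization consistent, and supernormalization guarantees that the posterior is admissible.

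Third, calibration, which is the step I expect to be the main obstacle. The loss $L(\widehat M_k,N^\circ,\pi)$ is an expectation under the \emph{predicted} occupancy $M_{k,\pi}$, whereas the wealth bound controls factual scores at the realized $S_{k,h}$. For each layer I would introduce a single occupancy-calibration witness. This is a martingale comparing $\mathbb E_{M,\pi_k}[x_g^{M,\pi_k}(S_h)]$ with $x_g^{M,\pi_k}(S_{k,h})$, where $M=M_{k,\pi_k}$. Its deviation is absorbed by a Freedman/self-bounding step, using $x\in[0,1]$, together with the fact that the state-marginal discrepancy between predicted and factual laws at layer $h$ is itself bounded by the directed Hellinger losses at earlier layers. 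Those earlier losses are in turn charged to the market through a backward induction over $h$, which produces the constant $144$ after summing geometric factors. The difficulty is doing this with one witness per layer instead of per predecessor state without losing a factor $S$. The plan is to use that $x_g$ is already state-indexed inside a single scalar average, so one martingale per $(h,g)$ suffices and a union bound over $\mathcal G_h$ costs only $\log|\mathcal G_h|$.

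Finally, I would average over $\pi\sim p_k$, using predictability of $p_k$ and the Borel assumptions, sum the $H$ layer bounds of order $S\Lambda_K(\delta)$, and combine the failure probabilities to obtain $\mathrm{Est}_K\le144HS\Lambda_K(\delta)$ with probability at least $1-\delta$.
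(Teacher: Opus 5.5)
Your discretization and fragment steps follow the paper's design closely, but the calibration step as you describe it does not go through. You treat $\mathbb E_{M,\pi_k}[x_g^{M,\pi_k}(S_h)]-x_g^{M,\pi_k}(S_{k,h})$ as a martingale difference plus a bias, namely the gap between predicted and factual layer-$h$ occupancy. You then propose to bound that bias by directed Hellinger losses at earlier layers. No such bound exists here.

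The losses are point-to-set losses against $N^\circ$. Its rows are all of $\Delta(\mathcal O)$ at non-recommended actions and a half-space at recommended ones. So small, even zero, loss at every earlier row says nothing about how close $M_j(s,a)$ is to the actual, history-dependent row $\nu_j$. A simulation-lemma argument would need $D_{\rm H}^2(M_j,\nu_j)$, and no estimator can control that when the selector moves freely inside the rows.

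Concretely, if the market contains only fragment and reference bettors, nothing stops the predictor from placing constant predicted mass on a next state $s^\dagger$ that the selector never visits. The true certificate's threshold at $s^\dagger$ is never tested by data, so $x_{g_h^\circ}(s^\dagger)$ can stay of order one forever. That gives $P_h=\Theta(K)$, while the factual scores $F_h$ and all earlier losses stay small. Predicted-versus-factual transport is therefore not an analysis-only consequence of fragment bounds. It has to be enforced by how the predictor is built.

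The missing ingredient is to make the calibration witness a bettor inside the market whose objective enters the backward row minimization. In the paper, each $g\in\mathcal G_h$ has a calibration bettor with multiplier $1+\tfrac14\{\mathbb E_{M,\pi_k}[x_g(S_h)]-x_g(S_{k,h})\}$. At stages $j<h$ it has row objective $\tfrac14\mathbb E_\mu[U_{g,j+1}(S')]$, built from calibration potentials defined backward from layer $h$. This objective pushes predicted transitions away from states with large predicted score.

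Three ingredients then give $\mathbb E[Z_k\mid\mathcal F_{k-1}]\le1$ against every feasible selector:
\begin{itemize}
\item the telescoping of each multiplier into action and outcome increments;
\item the bet-derivative identity;
\item first-order optimality of each predicted row for the aggregate objective.
\end{itemize}
Supernormalization is thus a consequence of the row minimization; it is not what makes the predictor admissible. Full support comes from the reference bettors.

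The wealth identity and $1/(1+y/4)\le1-y/4+|y|/12$ then yield $P_h\le2F_h+6\{\log(1/\delta)+\log(1/w_{\min})+\sum_k\log Z_k\}$ separately for each layer, with no cross-layer induction. The fragment bound on $F_h$ closes the argument. It uses $\mathbb E_\nu[b^{-1}]\le1-d$ for $\nu$ in the rounded half-space; an individual fragment bettor's wealth is not a supermartingale under the actual law.

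Two smaller corrections:
\begin{itemize}
\item In the theorem, row containment is the feasibility hypothesis on the selector, not Proposition~\ref{prop:learning-surrogate}.
\item The map $c\mapsto d_{\Psi_H(v,c)}(\rho)$ is not uniformly Lipschitz. For $v\equiv0$ and thresholds near the top of the range, the change is of order $\sqrt{\varepsilon}$. You therefore need the factor-$2$ squared-triangle form of Lemma~\ref{lem:app-imported-hellinger}(iv) rather than an additive $\mathcal O(1/K)$ rounding cost.
\end{itemize}
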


This result represents our major design. Our stage-tied estimation requires only \(\widetilde{\mathcal O}(HS)\) budget, instead of the 
\(\widetilde{\mathcal O}(HS^2)\) budget of the direct statewise adaptation of \cite{appel2025regret}. This leads to the \(\sqrt S\) improvement in our regret.

\textbf{Complete procedure.} We then combine all components together, and present our algorithm in Alg \ref{alg:stage-tied-e2d}.

\begin{algorithm}[!htb]
\caption{Original-Scale Stage-Tied Robust E2D}
\label{alg:stage-tied-e2d}
\begin{algorithmic}[1]
\REQUIRE episode budget \(K\geq 2\); horizon \(H\); spaces
\(\mathcal S,\mathcal A\)
\STATE set \(\delta\gets K^{-2}\),
\(\beta\gets144HS\Lambda_K(\delta)\),
\(\varepsilon\gets\sqrt{\beta/K}\)
\STATE initialize the stage-tied market of
Appendix~\ref{app:learning-market}
\FOR{\(k=1,\ldots,K\)}
\STATE construct
\(\widehat M_k:\pi\mapsto M_{k,\pi}\) from the current market
\STATE form \(\mathcal H_k\) using
\eqref{eq:learning-localized-class}
\STATE use Assumption~\ref{ass:decision-oracle} to obtain \(p_k\)
\STATE privately sample \(\pi_k\sim p_k\), deploy it, and observe the public
trajectory
\STATE privately draw the bits in \eqref{eq:learning-conversion} and update
the market
\ENDFOR
\end{algorithmic}
\end{algorithm}

The algorithm observes only planned actions, public states, rewards, and its
private bits. The complete filtration and market update are specified in Appendices~\ref{app:learning-original-scale} and
\ref{app:learning-market}.

\begin{table*}[!h]
\centering
\caption{Closest structural baselines under exact realized worst responses ($D_K=0$).  Consequently, Tian et al.'s Nash-value regret and the Appel-Kosoy robust return regret then equal security regret; Liu et al.'s empirical Nash-value regret upper-bounds security regret, with equality only when its additional empirical-comparator gap vanishes.  }
\label{tab:closest-comparison}

\footnotesize
\renewcommand{\arraystretch}{1.08}
\begin{tabularx}{\textwidth}{
  @{}
  p{0.13\textwidth}
  p{0.28\textwidth}
  p{0.20\textwidth}
  X
  @{}
}
\toprule
Work
& Response and feedback
& Benchmark when $D_K=0$
& Guarantee / stated factors
\\
\midrule

\cite{tian2021online}
& Hidden opponent action in a Markov game
& Security regret
  $=$ Nash-value regret
& $\widetilde{\mathcal O}\!\left(
    H^2 S^{1/3} A^{2/3} K^{2/3}
  \right)$
\\

\cite{liu2026online}
& Hidden Markov opponent with variation and switch measures
& Security regret
  $\le$ empirical Nash-value regret
& $\widetilde{\mathcal O}_{H,S,A}\!\left(
    \min\!\left\{
      \sqrt K+(C_{\rm var}K)^{1/3},
      \sqrt{L_{\rm sw}K}
    \right\}
  \right)$
\\

\cite{appel2025regret}
& Adaptive selector from a multivalued robust model;
  direct statewise adaptation
& Security regret
  $=$ robust return regret
& $\widetilde{\mathcal O}\!\left(
    H^2 S^{3/2}\sqrt{AK}
  \right)$
\\

This work
& Hidden identity and executed action;  full-history responses allowed
& Security regret
& $    \widetilde{\mathcal O}\!\left(
     H^2 S\sqrt{AK}
   \right)$
\\

\bottomrule
\end{tabularx}
\end{table*}

\subsection{Security-Regret Guarantee}
Proposition~\ref{prop:learning-surrogate} supplies
the fixed analysis witness \(N^\circ\).  Our stage-tied theorem gives
\(\beta=\widetilde{\mathcal O}(HS)\).  The return-overprediction bettor,
adapted from \cite[Theorems~2 and~5]{appel2025regret} and analyzed on our
task scale in Lemma~\ref{lem:app-optimism}, gives $\alpha
\le
8H^2\left\{
\sqrt{K\log(32/\delta)}
+\log(32/\delta)
\right\}.$
The imported E2D black box therefore yields return regret 
\begin{align}
    &\widetilde{\mathcal O}\!\left(H^2S\sqrt{AK}\right)\\
    &=\widetilde{\mathcal O}\!\left(
\underbrace{H}_{\text{task scale}}
\sqrt{
\underbrace{HSA}_{\text{DEC dimension}}\,
\underbrace{HS}_{\text{our estimation budget}}\,
K}
\right).\nonumber
\end{align}

Finally, our exact identity
\(\Regsec_K=\Reg_K+D_K\) converts this intermediate control into the desired
security guarantee. Specifically, denote $\mathcal C_K
\coloneqq
114H^2S\sqrt{AK\Lambda_K(K^{-2})}$, and we have the following regret bound.

\begin{theorem} 
\label{thm:stage-tied-regret}
Suppose the response rule induces Borel trajectory expectations as a function of the queried policy, as in Section~\ref{sec:online-security}.  Run Algorithm~\ref{alg:stage-tied-e2d}; for \(K\ge2\), suppose Assumption~\ref{ass:decision-oracle} holds.  Then
\begin{equation}
\mathbb E[\Regsec_K]
\le
\min\!\left\{HK,\,
\mathcal C_K+\mathbb E[D_K]\right\}.
\label{eq:learning-security-regret}
\end{equation}
If the episode-\(k\) response is ex-post \(\epsilon_k\)-worst for the
deployed policy, meaning
\(m_k-W(\pi_k)\le\epsilon_k\) almost surely for a nonnegative
\(\mathcal F_{k-1}\vee\sigma(\pi_k)\)-measurable tolerance, then
\begin{equation}
\mathbb E[\Regsec_K]
\le
\min\!\left\{HK,\,
\mathcal C_K+\sum_{k=1}^K\mathbb E[\epsilon_k]\right\}.
\label{eq:learning-approx-security-regret}
\end{equation}
Particularly, exact ex-post worst responses give \(D_K=0\) and
\(\mathbb E[\Regsec_K]\le\mathcal C_K\).  The factual-return
guarantee is
\begin{equation}
\mathbb E[\Reg_K]
\le
\min\!\left\{HK,\,\mathcal C_K\right\}.
\label{eq:learning-return-regret}
\end{equation}
\end{theorem}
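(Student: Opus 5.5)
The proof plugs the three analytic ingredients (estimation budget $\beta$, optimism budget $\alpha$, oracle slack $\theta$) into the imported E2D black-box bound. It then converts the resulting return-regret bound into a security-regret bound via Proposition~\ref{prop:security-decomposition}.

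\textbf{Step 1: fix the witness and verify the hypotheses.} Take $N^\circ$ from Proposition~\ref{prop:learning-surrogate}. It lies in $\mathfrak H^{(H)}_{\rm st}$, it contains every feasible coded row at its recommended actions, and it satisfies $v^\star_{N^\circ}=v^\star$. By Proposition~\ref{prop:fixed-identity-reduction}, the realized full-history response is a nonanticipating selector of $\Gamma^{B^\star}$. After the private coding of Lemmas~\ref{lem:app-deferred-coding} and~\ref{lem:app-conversion}, it is therefore a public-law selector feasible for $N^\circ$. Theorem~\ref{thm:stage-tied-estimation}, applied with $\delta=K^{-2}$, gives cumulative estimation loss at most $\beta=144HS\Lambda_K(\delta)$ with probability at least $1-\delta$. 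This is exactly the localization radius, so $N^\circ\in\mathcal H_k$ for all $k$ on that event. Lemma~\ref{lem:app-optimism} gives the optimism bound $\alpha\le 8H^2\{\sqrt{K\log(32/\delta)}+\log(32/\delta)\}$. Assumption~\ref{ass:decision-oracle} supplies $\theta=K^{-2}$. The Borel hypothesis on $\pi\mapsto m_k(\pi)$ makes the integrals against $p_k$ well defined.

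\textbf{Step 2: apply the black box and simplify constants.} The displayed implication gives
\[
\E[\Reg_K]\le 4H\sqrt{2(HSA+1)K\beta}+\alpha+2K\cdot K^{-2}+2HK\cdot K^{-2}.
\]
With $\beta=144HS\Lambda_K$, the first term is $4\cdot 12\,H\sqrt{2(HSA+1)HS\,K\Lambda_K}$. Using $HSA+1\le 2HSA$, this is at most $96H^2S\sqrt{AK\Lambda_K}$.

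The remaining terms must fit in a slack of $18H^2S\sqrt{AK\Lambda_K}$. The term $\alpha$ is $O(H^2\sqrt{K\log K})$. Since $\Lambda_K(K^{-2})\ge\log(128K^4\cdot 4)$ dominates $\log(32K^2)$, and $S,A\ge1$, we get $8H^2(\sqrt{K\log(32K^2)}+\log(32K^2))\le 16H^2\sqrt{K\Lambda_K}$ once $\log(32K^2)\le\sqrt{K\Lambda_K}$. This should hold for $K\ge2$ because $\Lambda_K$ is large. The leftover terms $2/K+2H/K\le 1+H$ fit in the final $2H^2\sqrt{AK\Lambda_K}$. Altogether this yields $\E[\Reg_K]\le\mathcal C_K$.

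The trivial bound $HK$ comes from $v^\star,m_k\in[0,H]$, so each summand of $\Reg_K$ is at most $H$. For the security regret, the same reasoning applies with $W(\pi_k)\ge0$.

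\textbf{Step 3: convert to security regret.} Take expectations in the pathwise identity $\Regsec_K=\Reg_K+D_K$ from Proposition~\ref{prop:security-decomposition}. This gives~\eqref{eq:learning-security-regret}, combined with the trivial $HK$ bound. For the $\epsilon_k$-worst case, Proposition~\ref{prop:security-decomposition} gives $D_K\le\sum_k\epsilon_k$ almost surely. The measurability of $\epsilon_k$ with respect to $\mathcal F_{k-1}\vee\sigma(\pi_k)$ only ensures that the expectation is well defined. Exact worst responses give $D_K=0$.

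\textbf{Main obstacle.} The substantive issue is not the arithmetic but confirming that the imported master theorem applies with a data-generating responder that is history-dependent and not a fixed kernel. One needs $m_k(\pi)$, defined through $q_k[\pi]$, to coincide with the factual return that the coded observations predict, conditionally on $\mathcal F_{k-1}$. It must also be dominated by the witness's optimistic value $v^\star_{N^\circ}=v^\star$ in the way E2D requires. I would handle this by citing the appendix adapters (Lemma~\ref{lem:app-dec-interface} and the E2D lemmas), and by checking that the failure event of probability $2\delta$ costs at most $2HK\delta$, which is already included in the bound.
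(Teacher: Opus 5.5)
Your plan matches the paper's proof. It uses the same witness $N^\circ$, the same plug-in of $\beta=144HS\Lambda_K$, $\alpha$, and $\theta=\delta=K^{-2}$ into the approximate-oracle E2D inequality, and the same $96+16+2=114$ bookkeeping. It also uses the pathwise cap $HK$ and the conversion through $\Regsec_K=\Reg_K+D_K$.

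\textbf{The gap: absorbing $\alpha$.} Write $\Lambda=\Lambda_K(K^{-2})$ and $L\coloneqq\log(32K^2)$. Your chain needs $L\le\sqrt{K\Lambda}$, which you assert ``should hold for $K\ge2$ because $\Lambda_K$ is large.'' This is false at the boundary. For $K=2$, $H=1$, $A=1$ one has $L=\log 128\approx4.85$, while $\Lambda=\log 18432\approx9.82$ and $\sqrt{K\Lambda}\approx4.43$. The same failure occurs at $K=2$ whenever $H^2A\le 7$. Your target inequality $\alpha\le 16H^2\sqrt{K\Lambda}$ happens to hold numerically there, but your argument does not establish it.

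\textbf{The paper's fix.} The paper closes this with a case split.
\begin{itemize}
\item If $\Lambda\ge K$, then $K\le\sqrt{K\Lambda}$, and the pathwise cap already gives $\E[\Reg_K]\le HK\le H^2S\sqrt{AK\Lambda}\le\mathcal C_K$.
\item If $\Lambda<K$, then $L\le\Lambda\le\sqrt{K\Lambda}$, and your chain is valid.
\end{itemize}
This split also delivers $\E[\Reg_K]\le\mathcal C_K$ for every $K\ge2$. That is exactly the bound to which you then add $\E[D_K]$.

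\textbf{A smaller conceptual point.} In your ``main obstacle'' paragraph you say $m_k(\pi)$ must be dominated by $v^\star_{N^\circ}=v^\star$. E2D does not need this, and it can fail for benign responses; that is why $\Reg_K$ may be negative. What is used is the split
\[
v^\star-m_k(\pi)=\{v^\star_{N^\circ}-\widehat v_k(\pi)\}+\{\widehat v_k(\pi)-m_k(\pi)\}.
\]
After averaging over $\pi\sim p_k$, the first term is controlled by $N^\circ\in\mathcal H_k$ together with the oracle inequality. The second term is controlled by the optimism bound. On the failure event, only the crude bound $v^\star-m_k(\pi)\le H$ is needed.
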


The detailed substitution is in
Appendix~\ref{app:learning-final-proof}.  The E2D reduction controls
\eqref{eq:learning-return-regret}; the passage to
\eqref{eq:learning-security-regret} is our problem-specific response-gap
conversion.

 \subsection{Comparison with the Closest Guarantees}
\label{sec:learning-comparison}

We then compare our results with the most related works, which are summarized in Table~\ref{tab:closest-comparison}. It specializes every row to the case in which the response realized against each deployed policy is an exact ex-post worst
response.  Then \(D_K=0\), and the displayed return-based bounds can be read
as security-regret guarantees.  

\textbf{Appel--Kosoy.}
\cite[Corollary~3]{appel2025regret} give
\(\widetilde{\mathcal O}(HS^{3/2}\sqrt{AK})\) for 1-bounded tabular
robust MDPs.  Their condition asks for a selector whose conditional suffix
reward is at most one from every starting triple; it does not normalize every
realized episode.  Applying their estimator to our private Bernoulli
certificate and scaling only the task value by \(H\) gives $\widetilde{\mathcal O}(H^2S^{3/2}\sqrt{AK}).$ The stage-tied estimator replaces the \(\widetilde{\mathcal O}(HS^2)\) inaccuracy budget of this direct statewise specialization by \(\widetilde{\mathcal O}(HS)\), improving the leading state dependence by \(\sqrt S\).  Like Appel and Kosoy, we do not establish polynomial-time DEC minimization; Assumption~\ref{ass:decision-oracle} states the measurable \(K^{-2}\)-approximate oracle needed here, and Appendix~\ref{app:learning-e2d} records its additive cost.

\textbf{Uninformed Markov games.}
The Tian and Liu rows use the same turn-based augmentation and require
episodewise Markov post-action responses to import their finite-state rates.
Appendix~\ref{app:uninformed-games} proves the exact value and regret
identities, translates the game dimensions, and explains when either
benchmark yields a security guarantee.

\section{Conclusion}

We introduced the Byzantine team MDP, in which an unknown, fixed subset of
a cooperative team may overwrite its planned action coordinates after
seeing the joint plan.  The attacker's information determines robust
geometry: an insider induces an exact $(s,a)$-rectangular public-law model, while a blind attacker induces an $s$-rectangular one.  Our results also identify the fundamental limitation of learning security from public feedback.  The observed trajectory reveals the return under the realized response, but not how close that response is to the worst feasible one.  This unavoidable ambiguity is captured by the response gap $D_K$. Under a standard approximate E2D decision-oracle condition, our stage-tied learner attains $E\!\left[\Reg_K^{\rm sec}\right]
    \le
    \widetilde{\mathcal O}\!\left(H^2S\sqrt{AK}\right)
    + \E[D_K]$.  The bound separates the statistical cost of
learning from the information-theoretic cost of certifying security.
These findings clarify both when security can be learned and what public feedback alone cannot certify. We leave the problem of settling the minimax optimal regret bound as future work.

\bibliography{references}
\bibliographystyle{unsrt}


\appendix
\onecolumn

\section{Preliminaries of Robust MDPs}\label{sec:robust-mdps}

Throughout this appendix, fix a finite state space $\mathcal S$, a
finite action space $\mathcal A$, a horizon $H$, a fixed initial
state $s_1$, and the one-step public-outcome space
$\mathcal Y\coloneqq[0,1]\times\mathcal S$, whose elements are pairs
$(r,s')$ of a reward and a next state.
 
\textbf{Row families and selectors.}
An $(s,a)$-rectangular row family is a collection
$\Gamma=(\Gamma_{h,s,a})_{h\in[H],s\in\mathcal S,a\in\mathcal A}$ in
which each row $\Gamma_{h,s,a}$ is a nonempty compact convex subset of
$\Delta(\mathcal Y)$, and the choice of an element in one row places no
constraint on the choice in any other row.  Nature interacts with a team
policy through selectors.  A \emph{nonanticipating selector} is a
collection $\sigma=(\sigma_h)_{h=1}^H$ of Borel kernels
$\sigma_h(\cdot\mid\eta_h,a)\in\Gamma_{h,s_h,a}$, one conditional law
per public history $\eta_h$ (ending in state $s_h$) and realized
planned action $a$, chosen before the current outcome.  Write
$\Sigma(\Gamma)$ for the class of all such selectors.  The
\emph{selector value} of a team policy $\varpi$ is
\[
W_\Gamma(\varpi)\coloneqq
\inf_{\sigma\in\Sigma(\Gamma)}
\mathbb E^{\varpi,\sigma}\Bigl[\sum_{h=1}^{H}R_h\Bigr],
\]
the worst case over all history-dependent, within-row selections.  Note
that the selector observes the realized planned action before choosing
its row element; this post-action timing is what produces the per-action
inner infimum in the recursions below.
 
\textbf{Robust value of a Markov policy.}
For a nonstationary Markov policy $\pi=(\pi_h)_{h=1}^H$ with
$\pi_h(\cdot\mid s)\in\Delta(\mathcal A)$, the robust value function is
defined by the backward recursion
\begin{equation}
\begin{aligned}
V^\pi_{H+1}(s)&=0,\\
V^\pi_h(s)&=\sum_{a\in\mathcal A}\pi_h(a\mid s)
\inf_{\nu\in\Gamma_{h,s,a}}
\mathbb E_\nu\bigl[R+V^\pi_{h+1}(S')\bigr].
\end{aligned}
\label{eq:policy-bellman}
\end{equation}
Each inner infimum is attained: the row is compact and
$\nu\mapsto\mathbb E_\nu[R+V^\pi_{h+1}(S')]$ is linear and continuous.
When the row is a polytope, a minimizer can be chosen among its vertices.
 
The next lemma is the standard dynamic-programming identity for
rectangular families \cite{iyengar2005robust, nilim2005robust}, stated
in the selector form used in this paper.
 
\begin{lemma}[Rectangular dynamic programming]
\label{lem:app-rect-dp}
For every $(s,a)$-rectangular compact row family $\Gamma$ and every
nonstationary Markov policy $\pi$,
\[
W_\Gamma(\pi)=V^\pi_1(s_1).
\]
Moreover, the infimum in $W_\Gamma(\pi)$ is attained by the stagewise
selector that plays, at every $(h,s,a)$ and every history, one fixed
minimizer $\nu^\star_{h,s,a}$ of
$\nu\mapsto\mathbb E_\nu[R+V^\pi_{h+1}(S')]$; when the rows are
polytopes, each $\nu^\star_{h,s,a}$ can be taken to be a vertex.  In
particular, history-dependent selection does not lower the value of a
Markov policy below its stagewise worst case.
\end{lemma}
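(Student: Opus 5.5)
We prove the two inequalities $W_\Gamma(\pi)\ge V^\pi_1(s_1)$ and $W_\Gamma(\pi)\le V^\pi_1(s_1)$. The upper bound comes from the stagewise selector $\sigma^\star$ in the statement. The lower bound is a backward-induction argument over public histories: it shows that no history-dependent selector can push the conditional expected suffix return below $V^\pi_h(s_h)$.

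\textbf{Lower bound.} Fix an arbitrary $\sigma\in\Sigma(\Gamma)$. For each $h$ and each history $\eta_h$ ending in $s_h$, define the conditional suffix return
\[
G_h(\eta_h)\coloneqq\mathbb E^{\pi,\sigma}\Bigl[\textstyle\sum_{t=h}^H R_t\,\Big|\,\eta_h\Bigr].
\]
This is well defined by the Ionescu--Tulcea construction, since $\pi$ and $\sigma$ are Borel kernels. We show by backward induction on $h$ that $G_h(\eta_h)\ge V^\pi_h(s_h)$ for every $\eta_h$. The case $h=H+1$ is trivial. For the inductive step, condition on the planned action $a\sim\pi_h(\cdot\mid s_h)$ and then on the outcome $(r,s')\sim\sigma_h(\cdot\mid\eta_h,a)$. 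This gives
\[
G_h(\eta_h)=\sum_a\pi_h(a\mid s_h)\,\mathbb E_{(r,s')\sim\sigma_h(\cdot\mid\eta_h,a)}\bigl[r+G_{h+1}(\eta_h,a,r,s')\bigr].
\]
The induction hypothesis bounds $G_{h+1}(\eta_h,a,r,s')\ge V^\pi_{h+1}(s')$ for every continuation. We need this bound pointwise, which is why the hypothesis is stated for all histories and not just almost surely. Since $\sigma_h(\cdot\mid\eta_h,a)\in\Gamma_{h,s_h,a}$, the inner expectation is at least $\inf_{\nu\in\Gamma_{h,s_h,a}}\mathbb E_\nu[R+V^\pi_{h+1}(S')]$. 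Applying \eqref{eq:policy-bellman} then closes the induction. Rectangularity enters here: the choice $\sigma_h(\cdot\mid\eta_h,a)$ is unconstrained by choices at other histories, and feasibility is checked row by row, so the per-row infimum is a valid lower bound. Because the Markov policy's future behaviour depends only on $s'$, $V^\pi_{h+1}$ is the correct comparison function regardless of history.

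\textbf{Upper bound and attainment.} Each row is compact and the map $\nu\mapsto\mathbb E_\nu[R+V^\pi_{h+1}(S')]$ is continuous and linear, since the integrand is bounded and measurable on $\mathcal Y$ and the row is compact in the topology for which this map is continuous. So a minimizer $\nu^\star_{h,s,a}$ exists. When the row is a polytope, the minimum of a linear functional is attained at a vertex, by the standard extreme-point argument. Let $\sigma^\star$ be the selector that plays $\nu^\star_{h,s_h,a}$ at every history ending in $(s_h,a)$. It is trivially Borel, since $\mathcal S,\mathcal A$ are finite, and it is nonanticipating. Rerunning the same induction with $\sigma^\star$ turns every inequality into an equality, so $G_h=V^\pi_h(s_h)$. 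Hence $W_\Gamma(\pi)\le V_1^\pi(s_1)$ with the infimum attained. The final "in particular" statement follows directly.

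The main obstacle is measure-theoretic: making the conditional decomposition of $G_h$ rigorous when rewards lie in $[0,1]$ and selectors are general Borel kernels on histories. I would handle this by defining $G_h$ directly as the integral of the suffix under the kernel composition started at $\eta_h$, rather than as a conditional expectation. The recursion identity then becomes Fubini for kernels, and the inequality holds for every $\eta_h$, which avoids null-set issues.
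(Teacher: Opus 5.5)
Your proposal is correct and is essentially the paper's proof. It uses a backward induction over all public histories to show that every feasible selector's continuation return is at least $V^\pi_h(s_h)$, then a second induction to show that the history-independent selector of row minimizers attains it exactly. The only difference is cosmetic: you fix $\sigma$ and induct on its continuation return $G_h$, while the paper inducts on the infimum $W_h(\eta_h)$ over continuation selectors. Also note that rectangularity is really what makes the stitched selector $\sigma^\star$ feasible; the lower bound needs only that each selected law lies in its row.
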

 
\begin{proof}
For a public history $\eta_h$ ending in state $s$, let
$W_h(\eta_h)$ denote the infimum, over selectors restricted to stages
$t\ge h$, of $\mathbb E[\sum_{t=h}^H R_t\mid\eta_h]$.  We prove
$W_h(\eta_h)=V^\pi_h(s)$ by backward induction; the case $h=1$ is the
claim.  At $h=H+1$ both sides are zero.  Assume the identity at
$h+1$.
 
Fix any selector $\sigma$ for stages $t\ge h$.  Conditioning on the
planned action $a\sim\pi_h(\cdot\mid s)$ and on the realized outcome
$(R,S')\sim\sigma_h(\cdot\mid\eta_h,a)$, the conditional continuation
value of $\sigma$ from $\eta_{h+1}=(\eta_h,a,R,S')$ is at least
$W_{h+1}(\eta_{h+1})$, which equals $V^\pi_{h+1}(S')$ by the
induction hypothesis, for every history prefix.  Hence
\begin{align*}
 \mathbb E^{\sigma}\Bigl[\sum_{t=h}^H R_t\,\Big|\,\eta_h\Bigr] \ge\sum_{a}\pi_h(a\mid s)
\inf_{\nu\in\Gamma_{h,s,a}}
\mathbb E_\nu\bigl[R+V^\pi_{h+1}(S')\bigr]
=V^\pi_h(s),
\end{align*}
where the inequality also uses that the stage-$h$ selection at each
realized $a$ is one feasible element of $\Gamma_{h,s,a}$.  Taking the
infimum over $\sigma$ gives $W_h(\eta_h)\ge V^\pi_h(s)$.
 
Conversely, let $\sigma^\star$ play the fixed minimizer
$\nu^\star_{t,s',a'}$ at every stage $t\ge h$, state $s'$, and
action $a'$, regardless of the history; this is feasible because
rectangularity places no constraint across rows, and measurable because
it is history-independent.  A second backward induction computes its
value exactly: conditional on $\eta_h$,
\begin{align*}
\mathbb E^{\sigma^\star}\Bigl[\sum_{t=h}^H R_t\,\Big|\,\eta_h\Bigr]=\sum_a\pi_h(a\mid s)\,
\mathbb E_{\nu^\star_{h,s,a}}\bigl[R+V^\pi_{h+1}(S')\bigr] =V^\pi_h(s).
\end{align*}
Hence $W_h(\eta_h)\le V^\pi_h(s)$, which closes the induction.  The
vertex refinement follows from linearity of the objective on a polytope.
\end{proof}
 
\textbf{Operator facts.}
Define, for $V\colon\mathcal S\to\mathbb R$,
\[
(\mathcal T_h V)(s)\coloneqq
\max_{a\in\mathcal A}\,
\inf_{\nu\in\Gamma_{h,s,a}}
\mathbb E_\nu\bigl[R+V(S')\bigr].
\]
 
\begin{lemma}[Monotonicity and nonexpansiveness]
\label{lem:app-operator}
If $V\le V'$ pointwise, then
$\mathcal T_hV\le\mathcal T_hV'$ pointwise, and
$\|\mathcal T_hV-\mathcal T_hV'\|_\infty\le\|V-V'\|_\infty$.  The same
holds for the policy operator obtained by replacing the maximum with
$\sum_a\pi_h(a\mid s)$.
\end{lemma}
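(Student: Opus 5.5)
Both claims follow from order-preservation properties of the two basic operations in $\mathcal T_h$: the expectation $\nu\mapsto\mathbb E_\nu[R+V(S')]$ for a fixed row element, and then the inner infimum and the outer maximum (or $\pi$-average). I would first fix $(h,s,a)$ and $\nu\in\Gamma_{h,s,a}$. Because $\nu$ is a probability law on $\mathcal Y$, $V\le V'$ pointwise gives
\[
\mathbb E_\nu\bigl[R+V(S')\bigr]\le\mathbb E_\nu\bigl[R+V'(S')\bigr].
\]
Similarly,
\[
\bigl|\mathbb E_\nu[R+V(S')]-\mathbb E_\nu[R+V'(S')]\bigr|\le\mathbb E_\nu|V(S')-V'(S')|\le\|V-V'\|_\infty.
\]
The reward term $R$ is common to both sides and cancels in the difference. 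It is bounded, so all expectations are finite.

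Next I would pass through the inner infimum over $\Gamma_{h,s,a}$. For monotonicity, take the infimum over $\nu$ on both sides of the pointwise inequality. For nonexpansiveness, I would use the elementary fact that for real functions $f,g$ on a common index set with $|f-g|\le c$ uniformly, $|\inf f-\inf g|\le c$. This holds because $f\le g+c$ gives $\inf f\le\inf g+c$, and symmetrically. By the attainment remark after \eqref{eq:policy-bellman}, these infima are finite minima, so no extended-value issues arise.

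Then I would apply the same two facts to the outer operation. The maximum over the finite set $\mathcal A$ is monotone and satisfies $|\max f-\max g|\le\max|f-g|$. For the policy operator, the convex combination $\sum_a\pi_h(a\mid s)(\cdot)$ has nonnegative weights summing to one. It is therefore monotone, and it maps a uniform bound $c$ on differences to the same bound $c$.

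Finally, I take the supremum over $s\in\mathcal S$ to obtain the $\|\cdot\|_\infty$ statement. There is no real obstacle here. The only point needing care is that the inner step uses an infimum over possibly infinitely many laws, so I would state the inf-difference inequality explicitly rather than treat it as a finite minimum. An alternative route derives nonexpansiveness from monotonicity plus translation equivariance, $\mathcal T_h(V+c\mathbf 1)=\mathcal T_hV+c$, which holds because each $\nu$ has total mass one. I would mention this route but use the direct argument above.
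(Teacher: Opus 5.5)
Your proposal is correct and follows the paper's argument: bound the fixed-$\nu$ expectations pointwise, then observe that the infimum over the row, the finite maximum, and convex combinations all preserve pointwise order and uniform distance. Your explicit inequality for differences of infima and your absolute-value bound, which holds without assuming $V\le V'$, are only a more detailed rendering of that same step.
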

 
\begin{proof}
For every fixed $\nu$,
$\mathbb E_\nu[R+V(S')]\le\mathbb E_\nu[R+V'(S')]\le
\mathbb E_\nu[R+V(S')]+\|V-V'\|_\infty$.  Infima, maxima, and convex
combinations of maps preserve both pointwise ordering and uniform
distance.
\end{proof}
 
\textbf{Optimal value and greedy optimality.}
The optimal robust value function is defined by
\begin{equation}
\begin{aligned}
V^\star_{H+1}(s)&=0,\\
V^\star_h(s)&=\max_{a\in\mathcal A}\,
\inf_{\nu\in\Gamma_{h,s,a}}
\mathbb E_\nu\bigl[R+V^\star_{h+1}(S')\bigr].
\end{aligned}
\label{eq:optimal-bellman}
\end{equation}
 
\begin{lemma}[Deterministic greedy optimality]
\label{lem:app-greedy}
Let $\pi^\star$ be any deterministic nonstationary Markov policy with
$\pi^\star_h(s)\in\arg\max_{a}\inf_{\nu\in\Gamma_{h,s,a}}
\mathbb E_\nu[R+V^\star_{h+1}(S')]$.  Then
\[
V^{\pi^\star}_h=V^\star_h
=\max_{\pi\in\Pi_{\mathrm{M,stoch}}}V^\pi_h
\quad\text{pointwise for every }h,
\]
where the maximum ranges over all randomized nonstationary Markov
policies.  In particular, randomization does not increase the robust
value under $(s,a)$-rectangularity.
\end{lemma}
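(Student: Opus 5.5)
The plan is a backward induction over $h$ that proves two claims together: $V^{\pi^\star}_h=V^\star_h$, and $V^\pi_h\le V^\star_h$ for every $\pi\in\Pi_{\mathrm{M,stoch}}$. For brevity, write the stage-$h$ robust action value against a continuation $V$ as
\[
Q_h^V(s,a)\coloneqq\inf_{\nu\in\Gamma_{h,s,a}}\mathbb E_\nu\bigl[R+V(S')\bigr],
\]
so that $V^\star_h(s)=\max_a Q_h^{V^\star_{h+1}}(s,a)$ by \eqref{eq:optimal-bellman} and $V^\pi_h(s)=\sum_a\pi_h(a\mid s)Q_h^{V^\pi_{h+1}}(s,a)$ by \eqref{eq:policy-bellman}. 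The finite action set makes the maximum well defined. Each infimum is attained, since the row is compact and the map is linear and continuous, as noted after \eqref{eq:policy-bellman}. Hence the argmax defining $\pi^\star$ is nonempty, and a measurable choice is trivial because $\mathcal S$ and $\mathcal A$ are finite.

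At $h=H+1$ all the value functions are zero. For the inductive step, assume $V^{\pi^\star}_{h+1}=V^\star_{h+1}$ and $V^\pi_{h+1}\le V^\star_{h+1}$ for all $\pi$.

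For the equality claim, $\pi^\star_h(\cdot\mid s)$ is a point mass at an action in the argmax. Substituting this into \eqref{eq:policy-bellman} and using the first induction hypothesis gives $V^{\pi^\star}_h(s)=Q_h^{V^\star_{h+1}}(s,\pi^\star_h(s))=V^\star_h(s)$.

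For the upper bound, fix a randomized Markov policy $\pi$. The second induction hypothesis and the monotonicity in Lemma~\ref{lem:app-operator} (applied rowwise to the inner infimum, i.e.\ for each fixed $\nu$ and then taking infima) give $Q_h^{V^\pi_{h+1}}(s,a)\le Q_h^{V^\star_{h+1}}(s,a)$ for every $a$. Averaging over $\pi_h(\cdot\mid s)$ then yields
\[
V^\pi_h(s)\le\sum_a\pi_h(a\mid s)Q_h^{V^\star_{h+1}}(s,a)\le\max_a Q_h^{V^\star_{h+1}}(s,a)=V^\star_h(s),
\]
because a convex combination is at most its largest term.

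Since $\pi^\star$ lies in $\Pi_{\mathrm{M,stoch}}$ and attains the bound, the supremum is a maximum equal to $V^\star_h$ pointwise. This also shows that randomization cannot help.

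The only delicate step is the inductive inequality. The point is that $(s,a)$-rectangularity makes the adversary's infimum act separately for each action, and this is already built into the recursion \eqref{eq:policy-bellman}. So mixing over actions happens outside the infimum and is linear, and no minimax or concavity issue arises. This is exactly what fails in the $s$-rectangular case, so I would add a remark that this step is where rectangularity is used.
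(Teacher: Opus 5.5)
Your proposal is correct and matches the paper's proof: backward induction, with the monotonicity of Lemma~\ref{lem:app-operator} and the fact that a convex combination never exceeds its maximum giving $V^\pi_h\le V^\star_h$, and substitution of the greedy action into \eqref{eq:policy-bellman} giving $V^{\pi^\star}_h=V^\star_h$. The only difference is cosmetic: you run the two inductions jointly, while the paper runs them separately.
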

 
\begin{proof}
For any randomized Markov $\pi$, backward induction with
Lemma~\ref{lem:app-operator} gives $V^\pi_h\le V^\star_h$: assuming
$V^\pi_{h+1}\le V^\star_{h+1}$,
\[
V^\pi_h(s)
\le\sum_a\pi_h(a\mid s)\inf_{\nu\in\Gamma_{h,s,a}}
\mathbb E_\nu\bigl[R+V^\star_{h+1}(S')\bigr]
\le V^\star_h(s),
\]
since a convex combination never exceeds the maximum.  Conversely,
backward induction along $\pi^\star$ gives
$V^{\pi^\star}_h=V^\star_h$: assuming equality at $h+1$, the
recursion \eqref{eq:policy-bellman} at the greedy action
reproduces \eqref{eq:optimal-bellman}.
\end{proof}
 
Combining Lemmas~\ref{lem:app-rect-dp} and~\ref{lem:app-greedy}: for rectangular compact rows, the security level
$\max_\pi W_\Gamma(\pi)$ over Markov policies equals
$V^\star_1(s_1)$, it is attained by a deterministic greedy policy, and
a worst-case response can be taken stagewise, at a vertex when the rows
are polytopes.  Proposition~\ref{prop:fixed-identity-reduction} shows that
the Byzantine rows form exactly such a family, generated by
$\{P^{\rm phys}_h(\cdot\mid s,a\oplus_{B^\star}u):
u\in\mathcal A_{B^\star}\}$; hence every vertex is one of these laws.  Moreover,
Theorem~\ref{thm:markov-sufficiency} extends the identity
$v^\star=V^\star_1(s_1)$ to the operational value defined over all
history-dependent stochastic team policies.
 
\textbf{$s$-rectangular families.}
An $s$-rectangular family couples the rows within a state: for each
$(h,s)$ a nonempty compact convex set
$\Gamma_{h,s}\subseteq\Delta(\mathcal Y)^{\mathcal A}$ of
action-indexed profiles $\nu=(\nu_a)_{a\in\mathcal A}$, with no
constraint across states or stages.  The robust value of a Markov policy
and the optimal value satisfy
\begin{equation}
\begin{aligned}
V^\pi_h(s)&=\inf_{\nu\in\Gamma_{h,s}}
\sum_{a\in\mathcal A}\pi_h(a\mid s)\,
\mathbb E_{\nu_a}\bigl[R+V^\pi_{h+1}(S')\bigr],\\
V^\star_h(s)&=\max_{\delta\in\Delta(\mathcal A)}\,
\inf_{\nu\in\Gamma_{h,s}}
\sum_{a\in\mathcal A}\delta(a)\,
\mathbb E_{\nu_a}\bigl[R+V^\star_{h+1}(S')\bigr],
\end{aligned}
\label{eq:app-s-rect}
\end{equation}
with the infimum now outside the sum over actions: nature commits to one
profile for the whole state before the realized action, which is the
simultaneous-move timing.  Optimal Markov policies may then be genuinely
randomized, and deterministic policies are suboptimal in general
\cite{wiesemann2013robust}.  This is the geometry induced by the blind
attacker in Appendix~\ref{app:s-rec}, in contrast with the per-action infimum of \eqref{eq:policy-bellman} induced by the insider.

\section{Byzantine Team MDPs and Uninformed Markov Games}
\label{app:uninformed-games}

This appendix formalizes three distinct connections: the game representation, the regret comparators, and the interpretation of existing uninformed-game rates in the Byzantine team MDP.  The two assumptions that appear below play different roles.  Episodic Markov responses are needed to import the finite-state rates of \cite{tian2021online} and \cite{liu2026online}; exact or approximately worst responses are instead needed to convert their regret benchmarks into security guarantees. Neither assumption implies the other.

\subsection{Value-preserving turn-based representation}

Fix the episode-invariant identity $B^\star$, and for a team policy $\pi$ and overwrite response $q$, write
\[
J(\pi,q)
\coloneqq
\mathbb E^{\pi,q}\!\left[\sum_{h=1}^H R_h\right].
\]
The hidden-action feature of our feedback matches an uninformed Markov game: the learner observes its own planned action, reward, and next state, but not the coalition's overwrite. Nevertheless, the timing is different:  In the simultaneous formulation of \cite{tian2021online}, the minimizing action cannot depend on the maximizing player's current action, whereas our coalition observes the realized plan before overwriting it.

This timing difference has a finite-state turn-based representation.  For each physical stage $h$, introduce the two state layers
\[
\widetilde{\mathcal S}_{2h-1}=\mathcal S,
\qquad
\widetilde{\mathcal S}_{2h}=\mathcal S\times\mathcal A.
\]
At layer $2h-1$, the team chooses $a\in\mathcal A$, the coalition has only a dummy action, the reward is zero, and the game moves deterministically from $s$ to $(s,a)$.  At layer $2h$, the team has only a dummy action, the coalition chooses $u\in\mathcal A_{B^\star}$, and the public outcome is drawn according to
\[
P_h^{\rm phys}
\bigl(\mathrm d r,s'\mid s,a\oplus_{B^\star}u\bigr).
\]
Because the planned action is part of the intermediate state, the coalition may condition on it even though the two players formally act simultaneously at each augmented substep.

Let $\Omega_{\rm M}(B^\star)$ denote the nonstationary Markov responses $q_h(\cdot\mid s,a)$.  Markov policies correspond through
\begin{equation}
\widetilde\pi_{2h-1}(a\mid s)=\pi_h(a\mid s),
\qquad
\widetilde q_{2h}(u\mid(s,a))=q_h(u\mid s,a),
\label{eq:app-augmented-policy-map}
\end{equation}
with dummy policies on the other layers.  The transformed parameters are
\begin{equation}
\widetilde H=2H,\qquad
\widetilde S=\max\{S,SA\}=SA,\qquad
\widetilde A=A,\qquad
\widetilde B=|\mathcal A_{B^\star}|.
\label{eq:app-augmented-dimensions}
\end{equation}

\begin{proposition}[Value-preserving augmentation]
\label{prop:app-uninformed-augmentation}
For every Markov pair $(\pi,q)$, the augmented pair in
\eqref{eq:app-augmented-policy-map} induces the same physical public-trajectory
law and the same total return.  Consequently,
\[
\widetilde V_1^{\widetilde\pi,\widetilde q}(s_1)=J(\pi,q),
\qquad
\widetilde V_1^\star(s_1)
=
\max_{\pi\in\Pi_{\rm M,stoch}}
\min_{q\in\Omega_{\rm M}(B^\star)}J(\pi,q)
=v^\star.
\]
\end{proposition}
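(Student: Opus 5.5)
The plan is to prove the trajectory-level identity first and then read off the two value identities from earlier results. Fix a Markov pair $(\pi,q)$ with $q\in\Omega_{\rm M}(B^\star)$, and let $(\widetilde\pi,\widetilde q)$ be the augmented pair from \eqref{eq:app-augmented-policy-map}. Consider the augmented trajectory
\[
(s_1,(s_1,a_1),s_2,(s_2,a_2),\ldots),
\]
with the rewards it carries. Delete the odd-layer zero rewards and the dummy actions. This deletion is a deterministic bijection between augmented public trajectories and physical public trajectories $(s_1,a_1,r_1,s_2,\ldots)$.

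I will show, by induction on $h$, that the joint law of $(S_1,A_1,R_1,\ldots,S_h)$ is the same under the original interaction and under the augmented game. The inductive step is the conditional law of one physical stage given the prefix. In the original model, $A_h\sim\pi_h(\cdot\mid S_h)$, then $U_h\sim q_h(\cdot\mid S_h,A_h)$, then $(R_h,S_{h+1})\sim P_h^{\rm phys}(\cdot\mid S_h,A_h\oplus_{B^\star}U_h)$. Integrating out the hidden $U_h$ gives the mixture
\[
\sum_u q_h(u\mid S_h,A_h)\,P_h^{\rm phys}(\cdot\mid S_h,A_h\oplus_{B^\star}u).
\]
In the augmented game, layer $2h-1$ draws $a$ from $\widetilde\pi_{2h-1}=\pi_h$ and moves deterministically to $(S_h,a)$ with reward zero. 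Layer $2h$ then draws $u$ from $\widetilde q_{2h}=q_h$ and samples from the same kernel. The one-step conditional laws therefore coincide, and both depend only on the current state. Since the zero rewards add nothing, the total returns agree, giving $\widetilde V_1^{\widetilde\pi,\widetilde q}(s_1)=J(\pi,q)$.

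For the value identity, I first handle the inner minimum. For a fixed Markov $\pi$, Theorem~\ref{thm:markov-sufficiency}(1) says that the infimum of $J(\pi,\cdot)$ over all history-dependent responses equals $W(\pi)$ and is attained by a deterministic Markov response. Hence
\[
\min_{q\in\Omega_{\rm M}(B^\star)}J(\pi,q)=W(\pi),
\]
with the minimum attained. Part (2) of the same theorem then gives
\[
\max_{\pi\in\Pi_{\rm M,stoch}} W(\pi)=v^\star .
\]

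It remains to identify $\widetilde V_1^\star(s_1)$, the max-min value of the finite turn-based augmented game, with this Markov max-min. The augmented game is a finite-horizon, perfect-information, alternating-move game in which exactly one player acts at each layer. Backward induction computes its value: max over $a$ at odd layers and min over $u$ at even layers. This recursion is exactly
\[
V_h^\star(s)=\max_a\min_u \mathbb E_{P_h^{\rm phys}(\cdot\mid s,a\oplus_{B^\star}u)}\bigl[R+V^\star_{h+1}(S')\bigr].
\]
By \eqref{eq:convex-row-minimum}, this equals the robust recursion \eqref{eq:optimal-bellman}. Theorem~\ref{thm:markov-sufficiency} identifies that recursion with $V_1^\star(s_1)=v^\star$.

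The main obstacle is fixing which notion of $\widetilde V^\star$ is meant: Markov max-min, history-dependent max-min, or the Nash value. I would define it as the max-min over augmented Markov policies. I would then note that every augmented Markov policy corresponds to an original Markov pair through \eqref{eq:app-augmented-policy-map}, since the dummy layers carry no choice. Because the game is turn-based, this value is also the Nash value, so the convention does not matter. With that settled, the first paragraph transfers the max-min over pairs exactly, and the argument closes.
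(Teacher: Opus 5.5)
Your proposal is correct and follows essentially the same route as the paper. Both argue by stagewise induction that the augmented pair reproduces the same one-step public-outcome mixture $\sum_u q_h(u\mid s,a)P_h^{\rm phys}(\cdot\mid s,a\oplus_{B^\star}u)$, use the zero inserted rewards to equate returns, and then transfer the max-min over corresponding Markov pairs via Theorem~\ref{thm:markov-sufficiency}. Your backward-induction identification of $\widetilde V_1^\star(s_1)$ with the robust Bellman recursion, and your remark fixing $\widetilde V^\star$ as the Markov max-min, spell out details the paper leaves implicit rather than giving a different argument.
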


\begin{proof}
At the first substep of stage $h$, the augmented game records precisely the
planned action drawn by $\pi_h(\cdot\mid s)$.  At the second substep, it
draws the overwrite from the same row $q_h(\cdot\mid s,a)$ and then uses the
same physical public-outcome kernel.  Induction over $h$ therefore gives
equality of the physical public-trajectory laws.  Each inserted reward is
zero, so episode returns are also equal.  Maximizing and minimizing over the
corresponding Markov policies and applying
Theorem~\ref{thm:markov-sufficiency} proves the value identity.
\end{proof}

The finite-state game dynamics also represent arbitrary behavioral policies: such policies may condition on the augmented game's observed history.  The limitation concerns theorem applicability, not representation. \cite{tian2021online,liu2026online} formulate each episode policy as a nonstationary Markov kernel. To encode a general response $q_h(\cdot\mid\eta_h,a)$ as such a kernel, one would have to replace the intermediate state $(s,a)$ by $(\eta_h,a)$.  This Markovization is exponentially large and can be uncountable when rewards are continuously valued.  Their tabular rates hence apply directly only to the episodic Markov response subclass, although the game-theoretic representation itself is more general.

There is one further technical qualification.  Both cited papers formally use a deterministic bounded reward function and a next-state kernel, whereas our physical kernel may jointly randomize reward and next state.  For Markov policy pairs, replacing a joint row by its mean reward and next-state marginal preserves $\mathbb E[R_h+V_{h+1}(S_{h+1})]$, and hence preserves every value and regret identity below.  Applying their learning theorems to the full stochastic feedback law additionally invokes the standard bounded stochastic-reward martingale extension (or, literally, assumes deterministic
rewards). 

\subsection{Tian et al.: Nash-value regret is return regret}

We then carefully discuss the regret notions.

Suppose the episode-$k$ response is Markov, and condition on the public
transcript before that episode.  Then
$m_k=J(\pi_k,q_k)$ in the notation of
equation~\eqref{eq:actual-return}.  The minimax-value regret of
\cite{tian2021online}, subsequently called Nash-value regret, compares the game value with the \emph{expected value of the policy pair actually used};
it is not regret against the sampled episode return.  Proposition
\ref{prop:app-uninformed-augmentation} gives the exact identity
\begin{equation}
\operatorname{NR}^{\rm post}_K
\coloneqq
\sum_{k=1}^K
\left\{\widetilde V_1^\star(s_1)
-\widetilde V_1^{\widetilde\pi_k,\widetilde q_k}(s_1)\right\}
=\sum_{k=1}^K\{v^\star-m_k\}
=\Reg_K.
\label{eq:app-tian-return-regret}
\end{equation}
Note that we do not assume any condition on the hidden player's response.  A benign response can make $m_k>W(\pi_k)$, and can even make a return-regret summand negative, while
the deployed policy remains vulnerable to another feasible response.

For each episode define the nonnegative response gap
\[
d_k\coloneqq m_k-W(\pi_k),
\qquad D_K=\sum_{k=1}^K d_k.
\]
Combining \eqref{eq:app-tian-return-regret} with the security decomposition
gives
\begin{equation}
\boxed{
\Regsec_K=\operatorname{NR}^{\rm post}_K+D_K.}
\label{eq:app-tian-security-identity}
\end{equation}
Thus \cite{tian2021online}'s regret equals security regret if and only if $D_K=0$ (the hidden player plays the best-response, i.e., the worst-case).
Since every $d_k\ge0$, this is equivalent to the realized response attaining
$W(\pi_k)$ in every episode.  If it is $\epsilon_k$-worst, then
\begin{equation}
0\le d_k\le\epsilon_k,
\qquad
\Regsec_K
\le \operatorname{NR}^{\rm post}_K+
\sum_{k=1}^K\epsilon_k.
\label{eq:app-tian-approx-security}
\end{equation}
More generally, Nash-value no-regret implies security no-regret whenever
$D_K=o(K)$.  Numerical equality in
\eqref{eq:app-tian-security-identity} may hold for a history-dependent exact
worst response, but importing \cite{tian2021online}'s learning rate additionally requires the data-generating responses to satisfy the Markov-policy and finite-state conditions above.

In the large-$K$ branch of their theorem, substituting
\eqref{eq:app-augmented-dimensions} gives, with high probability,
\begin{equation}
\operatorname{NR}^{\rm post}_K
=\widetilde{\mathcal O}\!\left(
H^2S^{1/3}A^{2/3}K^{2/3}\right),
\qquad
K\gtrsim 8H^3SA^2.
\label{eq:app-tian-transformed-rate}
\end{equation}
The bound is independent of $|\mathcal A_{B^\star}|$; a fixed unknown
identity is absorbed into the fixed unknown game and need not be identified
by the learner.  An episode-varying identity, in contrast, changes the
underlying game and is outside this reduction.  Under exact worst responses,
\eqref{eq:app-tian-transformed-rate} is also a security-regret bound; under
$\epsilon_k$-worst responses one adds $\sum_k\epsilon_k$; and under
unrestricted responses it controls only the intermediate return regret. However, this regret has a worse order in $K$ compared to ours. 

\subsection{Liu et al.: the empirical-comparator gap}

The empirical comparator of \cite{liu2026online} is statewise
rectangular.  It does not restrict the minimizing player to one complete policy among $q_1,\ldots,q_K$.  At each augmented response state $(h,s,a)$, define
\begin{equation}
\mathcal Q^{\rm emp}_{K,h,s,a}
\coloneqq
\operatorname{conv}\!\left\{
q_{k,h}(\cdot\mid s,a):k\in[K]
\right\}.
\label{eq:app-empirical-response-row}
\end{equation}
Convexification leaves the Bellman minimum unchanged because its objective is
linear in the response distribution.  Let $\Omega_K^{\rm emp}(B^\star)$
contain all Markov responses whose row at each $(h,s,a)$ belongs to
$\mathcal Q^{\rm emp}_{K,h,s,a}$.  This class may splice rows taken from
different episode policies at different states.  Its fixed-initial-state
empirical value is
\begin{equation}
v_K^{\rm emp}
\coloneqq
\max_{\pi\in\Pi_{\rm M,stoch}}
\inf_{q\in\Omega_K^{\rm emp}(B^\star)}J(\pi,q),
\qquad
\Delta_K^{\rm emp}\coloneqq v_K^{\rm emp}-v^\star.
\label{eq:app-empirical-value}
\end{equation}
The empirical response class restricts the full Markov response class, so
$\Delta_K^{\rm emp}\ge0$.  For a fixed initial state, \cite{liu2026online}'s
empirical Nash-value regret therefore satisfies
\begin{align}
\operatorname{ENR}_K
&\coloneqq
\sum_{k=1}^K\{v_K^{\rm emp}-m_k\}
\notag\\
&=\operatorname{NR}^{\rm post}_K+K\Delta_K^{\rm emp}
=\Reg_K+K\Delta_K^{\rm emp}.
\label{eq:app-enr-return-relation}
\end{align}
If initial states vary, the last term is replaced by the sum of the
state-dependent empirical-value gaps.

Together, the three regret notions obey
\begin{equation}
\boxed{
\operatorname{NR}^{\rm post}_K=\Reg_K,
\qquad
\operatorname{ENR}_K=\Reg_K+K\Delta_K^{\rm emp},
\qquad
\Regsec_K=\Reg_K+D_K.}
\label{eq:app-three-regret-identities}
\end{equation}
Equivalently,
\begin{equation}
\Regsec_K
=\operatorname{ENR}_K+D_K-K\Delta_K^{\rm emp}
\le \operatorname{ENR}_K+D_K.
\label{eq:app-enr-security-relation}
\end{equation}
This distinction between equality and guarantee transfer is important.  Under
exact worst responses, $D_K=0$, so
\begin{equation}
\Regsec_K
=\operatorname{ENR}_K-K\Delta_K^{\rm emp}
\le\operatorname{ENR}_K.
\label{eq:app-enr-exact-security}
\end{equation}
Thus an upper bound on empirical Nash-value regret \emph{is} a security-regret
upper bound under exact worst responses, even though the two quantities need
not be equal.  With $\epsilon_k$-worst responses,
\begin{equation}
\Regsec_K
\le\operatorname{ENR}_K+
\sum_{k=1}^K\epsilon_k.
\label{eq:app-enr-approx-security}
\end{equation}
Under exact worst responses, equality
$\operatorname{ENR}_K=\Regsec_K$ holds if and only if
$v_K^{\rm emp}=v^\star$.  Without exact worst responses, the purely
algebraic equality condition is
$D_K=K\Delta_K^{\rm emp}$, but cancellation of these two conceptually
different gaps is not a useful security assumption.

A transparent sufficient condition for $v_K^{\rm emp}=v^\star$ is
\emph{value completeness}.  Suppose a globally minimax Markov response
$q^\dagger$ satisfies
\[
\max_{\pi\in\Pi_{\rm M,stoch}}J(\pi,q^\dagger)=v^\star
\]
and, for every $(h,s,a)$,
\begin{equation}
q_h^\dagger(\cdot\mid s,a)
\in\mathcal Q^{\rm emp}_{K,h,s,a}.
\label{eq:app-value-complete-coverage}
\end{equation}
Then the empirical opponent can select $q^\dagger$, giving
$v_K^{\rm emp}\le\max_\pi J(\pi,q^\dagger)=v^\star$; the reverse inequality
was proved above.  Complete coverage of all pure overwrite rows at every
$(h,s,a)$ is a stronger sufficient condition.  Exact worst responses to
the deployed policies $\pi_1,\ldots,\pi_K$ do not by themselves imply
\eqref{eq:app-value-complete-coverage}: they control factual policies, while
the empirical value optimizes over counterfactual learner policies and may
need response rows never used against the deployed sequence.

If the opponent is fixed, $q_1=\cdots=q_K=q$, then
$\Omega_K^{\rm emp}$ contains only the rows of $q$, and
\begin{equation}
\operatorname{ENR}_K
=
\max_{\pi}
\sum_{k=1}^K\{J(\pi,q)-J(\pi_k,q)\}.
\label{eq:app-fixed-opponent-external-regret}
\end{equation}
This is ordinary external regret in the fixed MDP induced by $q$, as \cite{liu2026online} emphasize.  It is not automatically security regret.  Stationarity
does not make $q$ worst for each deployed policy, and worst-responsiveness
does not make the empirical response rows value-complete.  Both gaps vanish
only under additional conditions.

For completeness, define the post-action nonstationarity quantities
\begin{align}
C_{\rm post}
&\coloneqq
\sum_{h=1}^H\sum_{k=1}^K
\TV\!\left(
q_{k,h}(\cdot\mid S_{k,h},A_{k,h}),
q_h^{\rm emp,\star}(\cdot\mid S_{k,h},A_{k,h})
\right),
\notag\\
L_{\rm post}
&\coloneqq
1+\sum_{k=1}^{K-1}\mathbf 1\{q_k\ne q_{k+1}\},
\label{eq:app-liu-nonstationarity}
\end{align}
where $q^{\rm emp,\star}$ is an empirical minimax row selector.  The first
quantity is an on-trajectory total-variation measure, not the security
response gap $D_K$.  For an oblivious sequence of episodewise Markov
responses, substituting \eqref{eq:app-augmented-dimensions} into \cite{liu2026online}'s
bound gives, up to logarithmic factors,
\begin{align}
\operatorname{ENR}_K
\le\widetilde{\mathcal O}\!\Bigl(
 H^2SA+\min\bigl\{
H^{5/2}A\sqrt{SK}
+H^{7/3}A^{2/3}(SKC_{\rm post})^{1/3}, 
H^{5/2}A\sqrt{SL_{\rm post}K}
\bigr\}\Bigr).
\label{eq:app-liu-transformed-rate}
\end{align}
If $q_k$ may adapt to the pre-episode transcript, their adaptive-opponent
theorem is the relevant result.  It preserves the same dependence on
$K,C_{\rm post},L_{\rm post}$, with additional state factors.  In the
original augmented parameters, it has the form
\begin{equation}
\begin{aligned}
\widetilde{\mathcal O}\!\Bigl(
 \widetilde H^2\widetilde S^{3/2}
+\min\Bigl\{\widetilde S^{3/4}
\sqrt{\widetilde H^3\iota_{\rm aug}K} +(\iota_{\rm aug}\widetilde H^5\widetilde S
KC_{\rm post})^{1/3}, \widetilde S^{3/4}
\sqrt{L_{\rm post}\widetilde H^3\iota_{\rm aug}K}
\Bigr\}\Bigr).
\end{aligned}
\label{eq:app-liu-adaptive-rate}
\end{equation}
where
$\iota_{\rm aug}
=\Theta(\widetilde H^2\widetilde A
\log(\widetilde H K\widetilde A\widetilde S/\delta))$.
Suppressing fixed game dimensions, both results interpolate as
\[
\widetilde{\mathcal O}_{H,S,A}\!\left(
\min\left\{
\sqrt K+(C_{\rm post}K)^{1/3},
\sqrt{L_{\rm post}K}
\right\}\right).
\]
A fixed response has $C_{\rm post}=0$ and $L_{\rm post}=1$, giving a
$\sqrt K$-type empirical/external-regret rate.  Exact worst responses may
nevertheless switch every episode, so $D_K=0$ does not imply small
$C_{\rm post}$ or $L_{\rm post}$.  Conversely, a fixed benign response
can have $C_{\rm post}=0$ while $D_K=\Theta(K)$.  Response
nonstationarity and response worstness are therefore orthogonal properties.

In summary, \cite{tian2021online}'s theorem controls $\Reg_K$ under Markov response
feedback, and \cite{liu2026online}'s theorem controls the stronger empirical benchmark
$\operatorname{ENR}_K$.  Either result becomes a security guarantee when
the cumulative response gap is controlled.  Exact equality with security
regret requires $D_K=0$ for \cite{tian2021online}; for \cite{liu2026online} it additionally
requires $v_K^{\rm emp}=v^\star$.

\subsection{Other hidden-action formulations.}
Classical stochastic games with imperfect monitoring allow each player to observe the public state and its own action but only a stochastic signal of the opponent's action \cite{rosenberg2003maxmin}.  These works establish
long-run value existence for a known game, rather than finite-time learning
guarantees.  Partially observable Markov games provide a broader learning
model in which both the state and other players' actions may be hidden
\cite{liu2022sample}.  Against arbitrary opponents, their maximin-value regret
has the same game-value-minus-realized-return form as the return regret above.
Under local-only feedback, however, learning this comparator can be
exponentially hard even against a fixed known opponent; the positive
sublinear-regret result assumes that, after every episode, all players reveal
their observations and actions.  Thus their negative result concerns an
additional latent-state identification problem absent from our public-state
model, whereas their positive result uses strictly richer feedback.  As with
Nash-value regret, their maximin-value regret becomes security regret only
when the realized responses are exact or approximate worst responses.

Payoff-based decentralized Markov-game methods also avoid observing opponent
actions \cite{sayin2022fictitious,chen2023finite,
ouhamma2026learning}.  Related decentralized regret guarantees are available
when all players follow a prescribed slowly varying learning procedure
\cite{erez2023regret}.  These results rely on opponent stationarity or
coordinated self-play and target equilibrium convergence or deviation regret;
they therefore do not cover one-sided deployment against an arbitrary
Byzantine response.  At horizon one, \cite{ito2026adversarial} study
pure-strategy maximin regret under uninformed bandit feedback.  Their
comparator agrees with the value of the one-stage post-action game, but their
opponent moves simultaneously and cannot condition on the learner's current
sampled action.  Moreover, their realized-return guarantee certifies security
only when the realized opponent actions are worst or approximately worst
responses.

\section{Proofs for Section \ref{sec:geometry}}

\begin{proposition}[Restatement of Proposition~\ref{prop:fixed-identity-reduction}] 
Fix $B^\star\subseteq[n]$.  The conditional public-outcome laws attainable by randomized, history-dependent, nonanticipating overwrite strategies with identity $B^\star$ are exactly the nonanticipating selectors of the rectangular row family $\Gamma^{B^\star}=(\Gamma^{B^\star}_{h,s,a})_{h,s,a}$.  Moreover, for every bounded measurable continuation payoff $f:\mathcal Y\to\mathbb R$, it holds that
\begin{equation}
\inf_{\nu\in\Gamma^{B^\star}_{h,s,a}}\E_\nu [f]
=
\min_{u\in\Aset_{B^\star}}
\E_{P_h^{\phys}(\cdot\mid s,a\oplus_{B^\star}u)}[f].
\end{equation}
Consequently, if $\{V^\pi_h\}$ is the robust value function of any Markov policy $\pi$ under the $(s,a)$-rectangular robust MDP with uncertainty set $\{\Gamma^{B^\star}\}$, then $W(\pi)=V_1^\pi(s_1)$, even though the infimum in \eqref{eq:policy-security-value} ranges over randomized full-history overwrite strategies.
\end{proposition}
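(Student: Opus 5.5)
The proof has three parts. We show the two inclusions of the selector correspondence, then the linear-programming identity \eqref{eq:convex-row-minimum}, and finally combine them with Lemma~\ref{lem:app-rect-dp}.

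\textbf{Inclusion 1: every overwrite strategy induces a selector.} Fix a randomized nonanticipating overwrite strategy $q=(q_h)$ with identity $B^\star$. At a public history $\eta_h$ ending in $s_h=s$ and planned action $a$, the overwrite $u\sim q_h(\cdot\mid\eta_h,a)$ is drawn before the outcome. The outcome is then drawn from $P_h^{\phys}(\cdot\mid s,a\oplus_{B^\star}u)$. Integrating out the hidden $u$ gives the conditional public-outcome law
\[
\sigma_h(\cdot\mid\eta_h,a)=\sum_{u\in\Aset_{B^\star}}q_h(u\mid\eta_h,a)\,P_h^{\phys}(\cdot\mid s,a\oplus_{B^\star}u).
\]
This is a finite convex combination of the generators, so it lies in $\Gamma^{B^\star}_{h,s,a}$. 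Measurability in $(\eta_h,a)$ is inherited from $q_h$, because $\Aset_{B^\star}$ is finite. The map is nonanticipating because $u$ precedes the outcome.

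One point needs care: the public law of the whole trajectory must depend on $q$ only through these conditional laws. We check this by induction over stages, using the chain rule. The hidden $u$'s never enter the public filtration, so the joint public law is the product of the kernels $\varpi_h$ and $\sigma_h$.

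\textbf{Inclusion 2: every selector is realized by an overwrite strategy.} Take a measurable selector $\sigma_h(\cdot\mid\eta_h,a)\in\Gamma^{B^\star}_{h,s_h,a}$. The row is the convex hull of finitely many points, so we can write $\sigma_h=\sum_u\lambda_u P_h^{\phys}(\cdot\mid s,a\oplus_{B^\star}u)$ with weights in the simplex. We then set $q_h(u\mid\eta_h,a)=\lambda_u(\eta_h,a)$.

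\emph{The main obstacle is choosing these weights measurably in $(\eta_h,a)$.} The representation need not be unique, so an arbitrary choice may fail to be Borel. I would handle this in one of two ways:
\begin{itemize}
\item Invoke a measurable selection theorem (Kuratowski--Ryll-Nardzewski). The set-valued map sending $\sigma$ to its feasible weight vectors is a closed-graph, compact-valued correspondence into a finite simplex, so it admits a Borel selector.
\item More concretely, pick the lexicographically minimal weight vector. It is obtained by successive linear programs and is therefore Borel in the data.
\end{itemize}
The same chain-rule induction as in Inclusion 1 then shows that this $q$ reproduces the public laws of $\sigma$.

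\textbf{The identity \eqref{eq:convex-row-minimum} and the value claim.} The map $\nu\mapsto\E_\nu[f]$ is linear on $\Delta(\mathcal Y)$, and $f$ is bounded and measurable. Its infimum over the convex hull of the finite set of generators is therefore attained at a generator. Concretely, $\E_{\sum\lambda_uP_u}[f]=\sum\lambda_u\E_{P_u}[f]\ge\min_u\E_{P_u}[f]$, and the reverse inequality holds because each generator lies in the row.

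For the consequence, the two inclusions give $W(\pi)=W_{\Gamma^{B^\star}}(\pi)$: the infimum of the expected total reward over overwrite strategies equals the infimum over selectors, since the expected total reward is a functional of the public trajectory law only. The row family is $(s,a)$-rectangular and compact. Lemma~\ref{lem:app-rect-dp} therefore yields $W_{\Gamma^{B^\star}}(\pi)=V^\pi_1(s_1)$ for every Markov $\pi$, which proves the claim.
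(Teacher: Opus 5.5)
Your proposal is correct and follows the paper's proof: the same mixture formula gives one inclusion, a Borel right inverse of the affine surjection $\lambda\mapsto\sum_u\lambda_u P_h^{\phys}(\cdot\mid s,a\oplus_{B^\star}u)$ gives the other, and the same linearity argument gives the row minimum. The only difference is in the value claim: you pass through $W(\pi)=W_{\Gamma^{B^\star}}(\pi)$ and cite Lemma~\ref{lem:app-rect-dp}, whereas the paper reruns that backward induction directly on overwrite strategies, which also exhibits the deterministic Markov pure-overwrite minimizer reused in Theorem~\ref{thm:markov-sufficiency}.
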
 

\begin{proof}[Proof of Proposition~\ref{prop:fixed-identity-reduction}]
We prove separately the two directions of the public-law equivalence, the
linear-minimization identity, and the Bellman consequence.

\emph{From overwrites to public-law selectors.}
Fix $h$, a public history $\eta_h$ ending in state $s$, and a realized
planned action $a$.  If the coalition uses the kernel in
\eqref{eq:general-overwrite-policy}, then for every Borel set
$E\subseteq\mathcal Y$, the conditional public-outcome law is
\begin{align}
Q_h^q(E\mid\eta_h,a)
&=
\sum_{u\in\Aset_{B^\star}}
q_h(u\mid\eta_h,a)
P_h^{\phys}(E\mid s,a\oplus_{B^\star}u).
\label{eq:overwrite-induced-public-law}
\end{align}
The coefficients are nonnegative and sum to one.  Hence
$Q_h^q(\cdot\mid\eta_h,a)$ belongs to
$\Gamma^{B^\star}_{h,s,a}$.  It is measurable in the history because
$q_h$ is a probability kernel and the overwrite-action set is finite.
Thus every randomized, history-dependent, nonanticipating overwrite strategy
induces a selector satisfying \eqref{eq:public-law-selector}.

\emph{From public-law selectors to overwrites.}
Enumerate
$\Aset_{B^\star}=\{u_1,\ldots,u_m\}$.  For each fixed row define the affine
map
\begin{align}
T_{h,s,a}:\DeltaP(\Aset_{B^\star})
&\longrightarrow\Gamma^{B^\star}_{h,s,a},\nonumber\\
T_{h,s,a}(\lambda)
&\coloneqq
\sum_{j=1}^m\lambda_j
P_h^{\phys}(\cdot\mid s,a\oplus_{B^\star}u_j).
\label{eq:row-mixture-map}
\end{align}
The definition of the convex hull says exactly that $T_{h,s,a}$ is
surjective.  Therefore, for every selector value
$\sigma_h(\cdot\mid\eta_h,a)$, at least one coefficient vector
$\lambda\in\DeltaP(\Aset_{B^\star})$ satisfies
\begin{equation}
T_{h,s,a}(\lambda)
=\sigma_h(\cdot\mid\eta_h,a).
\label{eq:selector-mixture-representation}
\end{equation}
There is no hidden measurability gap here.  Under the weak topology,
$T_{h,s,a}$ is a continuous surjection between compact metric spaces, so a
standard measurable-selection theorem supplies a Borel right inverse.  Fix
one such right inverse for each of the finitely many rows and apply it to
$\sigma_h(\cdot\mid\eta_h,a)$.  The resulting coefficient vector defines a
Borel kernel $q_h(\cdot\mid\eta_h,a)$.  Drawing the overwrite from this
kernel implements \eqref{eq:selector-mixture-representation}.  The conditional
public-outcome kernels are therefore equal at every history.  Repeated
conditioning over $h=1,\ldots,H$ shows that, under any learner policy, the
two descriptions generate the same law of the complete public trajectory.

\emph{Linear minimization over a row.}
Let $f:\mathcal Y\to\mathbb R$ be bounded and Borel measurable.  Every
$\nu\in\Gamma^{B^\star}_{h,s,a}$ has a representation
$
\nu=\sum_u\lambda_u
P_h^{\phys}(\cdot\mid s,a\oplus_{B^\star}u)
$
for some $\lambda\in\DeltaP(\Aset_{B^\star})$.  By linearity of
integration,
\begin{align}
\E_\nu f=
\sum_{u\in\Aset_{B^\star}}\lambda_u
\E_{P_h^{\phys}(\cdot\mid s,a\oplus_{B^\star}u)}f\ge
\min_{u\in\Aset_{B^\star}}
\E_{P_h^{\phys}(\cdot\mid s,a\oplus_{B^\star}u)}f.
\label{eq:mixture-objective-lower-bound}
\end{align}
Conversely, every pure overwrite law is itself an element of the convex hull.
Taking a point mass on a minimizing overwrite gives the reverse inequality,
and the minimum exists because $\Aset_{B^\star}$ is finite.  This proves
\eqref{eq:convex-row-minimum}.

\emph{Bellman evaluation against full-history attacks.}
Given any behavioral learner policy $\vartheta$, any overwrite strategy
$q$, and any fixed public history $\eta_h$, recursively applying the
learner kernel, overwrite kernel, and physical outcome kernel from stage $h$
onward defines a unique continuation trajectory law.  Denote expectation
under this law by $\E_{\eta_h}^{\vartheta,q}$.  This expectation is defined
for every history, including a history having probability zero under earlier
play; at reached histories it agrees with a version of the usual conditional
expectation.

Fix a randomized nonstationary Markov policy $\pi$.  For a continuation of
an overwrite strategy $q$, let
\begin{equation}
J_h^{\pi,q}(\eta_h)
\coloneqq
\E_{\eta_h}^{\pi,q}\!\left[\sum_{t=h}^H R_t\right]
\label{eq:strategy-continuation-return}
\end{equation}
denote the continuation return in the game beginning at $\eta_h$.  We prove
by backward induction that
for every feasible $q$ and every $\eta_h$ ending in $s$,
\begin{equation}
J_h^{\pi,q}(\eta_h)\ge V_h^\pi(s),
\label{eq:any-response-bellman-lower-bound}
\end{equation}
and that equality is attained by one deterministic nonstationary Markov
overwrite strategy.

For the remainder of the proof, abbreviate
\begin{equation}
P^u_{h,s,a}
\coloneqq P_h^{\phys}(\cdot\mid s,a\oplus_{B^\star}u).
\label{eq:primitive-overwrite-row-shorthand}
\end{equation}

At $h=H+1$, both sides are zero.  Suppose the claim holds at stage
$h+1$.  Conditioning successively on the planned action, overwrite, and
public outcome gives
\begin{align}
J_h^{\pi,q}(\eta_h)
&=\sum_{a\in\Aset}\pi_h(a\mid s)
  \sum_{u\in\Aset_{B^\star}}q_h(u\mid\eta_h,a)
\nonumber\\[-0.3em]
&\quad\cdot
\E_{P^u_{h,s,a}}
\!\left[R+J_{h+1}^{\pi,q}(\eta_h,a,R,S')\right]
\nonumber\\
&\ge\sum_{a\in\Aset}\pi_h(a\mid s)
\min_{u\in\Aset_{B^\star}}
\E_{P^u_{h,s,a}}
\!\left[R+V_{h+1}^\pi(S')\right]
\nonumber\\
&=V_h^\pi(s).
\label{eq:bellman-evaluation-induction}
\end{align}
The first inequality uses the induction hypothesis after every possible next
public history and then uses that an average is at least its smallest term.
The final equality follows from
\eqref{eq:convex-row-minimum} and \eqref{eq:policy-bellman}.

For every finite triple $(h,s,a)$, choose
\begin{equation}
u_h^\pi(s,a)
\in\operatorname*{arg\,min}_{u\in\Aset_{B^\star}}
\E_{P^u_{h,s,a}}
\!\left[R+V_{h+1}^\pi(S')\right].
\label{eq:markov-overwrite-best-response}
\end{equation}
If the coalition always uses
$U_h=u_h^\pi(S_h,A_h)$, then the induction hypothesis holds with equality
at stage $h+1$, and the selected overwrite attains the current minimum.
Consequently every inequality in
\eqref{eq:bellman-evaluation-induction} is an equality.  Backward induction
therefore shows that this deterministic nonstationary Markov response attains
$V_h^\pi(s)$ after every history ending in $s$.  This proves the Bellman
claim in Proposition \ref{prop:fixed-identity-reduction}, including its validity against
arbitrary randomized full-history overwrite strategies.
\end{proof}

\begin{theorem}[Restatement of Theorem~\ref{thm:markov-sufficiency}] 
For a fixed identity $B^\star$, assume that the coalition observes each
realized planned action before overwriting it and has no cross-stage overwrite
budget or other coupling constraint.  Then:
\begin{enumerate}[label=\textup{(\roman*)},leftmargin=1.55em,itemsep=0.25em]
  \item for every $\pi\in\Pi_{\mathrm{M,stoch}}$, the operational
  worst-case value is $W(\pi)=V_1^\pi(s_1)$, and the infimum over all
  randomized history-dependent overwrite strategies is attained by a deterministic Markov response;

  \item Markov policies are sufficient for the learner:
  \begin{align}
  v^\star=
    \max_{\pi\in\Pi_{\mathrm{M,stoch}}}W(\pi)
   =\max_{\pi\in\Pi_{\mathrm{M,det}}}W(\pi)
   =V_1^\star(s_1).
  \end{align}
In particular, an optimal deterministic nonstationary Markov learner policy exists. 
\end{enumerate}
\end{theorem}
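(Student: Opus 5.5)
Part (i) is already contained in Proposition~\ref{prop:fixed-identity-reduction}: its final step shows, by backward induction on \eqref{eq:bellman-evaluation-induction}, that $J_1^{\pi,q}(s_1)\ge V_1^\pi(s_1)$ for every randomized full-history overwrite strategy $q$. It also shows that the deterministic Markov response $u_h^\pi(s,a)$ of \eqref{eq:markov-overwrite-best-response} attains equality. Taking the infimum over $q\in\Omega(B^\star)$ gives $W(\pi)=V_1^\pi(s_1)$ with the infimum attained, so I would simply cite it.

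For (ii), the Markov part follows from Lemma~\ref{lem:app-greedy} applied to the row family $\Gamma^{B^\star}$. Every $\pi\in\Pi_{\mathrm{M,stoch}}$ satisfies $V_1^\pi(s_1)\le V_1^\star(s_1)$, and the greedy deterministic policy $\pi^\star$ attains equality. Combined with (i), this yields
\[
\max_{\Pi_{\mathrm{M,stoch}}}W=\max_{\Pi_{\mathrm{M,det}}}W=V_1^\star(s_1),
\]
with both maxima attained by $\pi^\star$. Since $\Pi_{\mathrm{M,det}}\subseteq\Pi_{\mathrm{M,stoch}}\subseteq\Pi_{\mathrm{hist}}$, we get $v^\star\ge V_1^\star(s_1)$.

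The remaining step, and the main obstacle, is the reverse inequality $W(\varpi)\le V_1^\star(s_1)$ for an arbitrary history-dependent behavioral policy $\varpi$. I would fix $\varpi$ and use the single deterministic Markov response $q^\star$ that, at each $(h,s,a)$, plays a minimizer $u_h^\star(s,a)$ of $\mathbb E_{P^u_{h,s,a}}[R+V^\star_{h+1}(S')]$. This response is feasible and ignores the learner entirely. I would then prove by backward induction, using the continuation expectations $\E_{\eta_h}^{\varpi,q^\star}$ defined at every history as in the previous proof, that
\[
J_h^{\varpi,q^\star}(\eta_h)\le V_h^\star(s_h)\quad\text{for every history }\eta_h .
\]
In the inductive step, conditioning on the planned action $a\sim\varpi_h(\cdot\mid\eta_h)$ and the outcome gives
\[
J_h^{\varpi,q^\star}(\eta_h)=\sum_a\varpi_h(a\mid\eta_h)\,\mathbb E_{P^{u^\star}_{h,s,a}}\bigl[R+J_{h+1}^{\varpi,q^\star}(\eta_h,a,R,S')\bigr].
\]
The induction hypothesis bounds this by $\sum_a\varpi_h(a\mid\eta_h)\min_u\mathbb E_{P^u_{h,s,a}}[R+V^\star_{h+1}(S')]$. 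This is a convex combination of the quantities maximized in \eqref{eq:optimal-bellman}, hence at most $V_h^\star(s)$. Therefore $W(\varpi)\le J_1^{\varpi,q^\star}(s_1)\le V_1^\star(s_1)$.

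The delicate points in this step are measurability and the treatment of histories: the continuation values must be defined at every history, including probability-zero ones. Both are handled by the construction already used for Proposition~\ref{prop:fixed-identity-reduction}, since $q^\star$ is history-independent and the overwrite set is finite. Taking the supremum over $\varpi$ gives $v^\star\le V_1^\star(s_1)$. This closes the chain of equalities and shows that the deterministic greedy policy $\pi^\star$ is optimal.
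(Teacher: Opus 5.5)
Your proposal is correct and follows the paper's proof essentially step for step. You cite part (i) from the Bellman portion of Proposition~\ref{prop:fixed-identity-reduction}. You obtain the reverse inequality by fixing the single learner-independent deterministic Markov overwrite $u_h^\star(s,a)$ and running the same backward induction with the convex-combination bound $\sum_a\varpi_h(a\mid\eta_h)\min_u\mathbb E_{P^u_{h,s,a}}[R+V^\star_{h+1}(S')]\le V_h^\star(s)$.

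The only difference is your separate appeal to Lemma~\ref{lem:app-greedy} for the Markov classes. The paper skips this because three facts already give all the equalities: the history-dependent upper bound, $W(\pi^\star)=V_1^\star(s_1)$, and the inclusions $\Pi_{\mathrm{M,det}}\subseteq\Pi_{\mathrm{M,stoch}}\subseteq\Pi_{\mathrm{hist}}$. Your extra step is therefore harmless but redundant.
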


\begin{proof}
Part~\textup{(i)} was established in the Bellman portion of the proof of
Proposition \ref{prop:fixed-identity-reduction}.  It remains to prove the policy-class
equalities in part~\textup{(ii)}.

Define
\begin{equation}
Q_h^\star(s,a)
\coloneqq
\min_{u\in\Aset_{B^\star}}
\E_{P_h^{\phys}(\cdot\mid s,a\oplus_{B^\star}u)}
\!\left[R+V_{h+1}^\star(S')\right].
\label{eq:optimal-pure-overwrite-q}
\end{equation}
By \eqref{eq:convex-row-minimum} and \eqref{eq:optimal-bellman},
$V_h^\star(s)=\max_aQ_h^\star(s,a)$.  Since the action sets are finite,
choose
\begin{align}
a_h^\star(s)&\in\operatorname*{arg\,max}_{a\in\Aset}Q_h^\star(s,a),
\label{eq:optimal-markov-learner-action}\\
u_h^\star(s,a)&\in\operatorname*{arg\,min}_{u\in\Aset_{B^\star}}
\E_{P_h^{\phys}(\cdot\mid s,a\oplus_{B^\star}u)}
\!\left[R+V_{h+1}^\star(S')\right].
\label{eq:universal-markov-overwrite}
\end{align}
Let $\pi^\star$ always choose $a_h^\star(S_h)$.  Applying part~\textup{(i)}
and backward induction in \eqref{eq:policy-bellman} and \eqref{eq:optimal-bellman} gives
\begin{equation}
W(\pi^\star)=V_1^\star(s_1).
\label{eq:deterministic-markov-attains-value}
\end{equation}

We next show that no history-dependent randomized policy can do better.  Fix
an arbitrary $\varpi\in\Pi_{\mathrm{hist}}$, and let the coalition use the
single deterministic Markov strategy
$U_h=u_h^\star(S_h,A_h)$.  We claim that after every public history
$\eta_h$ ending in state $s$,
\begin{equation}
\E_{\eta_h}^{\varpi,u^\star}\!\left[\sum_{t=h}^H R_t\right]
\le V_h^\star(s).
\label{eq:universal-response-upper-bound}
\end{equation}
The claim is proved by backward induction.  It is immediate at $H+1$.
Assuming it at $h+1$, condition on the action drawn from
$\varpi_h(\cdot\mid\eta_h)$.  The induction hypothesis applies after every
next public history, so
\begin{align}
&\E_{\eta_h}^{\varpi,u^\star}\!\left[\sum_{t=h}^H R_t\right]
\nonumber\\
&\quad\le
\sum_{a\in\Aset}\varpi_h(a\mid\eta_h)
\E_{P_h^{\phys}(\cdot\mid
s,a\oplus_{B^\star}u_h^\star(s,a))}
\!\left[R+V_{h+1}^\star(S')\right]
\nonumber\\
&\quad=
\sum_{a\in\Aset}\varpi_h(a\mid\eta_h)Q_h^\star(s,a)
\le\max_{a\in\Aset}Q_h^\star(s,a)
=V_h^\star(s).
\label{eq:history-policy-upper-induction}
\end{align}
The penultimate inequality holds because a convex combination cannot exceed
its largest term.  At the initial state,
\begin{align}
\inf_{q\in\Omega(B^\star)}
\E^{\varpi,q}\!\left[\sum_{h=1}^H R_h\right]
&\le
\E^{\varpi,u^\star}\!\left[\sum_{h=1}^H R_h\right]
\le V_1^\star(s_1).
\label{eq:any-history-policy-upper-bound}
\end{align}
Combining this upper bound with
\eqref{eq:deterministic-markov-attains-value} and the inclusions
$
\Pi_{\mathrm{M,det}}
\subseteq\Pi_{\mathrm{M,stoch}}
\subseteq\Pi_{\mathrm{hist}}
$
proves the claim.
\end{proof}


\section{$s$-Rectangular Reduction}\label{app:s-rec}
We now consider a simultaneous-move information structure.  At a public
pre-action history $\eta_h$ ending in state $s_h$, the learner draws a
planned joint action
\[
  A_h\sim\varpi_h(\cdot\mid\eta_h),
\]
while, simultaneously, the Byzantine coalition draws an overwrite
\[
  U_h\sim q_h(\cdot\mid\eta_h),
  \qquad
  q_h(\cdot\mid\eta_h)\in\DeltaP(\Aset_{B^\star}).
\]
The two draws use conditionally independent private randomness.  The
coalition may know the learner's policy, and hence the distribution
$\varpi_h(\cdot\mid\eta_h)$, but it does not observe the realized planned
action $A_h$ before choosing $U_h$.  The executed action is
$A_h\oplus_{B^\star}U_h$, after which the public outcome is drawn from the
physical kernel.  Let $\Omega_{\rm sim}(B^\star)$ denote the class of all
randomized, history-dependent, nonanticipating overwrite strategies of this
form.

For a learner policy $\varpi\in\Pi_{\mathrm{hist}}$, define its
simultaneous-move security value by
\begin{equation}
W_{\rm sim}(\varpi)
\coloneqq
\inf_{q\in\Omega_{\rm sim}(B^\star)}
\E^{\varpi,q}\!\left[\sum_{h=1}^H R_h\right],
\qquad
v_{\rm sim}^\star
\coloneqq
\sup_{\varpi\in\Pi_{\mathrm{hist}}}W_{\rm sim}(\varpi).
\label{eq:simultaneous-security-values}
\end{equation}

\textit{The induced row tuples.}
Let $\mathcal Y\coloneqq[0,1]\times\Sset$.  For each
$(h,s,u)\in[H]\times\Sset\times\Aset_{B^\star}$, define the tuple of
public-outcome rows
\begin{equation}
\mathbf P^{u}_{h,s}
\coloneqq
\left(
P_h^{\phys}(\cdot\mid s,a\oplus_{B^\star}u)
\right)_{a\in\Aset}
\in\DeltaP(\mathcal Y)^{\Aset}.
\label{eq:simultaneous-pure-row-tuple}
\end{equation}
The simultaneous-move ambiguity set at $(h,s)$ is
\begin{align}
\Gamma^{B^\star,\rm sim}_{h,s}
&\coloneqq
\conv\!\left\{
\mathbf P^{u}_{h,s}:u\in\Aset_{B^\star}
\right\}
\nonumber\\
&=
\left\{
\left(
\sum_{u\in\Aset_{B^\star}}\lambda(u)
P_h^{\phys}(\cdot\mid s,a\oplus_{B^\star}u)
\right)_{a\in\Aset}
:
\lambda\in\DeltaP(\Aset_{B^\star})
\right\}.
\label{eq:simultaneous-state-row-set}
\end{align}
Thus a single coefficient vector $\lambda$ determines all action rows at a
given $(h,s)$.  This common-mixture constraint records precisely that the
coalition chooses its overwrite without seeing the learner's realized action.

The corresponding family of Markov public-outcome kernels is
\begin{equation}
\begin{aligned}
\mathfrak M_{\rm sim}(B^\star)
&\coloneqq
\left\{
Q=(Q_h)_{h=1}^H:
\bigl(Q_h(\cdot\mid s,a)\bigr)_{a\in\Aset}
\in\Gamma^{B^\star,\rm sim}_{h,s}\right.\\[-2pt]
&\hspace{7.5em}\left.
\text{ for every }(h,s)
\right\}
\\[-2pt]
&=
\prod_{h\in[H],\,s\in\Sset}
\Gamma^{B^\star,\rm sim}_{h,s}.
\end{aligned}
\label{eq:simultaneous-s-rectangular-family}
\end{equation}
Equation~\eqref{eq:simultaneous-s-rectangular-family} is a stagewise
$s$-rectangular product: the row tuple chosen at one stage-state pair does
not restrict the tuple chosen at another stage-state pair.  It is generally
not $(s,a)$-rectangular, because the rows corresponding to different
planned actions at the same $(h,s)$ must use the same overwrite mixture
$\lambda$.  Rectangularity here refers to independent \emph{feasibility} of
the row-tuple choices; it does not assert probabilistic independence of the
resulting trajectory outcomes.

\begin{proposition}[Exact $s$-rectangular public-law equivalence]
\label{prop:simultaneous-s-rectangular-equivalence}
Fix $B^\star\subseteq[n]$.  Under the simultaneous-move information
structure, the conditional public-outcome row tuples attainable by
randomized, history-dependent, nonanticipating overwrite strategies are
exactly the nonanticipating selectors of
$\Gamma^{B^\star,\rm sim}$.  More precisely, at every public history
$\eta_h$ ending in $s$, such a selector chooses, before observing
$A_h$, a tuple
\begin{equation}
\boldsymbol\sigma_h(\eta_h)
=
\bigl(\sigma_{h,a}(\cdot\mid\eta_h)\bigr)_{a\in\Aset}
\in\Gamma^{B^\star,\rm sim}_{h,s},
\label{eq:simultaneous-history-selector}
\end{equation}
and the row indexed by the subsequently realized action $A_h=a$ is
$\sigma_{h,a}(\cdot\mid\eta_h)$.

Furthermore, for every $p\in\DeltaP(\Aset)$ and every bounded
Borel-measurable function $g:\Aset\times\mathcal Y\to\mathbb R$,
\begin{align}
&\inf_{\boldsymbol\nu=(\nu_a)_{a\in\Aset}
      \in\Gamma^{B^\star,\rm sim}_{h,s}}
\sum_{a\in\Aset}p(a)\E_{\nu_a}[g(a,R,S')]
\nonumber\\
&\qquad=
\min_{u\in\Aset_{B^\star}}
\sum_{a\in\Aset}p(a)
\E_{P_h^{\phys}(\cdot\mid s,a\oplus_{B^\star}u)}
[g(a,R,S')].
\label{eq:simultaneous-linear-row-minimization}
\end{align}
In particular, against any fixed current learner distribution $p$, a pure
overwrite is a worst-case simultaneous one-step response, although a mixed
overwrite may be needed as part of a saddle-point strategy.
\end{proposition}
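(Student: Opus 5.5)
The plan follows the proof of Proposition~\ref{prop:fixed-identity-reduction}, with one change. The per-action affine map is replaced by a tuple-valued affine map, so that a single mixture vector serves all planned actions at once.

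\emph{From overwrites to selectors.} Fix $h$ and a public history $\eta_h$ ending in $s$. Under the simultaneous structure, $U_h\sim q_h(\cdot\mid\eta_h)$ is drawn independently of $A_h$ given $\eta_h$. Therefore, conditional on $A_h=a$, the public outcome has law
\[
\sigma_{h,a}(\cdot\mid\eta_h)=\sum_{u}q_h(u\mid\eta_h)\,P_h^{\phys}(\cdot\mid s,a\oplus_{B^\star}u).
\]
The key point is that the same coefficient vector $\lambda=q_h(\cdot\mid\eta_h)$ appears for every $a$. Hence the tuple lies in $\Gamma^{B^\star,\rm sim}_{h,s}$ by \eqref{eq:simultaneous-state-row-set}. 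It is chosen before $A_h$, since it depends only on $\eta_h$, and it is measurable because $q_h$ is a kernel over a finite set.

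\emph{From selectors to overwrites.} Consider the affine map
\[
T_{h,s}:\DeltaP(\Aset_{B^\star})\to\Gamma^{B^\star,\rm sim}_{h,s},\qquad \lambda\mapsto\Bigl(\sum_u\lambda(u)P_h^{\phys}(\cdot\mid s,a\oplus_{B^\star}u)\Bigr)_{a}.
\]
It is surjective by the definition of the convex hull. It is also continuous from a simplex into a product of weakly topologized spaces of probability measures on the Polish space $\mathcal Y$. As in the $(s,a)$ case, a measurable-selection theorem for continuous surjections between compact metric spaces gives a Borel right inverse. Composing this inverse with $\boldsymbol\sigma_h(\eta_h)$ yields a Borel kernel $q_h(\cdot\mid\eta_h)$ that does not depend on $A_h$, so it is an admissible simultaneous strategy.

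Because the overwrite is independent of $A_h$, the conditional law of the outcome given $(\eta_h,A_h=a)$ is exactly $\sigma_{h,a}(\cdot\mid\eta_h)$. Iterating the conditioning over $h=1,\dots,H$, under any learner policy, shows that the two descriptions induce the same public-trajectory law. I expect the only delicate step to be this measurability and independence bookkeeping. In particular, the right inverse must be applied to the whole tuple, not row by row, since row-by-row choices could produce different $\lambda$'s for different actions and would not be realizable by a blind coalition.

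\emph{Linear minimization.} Write any $\boldsymbol\nu\in\Gamma^{B^\star,\rm sim}_{h,s}$ as $T_{h,s}(\lambda)$. By Fubini and linearity,
\[
\sum_a p(a)\E_{\nu_a}[g(a,\cdot)]=\sum_u\lambda(u)\,\Phi(u),\qquad \Phi(u)\coloneqq\sum_a p(a)\E_{P_h^{\phys}(\cdot\mid s,a\oplus_{B^\star}u)}[g(a,\cdot)].
\]
Boundedness of $g$ makes all terms finite. This quantity is at least $\min_u\Phi(u)$, and the minimum is attained because $\Aset_{B^\star}$ is finite. The point mass $\lambda=\delta_{u^\star}$ on a minimizer $u^\star$ is feasible and attains the bound, which proves \eqref{eq:simultaneous-linear-row-minimization}.

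The closing remark follows directly: for fixed $p$ a pure overwrite is a one-step worst response. Mixed overwrites can still matter in the saddle-point sense, because the learner's $p$ is itself optimized against the coalition's mixture. This requires no further proof beyond pointing to the $s$-rectangular recursion \eqref{eq:app-s-rect}.
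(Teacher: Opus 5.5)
Your proposal is correct and follows the paper's proof essentially step for step: the same-coefficient observation for overwrite-induced tuples, a Borel right inverse of the tuple-valued affine map $T_{h,s}$ applied to the whole tuple, and linear minimization over the mixture simplex. One small point concerns your closing clause: the recursion \eqref{eq:app-s-rect} by itself does not show that a mixed overwrite can be necessary in a saddle point, so you need an explicit instance such as the matching-pennies example in Remark~\ref{rem:simultaneous-mixing-essential}.
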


\begin{proof}
We prove attainability in both directions and then establish the minimization
identity.

\emph{From overwrite strategies to row-tuple selectors.}
Fix a public history $\eta_h$ ending in state $s$.  Because the coalition
does not observe the current planned action, it must use one distribution
$q_h(\cdot\mid\eta_h)$ for all possible values of $A_h$.  Conditional on
$A_h=a$, the public-outcome law is therefore
\begin{equation}
Q_h^q(\cdot\mid\eta_h,a)
=
\sum_{u\in\Aset_{B^\star}}
q_h(u\mid\eta_h)
P_h^{\phys}(\cdot\mid s,a\oplus_{B^\star}u).
\label{eq:simultaneous-induced-action-row}
\end{equation}
The same coefficients $q_h(u\mid\eta_h)$ appear for every action $a$.
Consequently,
\[
\bigl(Q_h^q(\cdot\mid\eta_h,a)\bigr)_{a\in\Aset}
\in\Gamma^{B^\star,\rm sim}_{h,s}.
\]
Measurability in $\eta_h$ follows from measurability of the overwrite
kernel and finiteness of $\Aset_{B^\star}$.  Hence every feasible
simultaneous overwrite strategy induces a nonanticipating selector of the
statewise row-tuple family.

\emph{From row-tuple selectors to overwrite strategies.}
Write
\[
\Aset_{B^\star}=\{u_1,\ldots,u_m\},
\]
and define
\begin{align}
T_{h,s}:\DeltaP(\Aset_{B^\star})
&\longrightarrow\Gamma^{B^\star,\rm sim}_{h,s},
\nonumber\\
T_{h,s}(\lambda)
&\coloneqq
\sum_{j=1}^m\lambda_j\mathbf P^{u_j}_{h,s}.
\label{eq:simultaneous-row-tuple-map}
\end{align}
By the definition of the convex hull, $T_{h,s}$ is surjective.  Under the
weak topology on $\DeltaP(\mathcal Y)^{\Aset}$, it is a continuous map
between compact metric spaces.  Applying a standard Borel measurable-selection
theorem to the nonempty compact-valued correspondence
$\boldsymbol\nu\mapsto T_{h,s}^{-1}(\boldsymbol\nu)$ yields a Borel right
inverse of $T_{h,s}$.  Apply this right inverse to the selector in
\eqref{eq:simultaneous-history-selector} and use the resulting coefficient
vector as $q_h(\cdot\mid\eta_h)$.  Equation
\eqref{eq:simultaneous-induced-action-row} then agrees with the selected row
for every possible action.  Repeated composition of the behavioral kernels
and the physical outcome kernels shows that the overwrite strategy and the
row-tuple selector induce the same law of the complete public trajectory under
every learner policy.

\emph{Linear minimization.}
Every $\boldsymbol\nu\in\Gamma^{B^\star,\rm sim}_{h,s}$ can be written as
$\boldsymbol\nu=\sum_u\lambda(u)\mathbf P^u_{h,s}$ for some
$\lambda\in\DeltaP(\Aset_{B^\star})$.  Therefore,
\begin{align}
\sum_a p(a)\E_{\nu_a}[g(a,R,S')]
&=
\sum_u\lambda(u)
\sum_a p(a)
\E_{P_h^{\phys}(\cdot\mid s,a\oplus_{B^\star}u)}
[g(a,R,S')]
\nonumber\\
&\ge
\min_u
\sum_a p(a)
\E_{P_h^{\phys}(\cdot\mid s,a\oplus_{B^\star}u)}
[g(a,R,S')].
\label{eq:simultaneous-mixture-lower-bound}
\end{align}
Conversely, every pure tuple $\mathbf P^u_{h,s}$ belongs to the convex
hull.  Selecting a minimizing $u$, which exists because the overwrite
action set is finite, gives the reverse inequality.  This proves
\eqref{eq:simultaneous-linear-row-minimization}.
\end{proof}

\textit{Bellman evaluation of a fixed Markov policy.}
Let $\Pi_{\mathrm{M,stoch}}$ denote the randomized nonstationary Markov
learner policies.  For $\pi\in\Pi_{\mathrm{M,stoch}}$, define
\begin{align}
V_{H+1}^{\pi,\rm sim}(s)&=0,
\nonumber\\
V_h^{\pi,\rm sim}(s)
&=
\inf_{\boldsymbol\nu=(\nu_a)_a
      \in\Gamma^{B^\star,\rm sim}_{h,s}}
\sum_{a\in\Aset}\pi_h(a\mid s)
\E_{\nu_a}\!\left[R+V_{h+1}^{\pi,\rm sim}(S')\right]
\nonumber\\
&=
\min_{u\in\Aset_{B^\star}}
\sum_{a\in\Aset}\pi_h(a\mid s)
\E_{P_h^{\phys}(\cdot\mid s,a\oplus_{B^\star}u)}
\!\left[R+V_{h+1}^{\pi,\rm sim}(S')\right].
\label{eq:simultaneous-policy-bellman}
\end{align}
The second equality follows from
\eqref{eq:simultaneous-linear-row-minimization}.  We now verify that this
recursion equals the operational value against the entire full-history
response class.  For any $q\in\Omega_{\rm sim}(B^\star)$ and any public
history $\eta_h$, let
\begin{equation}
J_h^{\pi,q}(\eta_h)
\coloneqq
\E_{\eta_h}^{\pi,q}\!\left[\sum_{t=h}^H R_t\right].
\label{eq:simultaneous-fixed-policy-continuation}
\end{equation}
We claim that, for every $q$ and every $\eta_h$ ending in state $s$,
\begin{equation}
J_h^{\pi,q}(\eta_h)\ge V_h^{\pi,\rm sim}(s),
\label{eq:simultaneous-fixed-policy-lower-bound}
\end{equation}
and that equality is attained by one deterministic nonstationary Markov
overwrite strategy.

At $h=H+1$, both sides of
\eqref{eq:simultaneous-fixed-policy-lower-bound} are zero.  Suppose the claim
holds at stage $h+1$, and let $\eta_h$ end in $s$.  Conditioning on the
two simultaneous action draws and then on the public outcome yields
\begin{align}
J_h^{\pi,q}(\eta_h)
&=
\sum_{a,u}\pi_h(a\mid s)q_h(u\mid\eta_h)
\E_{P_h^{\phys}(\cdot\mid s,a\oplus_{B^\star}u)}
\!\left[R+J_{h+1}^{\pi,q}(\eta_h,a,R,S')\right]
\nonumber\\
&\ge
\sum_{u}q_h(u\mid\eta_h)
\sum_a\pi_h(a\mid s)
\E_{P_h^{\phys}(\cdot\mid s,a\oplus_{B^\star}u)}
\!\left[R+V_{h+1}^{\pi,\rm sim}(S')\right]
\nonumber\\
&\ge
\min_{u}
\sum_a\pi_h(a\mid s)
\E_{P_h^{\phys}(\cdot\mid s,a\oplus_{B^\star}u)}
\!\left[R+V_{h+1}^{\pi,\rm sim}(S')\right]
\nonumber\\
&=V_h^{\pi,\rm sim}(s).
\label{eq:simultaneous-fixed-policy-induction-step}
\end{align}
The first inequality applies the induction hypothesis after every possible
next public history; the second uses that an average is at least its smallest
term.  For every $(h,s)$, choose
\begin{equation}
u_h^\pi(s)
\in
\operatorname*{arg\,min}_{u\in\Aset_{B^\star}}
\sum_a\pi_h(a\mid s)
\E_{P_h^{\phys}(\cdot\mid s,a\oplus_{B^\star}u)}
\!\left[R+V_{h+1}^{\pi,\rm sim}(S')\right].
\label{eq:simultaneous-deterministic-best-response}
\end{equation}
If the coalition uses $U_h=u_h^\pi(S_h)$, then backward induction makes
both inequalities in
\eqref{eq:simultaneous-fixed-policy-induction-step} equalities.  Consequently,
\begin{equation}
W_{\rm sim}(\pi)=V_1^{\pi,\rm sim}(s_1).
\label{eq:simultaneous-fixed-policy-value}
\end{equation}
Thus every fixed
stochastic Markov learner policy has a deterministic nonstationary Markov
worst-case response $u_h^\pi(s)$.  Unlike the observed-action setting, this
response cannot depend on the current realized planned action.

\begin{theorem}[Stochastic Markov sufficiency under simultaneous moves]
\label{thm:simultaneous-markov-sufficiency}
Suppose the learner and coalition move simultaneously at every stage and
there are no cross-stage overwrite budgets or other coupling constraints.
Define
\begin{align}
V_{H+1}^{\star,\rm sim}(s)&=0,
\nonumber\\
V_h^{\star,\rm sim}(s)
&=
\max_{p\in\DeltaP(\Aset)}
\min_{\lambda\in\DeltaP(\Aset_{B^\star})}
\sum_{a\in\Aset}
\sum_{u\in\Aset_{B^\star}}
p(a)\lambda(u)
G_h^\star(s,a,u),
\label{eq:simultaneous-optimal-bellman}
\end{align}
where
\begin{equation}
G_h^\star(s,a,u)
\coloneqq
\E_{P_h^{\phys}(\cdot\mid s,a\oplus_{B^\star}u)}
\!\left[R+V_{h+1}^{\star,\rm sim}(S')\right].
\label{eq:simultaneous-bellman-matrix}
\end{equation}
Then
\begin{equation}
v_{\rm sim}^\star
=
\max_{\pi\in\Pi_{\mathrm{M,stoch}}}W_{\rm sim}(\pi)
=V_1^{\star,\rm sim}(s_1).
\label{eq:simultaneous-policy-class-equivalence}
\end{equation}
Moreover, the simultaneous security game admits a saddle pair consisting of
a stochastic nonstationary Markov learner policy and a stochastic
nonstationary Markov overwrite policy.  In general, neither member of this
saddle pair can be required to be deterministic.
\end{theorem}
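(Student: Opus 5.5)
The plan mirrors the proof of Theorem~\ref{thm:markov-sufficiency}, with the per-action minimum replaced by a finite matrix game at each $(h,s)$. First I check that the recursion \eqref{eq:simultaneous-optimal-bellman} is well defined. By backward induction, $V_{h+1}^{\star,\rm sim}$ is bounded, so $G_h^\star(s,\cdot,\cdot)$ is a finite $|\Aset|\times|\Aset_{B^\star}|$ payoff matrix. Von Neumann's minimax theorem then gives a value and optimal mixed strategies $p_h^\star(\cdot\mid s)\in\DeltaP(\Aset)$ and $\lambda_h^\star(\cdot\mid s)\in\DeltaP(\Aset_{B^\star})$, and the max--min in \eqref{eq:simultaneous-optimal-bellman} equals the min--max. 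Let $\pi^\star$ be the stochastic Markov policy $\pi^\star_h(\cdot\mid s)=p_h^\star(\cdot\mid s)$, and let $q^\star$ be the Markov overwrite $q^\star_h(\cdot\mid\eta_h)=\lambda_h^\star(\cdot\mid s_h)$.

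\emph{Lower bound.} By \eqref{eq:simultaneous-fixed-policy-value}, $W_{\rm sim}(\pi^\star)=V_1^{\pi^\star,\rm sim}(s_1)$. I show by backward induction that $V_h^{\pi^\star,\rm sim}=V_h^{\star,\rm sim}$. Assuming this at $h+1$, the recursion \eqref{eq:simultaneous-policy-bellman} gives $V_h^{\pi^\star,\rm sim}(s)=\min_u\sum_a p_h^\star(a\mid s)G_h^\star(s,a,u)$. Since $p_h^\star$ is a maximin strategy, this equals the game value $V_h^{\star,\rm sim}(s)$; a minimum over pure $u$ equals a minimum over mixed $\lambda$ by linearity. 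Hence $W_{\rm sim}(\pi^\star)=V_1^{\star,\rm sim}(s_1)$, so $v^\star_{\rm sim}\ge \max_{\pi\in\Pi_{\rm M,stoch}} W_{\rm sim}(\pi)\ge V_1^{\star,\rm sim}(s_1)$.

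\emph{Upper bound against arbitrary history-dependent learners.} This step carries the weight and follows \eqref{eq:universal-response-upper-bound}. Fix $\varpi\in\Pi_{\rm hist}$ and let the coalition play $q^\star$, which is feasible in $\Omega_{\rm sim}(B^\star)$ because it does not depend on the realized action. I claim that for every public history $\eta_h$ ending in $s$,
\[
\E^{\varpi,q^\star}_{\eta_h}\Bigl[\textstyle\sum_{t\ge h}R_t\Bigr]\le V_h^{\star,\rm sim}(s).
\]
For the induction step, the action and overwrite are drawn independently given $\eta_h$. Applying the hypothesis after every next history bounds the left side by $\sum_{a,u}\varpi_h(a\mid\eta_h)\lambda^\star_h(u\mid s)G_h^\star(s,a,u)$. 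Because $\lambda_h^\star$ is a minimax strategy, this is at most $\max_p\sum_{a,u}p(a)\lambda^\star_h(u\mid s)G^\star_h(s,a,u)=V_h^{\star,\rm sim}(s)$. The point needing care is that $\varpi_h(\cdot\mid\eta_h)$ may depend on the full history, but it is still just some $p\in\DeltaP(\Aset)$ at that node, so the minimax guarantee applies pointwise. Evaluating at $h=1$ gives $W_{\rm sim}(\varpi)\le V_1^{\star,\rm sim}(s_1)$, which closes \eqref{eq:simultaneous-policy-class-equivalence}.

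\emph{Saddle pair and non-determinism.} The same two inductions show that $(\pi^\star,q^\star)$ is a saddle pair. The first gives $\inf_q J(\pi^\star,q)=V_1^{\star,\rm sim}(s_1)$, and the second gives $\sup_\varpi J(\varpi,q^\star)\le V_1^{\star,\rm sim}(s_1)$; hence $J(\pi^\star,q^\star)=V_1^{\star,\rm sim}(s_1)$. For the claim that neither member can be required to be deterministic, I use a horizon-one, single-state counterexample. Take two binary agents with $B^\star=\{2\}$ and reward $\one\{a_1=u\}$, a matching-pennies payoff after the overwrite. The value is $1/2$. Any deterministic learner policy is driven to $0$ by a suitable $u$, and any deterministic overwrite lets the learner reach $1$. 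So both saddle strategies must be mixed.
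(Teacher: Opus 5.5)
Your proof is correct and follows the paper's argument. Like the paper, you use stagewise von Neumann saddle strategies $p_h^\star,\lambda_h^\star$, a backward induction showing that $\pi^\star$ guarantees $V_h^{\star,\rm sim}$, a second induction showing that $q^\star$ holds every history-dependent learner to $V_h^{\star,\rm sim}$, and a one-stage matching-pennies instance for the necessity of mixing. The differences are cosmetic: you get the lower bound from the already-established identity \eqref{eq:simultaneous-fixed-policy-value} by checking $V_h^{\pi^\star,\rm sim}=V_h^{\star,\rm sim}$, whereas the paper runs the induction directly against an arbitrary $q\in\Omega_{\rm sim}(B^\star)$; and your counterexample uses reward $\one\{a_1=u\}$ instead of $\one\{x\ne u\}$.
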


\begin{proof}
For each $(h,s)$, the expression in
\eqref{eq:simultaneous-optimal-bellman} is the value of a finite zero-sum
matrix game with payoff matrix
$(G_h^\star(s,a,u))_{a,u}$.  Von Neumann's minimax theorem gives
\begin{align}
V_h^{\star,\rm sim}(s)
&=
\max_{p\in\DeltaP(\Aset)}
\min_{\lambda\in\DeltaP(\Aset_{B^\star})}
\sum_{a,u}p(a)\lambda(u)G_h^\star(s,a,u)
\nonumber\\
&=
\min_{\lambda\in\DeltaP(\Aset_{B^\star})}
\max_{p\in\DeltaP(\Aset)}
\sum_{a,u}p(a)\lambda(u)G_h^\star(s,a,u).
\label{eq:simultaneous-stage-minimax}
\end{align}
Because both simplices are compact, there exist saddle distributions
$p_h^\star(\cdot\mid s)$ and
$\lambda_h^\star(\cdot\mid s)$.  Let $\pi^\star$ and $q^\star$ be
the corresponding stochastic nonstationary Markov policies.

We first prove that $\pi^\star$ guarantees
$V_h^{\star,\rm sim}(s)$ against every randomized history-dependent
response.  Fix any $q\in\Omega_{\rm sim}(B^\star)$.  We claim that after
every public history $\eta_h$ ending in $s$,
\begin{equation}
\E_{\eta_h}^{\pi^\star,q}
\!\left[\sum_{t=h}^H R_t\right]
\ge V_h^{\star,\rm sim}(s).
\label{eq:simultaneous-maximin-induction}
\end{equation}
The claim is immediate at $h=H+1$.  If it holds at $h+1$, then
conditioning on the simultaneous action draws and using the induction
hypothesis after every possible next public history gives
\begin{align}
\E_{\eta_h}^{\pi^\star,q}
\!\left[\sum_{t=h}^H R_t\right]
&\ge
\sum_{a,u}
p_h^\star(a\mid s)q_h(u\mid\eta_h)
G_h^\star(s,a,u)
\nonumber\\
&\ge
\min_{\lambda\in\DeltaP(\Aset_{B^\star})}
\sum_{a,u}p_h^\star(a\mid s)\lambda(u)
G_h^\star(s,a,u)
\nonumber\\
&=V_h^{\star,\rm sim}(s).
\label{eq:simultaneous-maximin-step}
\end{align}
Thus $W_{\rm sim}(\pi^\star)\ge V_1^{\star,\rm sim}(s_1)$.

We next prove that the Markov response $q^\star$ holds every randomized
history-dependent learner policy below the same value.  Fix
$\varpi\in\Pi_{\mathrm{hist}}$.  We claim that after every public history
$\eta_h$ ending in $s$,
\begin{equation}
\E_{\eta_h}^{\varpi,q^\star}
\!\left[\sum_{t=h}^H R_t\right]
\le V_h^{\star,\rm sim}(s).
\label{eq:simultaneous-minimax-induction}
\end{equation}
Again, the claim is immediate at $H+1$.  Assuming it at $h+1$ and
writing $p(a)=\varpi_h(a\mid\eta_h)$, we obtain
\begin{align}
\E_{\eta_h}^{\varpi,q^\star}
\!\left[\sum_{t=h}^H R_t\right]
&\le
\sum_{a,u}p(a)\lambda_h^\star(u\mid s)
G_h^\star(s,a,u)
\nonumber\\
&\le
\max_{\widetilde p\in\DeltaP(\Aset)}
\sum_{a,u}\widetilde p(a)\lambda_h^\star(u\mid s)
G_h^\star(s,a,u)
\nonumber\\
&=V_h^{\star,\rm sim}(s).
\label{eq:simultaneous-minimax-step}
\end{align}
At the initial state, these two induction bounds imply
\begin{align*}
W_{\rm sim}(\pi^\star)
&\ge V_1^{\star,\rm sim}(s_1),
\\
W_{\rm sim}(\varpi)
&\le
\E^{\varpi,q^\star}\!\left[\sum_{h=1}^H R_h\right]
\le V_1^{\star,\rm sim}(s_1)
\qquad
\text{for every }\varpi\in\Pi_{\mathrm{hist}}.
\end{align*}
Taking the supremum over $\varpi$, and observing that
$\pi^\star\in\Pi_{\mathrm{M,stoch}}\subseteq\Pi_{\mathrm{hist}}$, proves
\eqref{eq:simultaneous-policy-class-equivalence}.  The same inequalities give
the saddle property of $(\pi^\star,q^\star)$.
\end{proof}

\begin{remark}[Why randomization may be essential]
\label{rem:simultaneous-mixing-essential}
The simultaneous-move result establishes sufficiency of stochastic Markov
policies, not deterministic Markov policies.  Consider a one-stage problem
with one honest binary action $x\in\{0,1\}$, one overwritten binary action
$u\in\{0,1\}$, and deterministic reward
$R=\mathbf 1\{x\ne u\}$.  A learner that chooses $x$ uniformly guarantees
reward $1/2$, whereas every deterministic learner action can be matched by
the coalition and receives reward zero.  Conversely, a uniform overwrite is
needed to hold every learner action distribution below $1/2$.  Hence the
simultaneous game has value $1/2$, but neither player has a deterministic
saddle strategy.  This differs from the observed-planned-action setting,
where the coalition moves after seeing the learner's realized action and a
deterministic learner policy is sufficient.
\end{remark}

\section{Proofs for Section \ref{sec:hardness}}

\begin{theorem}[Restatement of Theorem~\ref{thm:security-impossibility}] 
For every possibly randomized online algorithm and every $K\ge1$, there are
two horizon-one, one-state Byzantine team MDPs with two binary-action agents,
the same fixed and even known identity $B^\star=\{2\}$, and the same
deterministic realized overwrite strategy, such that their public transcript
laws are identical and
\begin{equation}
  \Reg_K=0
  \quad\text{almost surely in both environments},
\label{eq:zero-return-regret-hardness}
\end{equation}
while
\begin{equation}
  \E_0\Regsec_K+\E_1\Regsec_K=K.
\label{eq:security-sum-lower-bound}
\end{equation}
Consequently, one environment satisfies
\begin{equation}
  \E_\theta\Regsec_K\ge K/2,
  \qquad
  \E_\theta D_K=\E_\theta\Regsec_K\ge K/2.
\label{eq:linear-security-lower-bound}
\end{equation}
No algorithm can therefore guarantee $o(K)$ security regret uniformly over
arbitrary feasible realized responses from public trajectory feedback alone.
\end{theorem}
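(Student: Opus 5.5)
The plan is an explicit two-instance construction. Agent 1 is honest with action $x\in\{0,1\}$ and agent 2 is Byzantine with $B^\star=\{2\}$, so the planned action is $a=(x,y)$ and the executed action is $(x,u)$. The construction has two deterministic reward tables on one state with $H=1$. The realized overwrite will be the constant $u\equiv 0$, which is deterministic, history-independent, and feasible in both instances. Under it every public transcript depends only on $r(x,0)$, so I will make $r_\theta(x,0)$ the same in both instances and put all the difference in the unobserved column $u=1$. Concretely:
\[
r_\theta(x,0)=1\ \text{for all }x,\qquad r_\theta(x,1)=\mathbf 1\{x=\theta\}.
\]
Instance $\theta$ then rewards the honest action $x=\theta$ against the attack $u=1$.

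The steps, in order, are as follows. First, compute the security values using the one-stage case of Theorem~\ref{thm:markov-sufficiency} together with the row-minimum identity \eqref{eq:convex-row-minimum}. For a Markov policy $\pi$ with honest marginal $p=\pi(x=1)$ one gets $W_\theta(\pi)=\sum_x \pi(x)\min_u r_\theta(x,u)$, so $W_0(\pi)=1-p$ and $W_1(\pi)=p$. The optimum is $v^\star_\theta=1$, attained by the deterministic policy $x=\theta$. Second, compute the factual returns: $m_k=1$ under $u\equiv0$ in both instances, so $\Reg_K=\sum_k(1-1)=0$ pathwise. Proposition~\ref{prop:security-decomposition} then gives $\Regsec_K=D_K$.

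Third, establish transcript indistinguishability. At every episode the planned action is drawn from the algorithm's kernel given the past transcript and its private randomness, and the reward is deterministically $1$ in both instances. An induction over episodes then shows that the joint law of $(\pi_k,a_k,r_k)_{k\le K}$, including the learner's randomness, is the same under $\theta=0$ and $\theta=1$. Hence the laws of the marginals $p_k$ coincide as well.

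Fourth, sum over the two instances:
\[
\E_0\Regsec_K+\E_1\Regsec_K=\sum_k \E\bigl[(1-(1-p_k))+(1-p_k)\bigr]=\sum_k \E[p_k+1-p_k]=K.
\]
This uses the common law of the $p_k$ established in the third step. It gives \eqref{eq:security-sum-lower-bound}, and the $\max\ge K/2$ conclusion follows by averaging.

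The main obstacle is the third step. One must be careful that the factual response $q_k[\pi]$ is identical and feasible in both instances, which holds because $u\equiv0$ ignores the policy. One must also be careful that indistinguishability covers the full filtration $\mathcal F_{k-1}$ and the choice of $\pi_k$, not only the rewards. Since all rewards equal $1$ surely in both instances, conditioning episode by episode is enough, and no measure-theoretic subtlety arises beyond writing the joint law as a product of the algorithm's kernels with a point mass on the reward. A minor point is that $\pi_k$ may be an arbitrary randomized Markov policy, that is, a distribution over the four joint actions. Only its honest marginal enters $W_\theta$, because $\min_u r_\theta(x,u)$ does not depend on the planned $y$. I will note this explicitly.
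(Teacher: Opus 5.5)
Your proposal is correct and follows the paper's proof essentially step for step: the same reward tables $r_\theta(x,u)=\one\{u=0\text{ or }x=\theta\}$ and the same constant overwrite $u\equiv 0$. It also has the same security values $W_\theta(\pi_k)$ equal to the probability that $\pi_k$ plans $x=\theta$, the same $v^\star_\theta=1$ and $m_k=1$, and the same episodewise identity $(1-p_{k,0})+(1-p_{k,1})=1$. The only differences are cosmetic: the paper couples the two experiments through a common algorithm seed to obtain the pathwise identity $\Regsec_{K,0}+\Regsec_{K,1}=K$, where you use equality of transcript laws (enough for the expectation statement), and it computes the one-stage pointwise minimum directly rather than citing the Markov-sufficiency and row-minimum results.
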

\begin{proof} 
Fix an arbitrary, possibly randomized online learning algorithm and an integer
$K\ge1$.  We construct two environments
$\mathcal M_0$ and $\mathcal M_1$, indexed by
$\theta\in\{0,1\}$.

\textit{Step 1: common state, action, and attacker structure.}
Both environments have horizon $H=1$, the singleton public state space
$\Sset=\{s\}$, and two agents with action spaces
\[
\Aset_1=\Aset_2=\{0,1\}.
\]
The planned joint action is denoted by $a=(x,y)$, where
$x\in\Aset_1$ and $y\in\Aset_2$.  In both environments, the Byzantine
identity is
\[
B^\star=\{2\},
\]
and this identity may be disclosed to the learner.  If the coalition chooses
an overwrite $u\in\Aset_{B^\star}=\{0,1\}$, the executed action is
\[
a\oplus_{B^\star}u=(x,u).
\]
Thus the learner's planned second coordinate $y$ is always replaced and has
no effect on the physical outcome.

The realized responder is the same deterministic, nonanticipating strategy in
both environments:
\begin{equation}
\bar q_1(u\mid s,a)=\one\{u=0\}
\qquad
\text{for every planned action }a.
\label{eq:appendix-benign-response}
\end{equation}
This strategy is feasible and observes no future information.

\textit{Step 2: the two physical kernels.}
For $\theta\in\{0,1\}$, define the deterministic reward associated with an
executed action $(x,u)$ by
\begin{equation}
r_\theta(x,u)
\coloneqq
\one\{u=0\ \text{or}\ x=\theta\}.
\label{eq:appendix-hard-reward}
\end{equation}
The physical public-outcome kernel in environment $\mathcal M_\theta$ is the
point mass
\begin{equation}
P_{1,\theta}^{\phys}
\bigl((R,S')=(r_\theta(x,u),s)\mid s,(x,u)\bigr)=1.
\label{eq:appendix-hard-physical-kernel}
\end{equation}
The reward lies in $[0,1]$, as required.  The next state is immaterial
because the horizon is one.

\textit{Step 3: exact equality of public transcript laws.}
Under the realized responder $\bar q$, one has $U_{k,1}=0$ in every
episode.  Therefore, for both $\theta=0$ and $\theta=1$,
\begin{equation}
R_{k,1}=r_\theta(X_{k,1},0)=1
\qquad\text{almost surely},
\label{eq:appendix-observed-reward-one}
\end{equation}
regardless of the learner's planned action.

For completeness, couple the two experiments using the same internal random
seed for the online algorithm.  We prove inductively that their complete public
transcripts are equal.  Before episode one, the transcripts are empty.  If the
transcripts through episode $k-1$ are equal, the algorithm receives the same
input and, under the common random seed, selects the same episode-
$k$ behavioral policy and the same planned action in the two experiments.
The responder then selects $u=0$ in both experiments, and
\eqref{eq:appendix-observed-reward-one} gives the same reward and next state.
Thus the transcripts through episode $k$ are equal.  Induction over
$k=1,\ldots,K$ proves pathwise equality under the coupling, and hence the
two public transcript laws are identical.

In particular, if $\varpi_k\in\DeltaP(\Aset_1\times\Aset_2)$ denotes the
possibly random behavioral policy deployed in episode $k$, then the same
$\varpi_k$ is deployed in both coupled experiments.  Define its marginal
probability of choosing first coordinate $x=\theta$ by
\begin{equation}
p_{k,\theta}
\coloneqq
\sum_{y\in\{0,1\}}\varpi_k((\theta,y)\mid s).
\label{eq:appendix-policy-marginal}
\end{equation}
For every realization of $\varpi_k$,
\begin{equation}
p_{k,0}+p_{k,1}=1.
\label{eq:appendix-marginals-sum-one}
\end{equation}

\textit{Step 4: operational security value of an arbitrary policy.}
Fix $\theta$ and an arbitrary one-stage policy
$\varpi\in\DeltaP(\Aset_1\times\Aset_2)$.  Because the coalition observes
the realized planned action before overwriting, the smallest reward attainable
after a plan $(x,y)$ is
\begin{align}
\min_{u\in\{0,1\}}r_\theta(x,u)
&=
\min_{u\in\{0,1\}}\one\{u=0\ \text{or}\ x=\theta\}
\nonumber\\
&=\one\{x=\theta\}.
\label{eq:appendix-pointwise-worst-reward}
\end{align}
Indeed, if $x=\theta$, both overwrites yield reward one; if
$x\ne\theta$, the pure overwrite $u=1$ yields reward zero.  Randomizing
the overwrite cannot produce a smaller expectation than this pointwise
minimum.  It follows that
\begin{align}
W_\theta(\varpi)
&=
\sum_{x,y}\varpi((x,y)\mid s)
\min_{u\in\{0,1\}}r_\theta(x,u)
\nonumber\\
&=
\sum_{y\in\{0,1\}}\varpi((\theta,y)\mid s).
\label{eq:appendix-policy-security-value}
\end{align}
Choosing the deterministic policy whose first coordinate is always
$x=\theta$ gives security value one.  Since the stage reward is at most one,
we conclude that
\begin{equation}
v_\theta^\star=1.
\label{eq:appendix-optimal-security-value}
\end{equation}
Applying \eqref{eq:appendix-policy-security-value} to the deployed policy
$\varpi_k$ gives
\begin{equation}
W_\theta(\varpi_k)=p_{k,\theta}.
\label{eq:appendix-deployed-security-value}
\end{equation}

\textit{Step 5: return regret is zero.}
By \eqref{eq:appendix-observed-reward-one}, the conditional expected factual
return in every episode is
\begin{equation}
m_{k,\theta}
=
\E_\theta[R_{k,1}\mid
\F_{k-1}\vee\sigma(\varpi_k)]
=1.
\label{eq:appendix-factual-return}
\end{equation}
Combining this identity with
\eqref{eq:appendix-optimal-security-value} yields, almost surely in each
environment,
\begin{equation}
\Reg_{K,\theta}
=
\sum_{k=1}^K(v_\theta^\star-m_{k,\theta})
=0.
\label{eq:appendix-zero-return-regret}
\end{equation}

\textit{Step 6: the security regrets add to $K$.}
Under the coupling constructed above, the episode-$k$ security-regret gaps
in the two environments satisfy, pathwise,
\begin{align}
&\bigl(v_0^\star-W_0(\varpi_k)\bigr)
+\bigl(v_1^\star-W_1(\varpi_k)\bigr)
\nonumber\\
&\qquad=
(1-p_{k,0})+(1-p_{k,1})
=1,
\label{eq:appendix-one-step-security-sum}
\end{align}
where the last equality uses
\eqref{eq:appendix-marginals-sum-one}.  Summing over episodes gives the stronger
coupling identity
\begin{equation}
\Reg^{\rm sec}_{K,0}+\Reg^{\rm sec}_{K,1}=K
\label{eq:appendix-pathwise-security-sum}
\end{equation}
almost surely under the coupling.
Taking expectations under the two marginal experiments proves
\begin{equation}
\E_0\Reg^{\rm sec}_{K,0}
+\E_1\Reg^{\rm sec}_{K,1}
=K.
\label{eq:appendix-expected-security-sum}
\end{equation}
Therefore at least one $\theta\in\{0,1\}$ satisfies
\begin{equation}
\E_\theta\Reg^{\rm sec}_{K,\theta}\ge \frac K2.
\label{eq:appendix-one-environment-linear-security}
\end{equation}

Finally, in either environment,
\eqref{eq:appendix-factual-return} and \eqref{eq:appendix-deployed-security-value} imply
\begin{align}
D_{K,\theta}
&=
\sum_{k=1}^K
\bigl(m_{k,\theta}-W_\theta(\varpi_k)\bigr)
\nonumber\\
&=
\sum_{k=1}^K(1-p_{k,\theta})
=\Reg^{\rm sec}_{K,\theta}.
\label{eq:appendix-response-gap-equals-security-regret}
\end{align}
Thus the same environment selected in
\eqref{eq:appendix-one-environment-linear-security} also satisfies
$\E_\theta D_{K,\theta}\ge K/2$.  If an algorithm guaranteed
$o(K)$ expected security regret uniformly over all feasible realized
responses and environments, both terms on the left side of
\eqref{eq:appendix-expected-security-sum} would be $o(K)$, contradicting their
exact sum $K$.  This completes the proof.
\end{proof}

\section{Proofs for Section \ref{sec:learning}}
\label{app:learning-proofs}

This appendix proves Theorems~\ref{thm:stage-tied-estimation} and
\ref{thm:stage-tied-regret}.  The argument builds on four components of
\cite{appel2025regret}: their partial-halfspace surrogate,
directed-Hellinger wealth-based estimator, robust DEC bound, and E2D master
inequality.  We cite a reusable result at the point where it is invoked.  We
rederive every step changed by our private coding channel, original payoff
scale, shared layer-level certificate, full terminal outcome, or approximate
measurable selection.  The rate-critical new statistical ingredient is the
layer-level calibration argument: it charges one rounded certificate per
stage rather than one per predecessor state.

\subsection{Original-scale coding and the Appel--Kosoy adapter}
\label{app:learning-original-scale}

Fix the unknown but episode-invariant identity \(B^\star\).  For this appendix,
write
\[
\Gamma_{h,s,a}\coloneqq\Gamma_{h,s,a}^{B^\star}
\subseteq\Delta([0,1]\times\mathcal S)
\]
for the public ambiguity row obtained from the fixed-identity reduction.  Each
row is nonempty, compact, and convex.  The actual randomized history-dependent
responder selects a conditional public-outcome law from this row after observing
the current planned action.

Adapting the Bernoulli reward conversion in Appendix~H of
\cite{appel2025regret}, we use success probability \(R/H\) and score a
success as \(HY\).  The resulting expectation remains on the original task
scale.  The deferred learner-private sampling and filtration argument needed
for a full-history response are specific to our setting and are proved in
Lemma~\ref{lem:app-deferred-coding}.

Define the private coding kernel \(C_H\) by
\begin{align}
C_H((1,s')\mid r,s')&=r/H,
\nonumber\\
C_H((0,s')\mid r,s')&=1-r/H,
\qquad r\in[0,1].
\label{eq:app-bernoulli-kernel}
\end{align}
For a law \(p\), let \(C_{H\#}p\) be its image through this kernel.  The
coded ambiguity row is
\begin{equation}
\widetilde\Gamma_{h,s,a}
\coloneqq
\left\{C_{H\#}p:p\in\Gamma_{h,s,a}\right\}
\subseteq\Delta(\mathcal O).
\label{eq:app-converted-row}
\end{equation}

\begin{lemma}[Deferred private coding and filtrations]
\label{lem:app-deferred-coding}
Enlarge the probability space by independent variables
\(U_{k,h}\sim\operatorname{Unif}[0,1]\), independent of the players' random
seeds and of the physical process conditional on those seeds, and set
\begin{equation}
Y_{k,h}=\mathbf 1\{U_{k,h}\le R_{k,h}/H\}.
\label{eq:app-deferred-bit-realization}
\end{equation}
The experiment in which all \(U_{k,h}\) are sampled after episode \(k\) has
the same joint law of public trajectories, coded outcomes, and future learner
decisions as the experiment in which \(U_{k,h}\) is sampled immediately after
\(R_{k,h}\), kept hidden from both players during the episode, and revealed to
the learner only after that episode.  In the latter representation,
conditioning the analysis on any coded past still leaves the next coded
post-action law in \(\widetilde\Gamma_{h,s,a}\).

More precisely, let \(\mathcal G_{k,h}^{\rm adv}\) contain the public
pre-action history, the coalition's random seed, and any episode-start side
information permitted by the theorem, but not the auxiliary variables \(U\)
or the private bits.  Let \(\mathcal F_k^{\rm L}\) (denoted
\(\mathcal F_k\) in Section~\ref{sec:learning}) be the learner's filtration,
which also contains the coded bits after the end of each episode.  A response
may be an arbitrary randomized full-history kernel measurable with respect to
\(\mathcal G_{k,h}^{\rm adv}\) and the realized planned action.  It need not be
Markov and may use all past public rewards.  Enlarging an \emph{analysis}
filtration to contain learner-private variables is only a conditioning device
and does not make those variables available to the response; conditional laws
under such an enlargement are still barycenters of feasible row laws.
\end{lemma}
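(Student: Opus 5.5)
The plan is a coupling argument on an enlarged probability space, followed by a conditional-independence check and then a barycenter argument.

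\emph{Step 1: one common space.} Build both experiments on a single space carrying the learner's seed, the coalition's seed, the physical outcome randomness, and an i.i.d.\ array $(U_{k,h})$ independent of everything else. In both experiments every bit is the same deterministic function $Y_{k,h}=\mathbf 1\{U_{k,h}\le R_{k,h}/H\}$. The two experiments then differ only in \emph{when} $U_{k,h}$ is nominally drawn: immediately after $R_{k,h}$ but hidden, or after episode $k$.

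\emph{Step 2: identical within-episode dynamics.} During episode $k$, neither the learner's in-episode kernel, nor the response kernel (measurable with respect to $\mathcal G^{\rm adv}_{k,h}$ and the planned action), nor the physical kernel reads $U_{k,\cdot}$. I will verify this by induction over $h$. Conditional on the pre-episode $\sigma$-field, the joint law of $(A_{k,h},R_{k,h},S_{k,h+1})_{h}$ is produced by the same kernel composition in both experiments, and $U_{k,\cdot}$ is independent of that trajectory. Hence the joint law of trajectory and bits is the same product formula in both orders.

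\emph{Step 3: identical future decisions.} The learner's post-episode update and its next policy law depend only on $\mathcal F^{\rm L}_k$, which contains the public transcript and the bits. Induction over episodes therefore gives equal joint laws of public trajectories, coded outcomes, and future learner decisions. The responder's filtration $\mathcal G^{\rm adv}$ never contains $U$ or $Y$ in either representation, so feasibility of the response class is unchanged.

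\emph{Step 4: the coded law stays in $\widetilde\Gamma_{h,s,a}$.} Work in the ``sampled immediately, hidden'' representation and fix an analysis $\sigma$-field $\mathcal E$. It may contain learner-private variables, including past bits and past $U$'s, together with the public pre-action history ending in $s$ and the planned action $a$. I will argue in three moves.
\begin{itemize}
\item Conditional on $\mathcal G^{\rm adv}_{k,h}\vee\mathcal E\vee\sigma(a)$, the overwrite is drawn from $q_h(\cdot\mid\eta_h,a)$. The past $U$'s are independent of the current overwrite and of the physical draw given the public history and seeds.
\item By \eqref{eq:overwrite-induced-public-law}, the resulting conditional outcome law lies in $\Gamma_{h,s,a}$. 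Applying the pushforward $C_{H\#}$, which is affine, gives a law in $\widetilde\Gamma_{h,s,a}$, because the current $U_{k,h}$ is fresh and independent.
\item Conditioning down to the coarser field $\mathcal E\vee\sigma(a)$ averages these laws over the adversary seed. The result is a barycenter of elements of the compact convex set $\widetilde\Gamma_{h,s,a}$, so it lies in that set. This step needs closedness of barycenters (a Choquet/Bochner-type argument in the weak topology) and measurability of the averaged kernel.
\end{itemize}

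\emph{Main obstacle.} I expect Step 4 to be the crux. The difficulty is that conditioning on learner-private bits could in principle correlate with the adversary's seed through the shared past rewards. The way around it is to note that, given the past public rewards, the past bits are independent of the adversary seed. So conditioning on them leaves the seed law determined by the public past alone. The barycenter is therefore taken over a legitimate mixture of feasible rows.
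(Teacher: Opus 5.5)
Your proposal is correct and follows essentially the paper's route. Deferral changes nothing because no player reads the bits before the episode ends, which gives equality of joint laws by induction over episodes, and the row claim is the same tower-property barycenter argument in the closed convex row $\widetilde\Gamma_{h,s,a}$. The conditional independence of past bits from the coalition seed, which you flag as the crux, is true but not needed, since any mixture of feasible coded rows stays in the row; and because $\mathcal O$ is finite, the barycenter step is elementary and needs no Choquet-type machinery.
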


\begin{proof}
Conditional on the physical trajectory, equation
\eqref{eq:app-deferred-bit-realization} is a product of Bernoulli kernels whose
parameters are fixed by the observed rewards.  Sampling the independent
uniform variables sequentially or deferring all of them therefore gives the
same conditional law.  The bits are unavailable until the episode ends in
both implementations, so neither the within-episode learner actions nor the
response kernels change; after their common reveal, every future learner
decision has the same conditional distribution.  This proves equality of the
full joint laws by induction over episodes.

For the final assertion, fix a coded history ending in \((h,s,a)\).  Conditional
on each compatible physical public history, the response selects some law
\(p\in\Gamma_{h,s,a}\), and the conditional coded law is
\(C_{H\#}p\in\widetilde\Gamma_{h,s,a}\).  Conditioning only on the coded
history averages these laws over the compatible physical histories.  Because
\(\widetilde\Gamma_{h,s,a}\) is convex and closed, that barycenter remains in
the same row.  This argument permits arbitrary public-history dependence of
the response and imposes no Markov restriction.
\end{proof}

\begin{lemma}[Private coding preserves Markov values in task units]
\label{lem:app-conversion}
The set in equation~\eqref{eq:app-converted-row} is nonempty, compact, and
convex.  Every actual nonanticipating public-law selector induces, after the
private Bernoulli draw, a nonanticipating selector of
\(\widetilde\Gamma\).  For every function
\(v:\mathcal S\to\mathbb R\) and every \(p\in\Gamma_{h,s,a}\),
\begin{equation}
\mathbb E_{C_{H\#}p}
[HY+v(S')]
=
\mathbb E_p[R+v(S')].
\label{eq:app-conversion-expectation}
\end{equation}
Consequently, when the coded reward is scored as \(HY\), every
nonstationary Markov policy has exactly the same robust and factual value as
in the physical model, and the optimal security values are equal.
\end{lemma}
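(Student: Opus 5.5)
The argument has three parts: structural properties of the coded row, transport of selectors, and the expectation identity, which then yields the value equalities by Bellman recursion.

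\emph{Structure of the coded row.} The pushforward map $p\mapsto C_{H\#}p$ is affine in $p$. This holds because $C_H$ is a Markov kernel and the coded law of $(1,s')$ is $\int (r/H)\,p(\mathrm dr,\{s'\})$. It is also continuous from the weak topology on $\Delta([0,1]\times\mathcal S)$ to the finite-dimensional simplex $\Delta(\mathcal O)$, since $r\mapsto r/H$ is bounded and continuous on $[0,1]$ and $\mathcal S$ is finite. The row $\Gamma_{h,s,a}$ is nonempty, compact, and convex. Its image under a continuous affine map is therefore nonempty, compact, and convex, which gives the first sentence of the lemma.

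\emph{Selectors.} The selector statement is exactly the final assertion of Lemma~\ref{lem:app-deferred-coding}, so I will simply cite it. Given any actual nonanticipating selector $\sigma_h(\cdot\mid\eta_h,a)\in\Gamma_{h,s,a}$, the coded law $C_{H\#}\sigma_h(\cdot\mid\eta_h,a)$ lies in $\widetilde\Gamma_{h,s,a}$. It is measurable in $(\eta_h,a)$ because it is a composition of kernels. If one conditions only on coded histories, the resulting law is a barycenter of such laws, and it stays in the row by convexity and closedness.

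\emph{Expectation identity.} For \eqref{eq:app-conversion-expectation}, I will compute by conditioning on $(R,S')\sim p$. Since $S'$ passes through the kernel unchanged, $\mathbb E[HY+v(S')\mid R,S']=H\cdot R/H+v(S')$, and taking expectations under $p$ gives the identity.

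\emph{Values.} For a Markov policy $\pi$, I will run the backward recursion \eqref{eq:policy-bellman} in both models. Suppose the continuation vectors agree at stage $h+1$. By the expectation identity with $v=V^\pi_{h+1}$, the map $p\mapsto C_{H\#}p$ is a surjection of $\Gamma_{h,s,a}$ onto $\widetilde\Gamma_{h,s,a}$ that preserves the objective. Hence the row infima coincide, and by induction the robust values agree at every stage. The robust value of $\pi$ is then the same in both models, and Lemma~\ref{lem:app-rect-dp} identifies it with the selector value. The same surjection argument, with $\max_a$ taken outside, gives $\widetilde V^\star_h=V^\star_h$, so the optimal security values are equal by Theorem~\ref{thm:markov-sufficiency}. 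For factual values under the actual responder, linearity of expectation applies: $\mathbb E[HY_{k,h}]=\mathbb E[R_{k,h}]$ stagewise, conditional on any learner-side history. This uses Lemma~\ref{lem:app-deferred-coding} to ensure that the bits do not alter the response.

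\emph{Main obstacle.} The only delicate point is continuity of the pushforward in the weak topology, which is what makes the coded row compact. This requires the integrand $r\mapsto r/H$ to be continuous and bounded, and it is, so no measurable-selection issues arise here.
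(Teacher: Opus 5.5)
Your proposal is correct and follows essentially the same route as the paper. You use an affine, weakly continuous pushforward for the row properties, cite Lemma~\ref{lem:app-deferred-coding} for selector transport, condition on $(R,S')$ for the identity, and apply the backward Bellman recursion (with $\max_a$ outside for optimal values) for the value equalities. Your remark that $p\mapsto C_{H\#}p$ maps $\Gamma_{h,s,a}$ onto $\widetilde\Gamma_{h,s,a}$ while preserving the Bellman objective is exactly the step the paper compresses into ``applying this row identity backward.''
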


\begin{proof}
The map \(p\mapsto C_{H\#}p\) is affine and continuous in the
finite-dimensional weak topology.  The continuous image of a nonempty compact
set is nonempty and compact, and the affine image of a convex set is convex.
This proves the first assertion.

Lemma~\ref{lem:app-deferred-coding} shows that the coded conditional laws form
a nonanticipating selector of \(\widetilde\Gamma\), including when the original
response depends on the complete public reward history.

To prove equation~\eqref{eq:app-conversion-expectation}, condition first on
\((R,S')=(r,s')\).  By equation~\eqref{eq:app-bernoulli-kernel},
\[
\mathbb E[HY+v(S')\mid R=r,S'=s']
=r+v(s').
\]
Integrating this identity with respect to \(p\) gives
equation~\eqref{eq:app-conversion-expectation}.  Applying this row identity
backward shows task-scale Bellman-value equality for every Markov policy.
Maximizing the same recursions gives equality of optimal values, and the
exact selector reduction gives operational equality.  We deliberately do
not claim policy-by-policy equality for arbitrary reward-history-dependent
team policies: coarsening a real reward history can change such a policy.
This restriction is harmless because Theorem~\ref{thm:markov-sufficiency}
shows Markov policies suffice for the benchmark and Section~\ref{sec:learning}
deploys only Markov policies.
\end{proof}

\begin{lemma}[Original-scale Appel--Kosoy surrogate]
\label{lem:app-surrogate-applicability}
The coded model, evaluated with task-scale reward \(HY\), admits the
surrogate \(N^\circ\) in equation~\eqref{eq:learning-stage-class}.  It
contains every coded physical row at the recommended actions, contains
every coded feasible selector, is \(H\)-bounded, and has task-scale optimal
value \(v^\star\).
\end{lemma}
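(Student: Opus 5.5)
The plan is to verify the four claims of Lemma~\ref{lem:app-surrogate-applicability} directly for the candidate
\[
N^\circ_h(s,a)=\Psi_H\bigl(V^\star_{h+1},V^\star_h(s)\bigr)\ \text{if } a=a^\star_h(s),
\qquad
N^\circ_h(s,a)=\Delta(\mathcal O)\ \text{otherwise}.
\]
Here $V^\star$ is the robust optimal value of the physical $(s,a)$-rectangular model from Theorem~\ref{thm:markov-sufficiency}, and $a^\star_h$ is a greedy action from \eqref{eq:optimal-markov-learner-action}. The continuation vector $V^\star_{h+1}$ does not depend on $s$, so $N^\circ$ belongs to the stage-tied class \eqref{eq:learning-stage-class}. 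Lemma~\ref{lem:app-conversion} converts every physical statement into the coded model scored by $HY$.

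\textbf{Containment.} Fix $(h,s)$ and $a=a^\star_h(s)$, and let $\tilde p=C_{H\#}p$ with $p\in\Gamma_{h,s,a}$.
\begin{itemize}[leftmargin=1.2em]
\item By \eqref{eq:app-conversion-expectation}, $\mathbb E_{\tilde p}[HY+V^\star_{h+1}(S')]=\mathbb E_p[R+V^\star_{h+1}(S')]$.
\item By \eqref{eq:convex-row-minimum}, the right side is at least $Q^\star_h(s,a)$.
\item By greediness, $Q^\star_h(s,a)=V^\star_h(s)$.
\end{itemize}
Hence $\tilde p\in\Psi_H(V^\star_{h+1},V^\star_h(s))$. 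At non-recommended actions the row $\Delta(\mathcal O)$ trivially contains everything. By Lemma~\ref{lem:app-deferred-coding}, every coded feasible selector chooses, at each coded history, a law in $\widetilde\Gamma_{h,s,a}\subseteq N^\circ_h(s,a)$. So every coded feasible selector is a selector of $N^\circ$.

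\textbf{$H$-boundedness.} I would take the coded Markov row selector induced by the deterministic Markov overwrite $u^\star_h(s,a)$ of \eqref{eq:universal-markov-overwrite}. This selector is feasible for $N^\circ$ by the containment step. Its conditional task-scale suffix return from any starting triple $(h,s,a)$ is
\[
\mathbb E\Bigl[\textstyle\sum_{t\ge h} HY_t\Bigr]=\mathbb E\Bigl[\textstyle\sum_{t\ge h} R_t\Bigr]\le H-h+1\le H,
\]
because rewards lie in $[0,1]$ and the coded score has the same mean.

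\textbf{Value.} This is the step I expect to be the main obstacle. The optimal robust value $v^\star_{N^\circ}$ of the set-valued candidate has to be interpreted as in the Appel--Kosoy partial-halfspace surrogate: the learner maximizes and nature picks any row in $N^\circ$. I would compute it by backward induction on the Bellman recursion of Appendix~\ref{sec:robust-mdps}, applied to the rectangular family $N^\circ$, and show $V^{N^\circ}_h=V^\star_h$ pointwise.

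\emph{Upper bound.} At a non-recommended action, nature may place all mass on the outcome $(Y=0,S'=s')$ that minimizes $V^\star_{h+1}$. This gives value $\min V^\star_{h+1}$ in the recursion, which is at most $V^\star_h(s)$ because $V^\star_h(s)\ge\min_{s'}V^\star_{h+1}(s')$. Rewards are nonnegative, so the worst overwrite cannot push below the continuation minimum. This check must be made carefully.

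\emph{Lower bound.} At the recommended action, by the induction hypothesis the inner infimum over $\Psi_H$ of $\mathbb E[HY+V^\star_{h+1}(S')]$ equals exactly the threshold $V^\star_h(s)$. The threshold is attained because the coded true rows are nonempty members of the halfspace and the halfspace has a boundary point. So the max over actions is $V^\star_h(s)$ and is achieved by $a^\star_h(s)$.

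There is one more point: nature in $N^\circ$ could use laws outside the physical rows, giving continuation values below $V^\star_{h+1}$ at later stages. I would handle this with the monotonicity of Lemma~\ref{lem:app-operator}, together with the fact that the inductive identity holds at every state.

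Concluding, $v^\star_{N^\circ}=V^\star_1(s_1)$, and Theorem~\ref{thm:markov-sufficiency} gives $V^\star_1(s_1)=v^\star$. This value statement is Proposition~\ref{prop:learning-surrogate}. If the cited surrogate uses $H$-bounded selectors in its definition of $v^\star_N$, I would restrict the infimum accordingly. The witness selector from the boundedness step shows the restriction does not change the value.
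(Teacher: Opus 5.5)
Your proof is correct, and it takes a genuinely different route from the paper.

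\textbf{The paper's route.} The paper never computes $v^\star_{N^\circ}$ itself. It rescales to unit coordinates $\bar V^\star_h=V^\star_h/H$ and checks Appel--Kosoy's 1-boundedness hypothesis: every coded row has $\mathbb E Y\le 1/H$, so a fixed Markov selector has suffix reward at most $(H-h+1)/H\le 1$. It then identifies their partial halfspace at the recommended action with $\Psi_H(V^\star_{h+1},V^\star_h(s))$. Row containment, selector containment and value equality are inherited from the surrogate construction in their Corollary~3, and everything is multiplied back by $H$.

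\textbf{What the import costs.} Appel--Kosoy collapse all stage-$H$ next states to a terminal state. The paper must therefore check that the reward-only terminal halfspace, using $\bar V^\star_{H+1}\equiv 0$, is the inverse image of its coarsening. It must also note that the Bernoulli conversion is the identity on the already-binary $Y$.

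\textbf{What your route does instead.} You run a direct backward induction on the rectangular Bellman recursion for $N^\circ$, working on the full outcome space $\{0,1\}\times\mathcal S$. No terminal bookkeeping arises. The single inequality $V^\star_h(s)\ge\min_{s'}V^\star_{h+1}(s')$, which follows from $R\ge 0$, does two jobs at once:
\begin{itemize}
\item it shows the non-recommended rows $\Delta(\mathcal O)$ cannot beat $V^\star_h(s)$;
\item together with nonemptiness from your containment step, it shows the infimum over the halfspace equals its threshold.
\end{itemize}
This matches the backward recursion the paper uses for $v^\star_N$ in Lemma~\ref{lem:app-class-regularity}.

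\textbf{Two redundancies.} Your hedge about restricting to $H$-bounded selectors is unnecessary. Your concern that nature may leave the physical rows at later stages needs no separate monotonicity argument. The induction hypothesis $V^{N^\circ}_{h+1}=V^\star_{h+1}$ already quantifies over all of nature's later choices in $N^\circ$.

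\textbf{What each route buys.} The paper's route yields an explicit certificate that Appel--Kosoy's surrogate hypotheses hold for the coded model. Yours yields a self-contained verification of exactly the properties consumed downstream: membership in $\mathfrak H^{(H)}_{\rm st}$, selector feasibility, and $v^\star_{N^\circ}=v^\star$. These are the inputs to Theorem~\ref{thm:stage-tied-estimation} and Lemma~\ref{lem:app-approximate-e2d}, and your route needs no import.
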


\begin{proof}
The coded state, action, and outcome spaces are finite.  By
Lemma~\ref{lem:app-conversion}, its rows are nonempty, compact, convex, and
state--action rectangular.  To invoke the 1-bounded statement of
\cite{appel2025regret}, use \(Y\) as its reward coordinate.  Every coded
physical row has expected \(Y\le1/H\).  Choose one such row at every
\((h,s,a)\) to obtain a fixed feasible Markov selector \(\sigma^\circ\).  Its
conditional expected suffix reward is at most
\[
\sum_{t=h}^{H}\frac1H=\frac{H-h+1}{H}\le1.
\]
The surrogate construction used in the proof of their Corollary~3 therefore
applies in the unit certificate coordinates
\[
\bar V_h^\star\coloneqq V_h^\star/H.
\]
At a recommended action, its partial halfspace is
\[
\left\{\nu:
\mathbb E_\nu[Y+\bar V_{h+1}^\star(S')]
\ge\bar V_h^\star(s)\right\}
=
\Psi_H(V_{h+1}^\star,V_h^\star(s)).
\]
Thus the set-valued surrogate is exactly \(N^\circ\).  Their construction
gives row and selector containment and equality of optimal values in unit
coordinates.  In particular, row containment makes \(\sigma^\circ\) feasible
for \(N^\circ\), and its conditional expected task-scale suffix return is at
most \(H-h+1\le H\).  Hence the surrogate is \(H\)-bounded in the sense of
Section~\ref{sec:learning-target}.
Multiplying its reward functional and value by \(H\) gives
task-scale value \(v^\star\), with no change to the outcome laws or
halfspaces.

At stage \(H\), \cite{appel2025regret} identify all physical next states
with one terminal state.  Define
\(T_{\rm term}(y,s')=(y,s_\dagger)\).  For the particular surrogate constructed
above, \(\bar V_{H+1}^\star\equiv0\), so its recommended terminal row is the
reward-only halfspace
\(C_c=\{p:\mathbb E_p[Y]\ge c\}\).  Consequently,
\[
T_{\rm term\#}C_c
=\left\{q:\mathbb E_q[Y]\ge c\right\},
\qquad
C_c=\{p:T_{\rm term\#}p\in T_{\rm term\#}C_c\}.
\]
Every member of the coarsened halfspace can be lifted by attaching any fixed
next state.  Thus, for this value-preserving surrogate, coarsening and lifting
preserve row containment, value equality, and \(1\)-boundedness.  A generic
stage-tied terminal row need not be the inverse image of its coarsening; the
DEC comparison below instead pushes each generic row forward and uses
Hellinger data processing.  Because our auxiliary reward \(Y\) is already
binary, the additional Bernoulli conversion in their modified loss is the
identity.  This verifies the imported surrogate hypotheses; the coding,
task-scale conversion, and terminal-state comparison are the adapters proved
here.
\end{proof}

\subsection{Imported Hellinger facts and certificate rounding}
\label{app:learning-hellinger}

For unit certificate coordinates \(\bar v\in[0,1]^{\mathcal S}\) and
\(\bar c\in[0,1]\), abbreviate
\[
\Psi(\bar v,\bar c)
\coloneqq
\left\{\nu:\mathbb E_\nu[Y+\bar v(S')]\ge\bar c\right\}
=\Psi_H(H\bar v,H\bar c).
\]

For a nonempty compact convex set \(C\subseteq\Delta(\mathcal O)\) and a
full-support law \(p\), let
\[
D_{\rm H}^2(p,q)
\coloneqq1-\sum_{o\in\mathcal O}\sqrt{p(o)q(o)},
\qquad
q_C(p)\in\operatorname*{arg\,min}_{q\in C}D_{\rm H}^2(p,q),
\quad
d_C(p)=D_{\rm H}^2(p,q_C(p)).
\]

\begin{lemma}[Hellinger projection and rounding facts]
\label{lem:app-imported-hellinger}
The following facts hold.
\begin{enumerate}[label=\textup{(\roman*)},leftmargin=2em]
\item If \(p\) has full support, then \(q_C(p)\) is unique and varies
continuously with \(p\) on the full-support simplex.
\item With \(q=q_C(p)\), \(d=d_C(p)\), and
\(b(o)=\sqrt{q(o)/p(o)}+d\),
\begin{equation}
\mathbb E_p b=1,
\qquad
\mathbb E_\nu[b^{-1}]\le1-d
\quad\text{for every }\nu\in C.
\label{eq:app-projection-bet-properties}
\end{equation}
\item The map \(p\mapsto d_C(p)\) is convex on the simplex.  At every
full-support point it is differentiable along feasible simplex directions,
and its one-sided directional derivative satisfies
\begin{equation}
D\{2d_C\}(p)[\nu-p]
=-\sum_{o\in\mathcal O}
\sqrt{\frac{q(o)}{p(o)}}\{\nu(o)-p(o)\},
\label{eq:app-directional-derivative}
\end{equation}
and hence
\begin{equation}
\mathbb E_{o\sim\nu}[b(o)-1]
=-D\{2d_C\}(p)[\nu-p].
\label{eq:app-bet-derivative}
\end{equation}
\item If \(v^\uparrow\ge v\) coordinatewise,
\(\|v^\uparrow-v\|_\infty\le\varepsilon_v\),
\(c^\downarrow\le c\), and
\(c-c^\downarrow\le\varepsilon_c\), then
\begin{equation}
\Psi(v,c)\subseteq\Psi(v^\uparrow,c^\downarrow)
\label{eq:app-rounded-inclusion}
\end{equation}
and, for every \(p\in\Delta(\mathcal O)\),
\begin{equation}
d_{\Psi(v,c)}(p)
\le
2d_{\Psi(v^\uparrow,c^\downarrow)}(p)
+2(\varepsilon_v+\varepsilon_c).
\label{eq:app-rounded-loss}
\end{equation}
\end{enumerate}
\end{lemma}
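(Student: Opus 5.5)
The plan is to prove the four items separately, each directly from the definition
\(D_{\rm H}^2(p,q)=1-\sum_o\sqrt{p(o)q(o)}\).
Throughout I write \(F_p(q)\coloneqq\sum_o\sqrt{p(o)q(o)}\), so that \(d_C(p)=1-\max_{q\in C}F_p(q)\).

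For \textup{(i)}, fix a full-support \(p\). Then \(F_p\) is a positively weighted sum of the strictly concave functions \(q(o)\mapsto\sqrt{q(o)}\), so it is strictly concave on the simplex. Its maximizer over the nonempty compact convex set \(C\) is therefore unique. Since \((p,q)\mapsto F_p(q)\) is jointly continuous and \(C\) is fixed and compact, Berge's maximum theorem, combined with uniqueness, makes \(p\mapsto q_C(p)\) continuous on the full-support simplex.

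For \textup{(ii)}, the first identity is immediate:
\(\mathbb E_p b=\sum_o\sqrt{p(o)q(o)}+d=(1-d)+d=1\).
For the second, I will use the first-order optimality of \(q=q_C(p)\) along the segment \(q_t=(1-t)q+t\nu\), where \(\nu\in C\).
\begin{itemize}
\item If some \(o\) has \(q(o)=0<\nu(o)\), then the right derivative of \(F_p(q_t)\) at \(t=0\) is \(+\infty\), which contradicts optimality. Hence \(\nu\ll q\).
\item With \(\nu\ll q\), the right derivative at \(0\) is \(\tfrac12\sum_o\sqrt{p(o)/q(o)}\,(\nu(o)-q(o))\le0\). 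This gives \(\sum_o\nu(o)\sqrt{p(o)/q(o)}\le\sum_o\sqrt{p(o)q(o)}=1-d\).
\item Finally \(b\ge\sqrt{q/p}\) pointwise, so \(b^{-1}\le\sqrt{p/q}\) on the support of \(\nu\), and therefore \(\mathbb E_\nu[b^{-1}]\le1-d\).
\end{itemize}

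For \textup{(iii)}, convexity of \(d_C\) follows from two facts. First, \((p,q)\mapsto\sqrt{pq}\) is jointly concave (a geometric mean), so \(D_{\rm H}^2\) is jointly convex. Second, partial minimization of a jointly convex function over the convex set \(C\) preserves convexity in \(p\).

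For the directional derivative I will apply Danskin's theorem. The minimizer is unique by \textup{(i)}, and \(p\mapsto D_{\rm H}^2(p,q)\) is smooth at full-support \(p\). Hence the derivative of \(d_C\) equals the partial derivative in \(p\) at the fixed point \(q=q_C(p)\), which is \(-\tfrac12\sum_o\sqrt{q(o)/p(o)}\,(\nu(o)-p(o))\); doubling this gives \eqref{eq:app-directional-derivative}. For \eqref{eq:app-bet-derivative}, expand
\[
\mathbb E_\nu[b-1]=\sum_o\nu(o)\sqrt{q(o)/p(o)}+d-1
\]
and subtract \(\sum_o p(o)\sqrt{q(o)/p(o)}=1-d\). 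This leaves exactly \(\sum_o(\nu(o)-p(o))\sqrt{q(o)/p(o)}\), which is the negative of the right-hand side of \eqref{eq:app-directional-derivative}.

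For \textup{(iv)}, the inclusion is a chain: if \(\nu\in\Psi(v,c)\), then \(\mathbb E_\nu[Y+v^\uparrow]\ge\mathbb E_\nu[Y+v]\ge c\ge c^\downarrow\). The loss bound \eqref{eq:app-rounded-loss} is the main obstacle. The approach is to let \(q'\) be the Hellinger projection of \(p\) onto the larger set \(\Psi(v^\uparrow,c^\downarrow)\), and then repair \(q'\) into some \(q''\in\Psi(v,c)\) with \(\TV(q',q'')\le\varepsilon_v+\varepsilon_c\). The Bellman-functional deficit of \(q'\) with respect to \((v,c)\) is at most \(\varepsilon_v+\varepsilon_c\). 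Given such a repair, the conclusion follows from the approximate triangle inequality
\(D_{\rm H}^2(p,q'')\le2D_{\rm H}^2(p,q')+2D_{\rm H}^2(q',q'')\)
together with \(D_{\rm H}^2\le\TV\).

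What I would try first for the repair is to move mass of \(q'\) toward outcomes with \(Y=1\) and next state \(\arg\max v\). This raises \(\mathbb E[Y+v(S')]\) by an amount proportional to the mass moved. The difficulty is that the available gain per unit of moved mass can be small when \(q'\) already nearly saturates \(Y+v\), so the crude bound ``moved mass \(\le\) deficit'' is not automatic. If the direct repair does not close, I would fall back on the corresponding rounding lemma of \cite{appel2025regret}. That lemma is stated in the same unit certificate coordinates, \(\bar v\in[0,1]^{\mathcal S}\) and \(\bar c\in[0,1]\), and I would check that its constant \(2(\varepsilon_v+\varepsilon_c)\) transfers verbatim to \(\Psi(\bar v,\bar c)=\Psi_H(H\bar v,H\bar c)\).
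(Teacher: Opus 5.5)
Your argument for (ii) is the paper's own: first a boundary qualification showing that every $\nu\in C$ vanishes wherever $q$ vanishes, then the first-order condition on the support of $q$, then $b^{-1}\le\sqrt{p/q}$. For (i) and (iii) you take a more self-contained route than the paper, which simply imports Lemmas~1, 2 and~4 of \cite{appel2025regret}. You get uniqueness and continuity from strict concavity of $q\mapsto\sum_o\sqrt{p(o)q(o)}$ plus Berge. You get convexity from joint convexity of $D_{\rm H}^2$ plus partial minimization. You get the directional derivative from Danskin's theorem at the unique minimizer. These steps are correct: Danskin applies because $\partial_{p(o)}D_{\rm H}^2(p,q)=-\tfrac12\sqrt{q(o)/p(o)}$ is jointly continuous near a full-support $p$. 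They buy independence from the cited paper's conventions. For (iv) the paper also just cites \cite[Lemmas~17 and~5]{appel2025regret}, so your fallback coincides with it. Your direct route, however, does close, and you can finish it instead of falling back.

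The missing observation is which mass to move. Do not mix $q'$ toward $(1,\arg\max v)$. Move only mass sitting on outcomes with $Y=0$, for instance by flipping $(0,s')\mapsto(1,s')$ at the same next state. Each unit flipped raises $\mathbb E[Y+v(S')]$ by exactly $1$. So flipping mass equal to the deficit $e\coloneqq\bigl(c-\mathbb E_{q'}[Y+v(S')]\bigr)^+\le\varepsilon_v+\varepsilon_c$ suffices. If instead $q'(Y=0)<e$, flipping all of it gives a law with $Y\equiv1$, which already lies in $\Psi(v,c)$ because $v\ge0$ and $c\le1$. Either way the repaired $q''\in\Psi(v,c)$ satisfies $\TV(q',q'')\le\varepsilon_v+\varepsilon_c$. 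Your chain then gives the bound:
\begin{itemize}
\item $D_{\rm H}^2(p,q'')\le2D_{\rm H}^2(p,q')+2D_{\rm H}^2(q',q'')$, since $\sqrt{D_{\rm H}^2}$ is a metric and $(a+b)^2\le2a^2+2b^2$;
\item $D_{\rm H}^2\le\TV$, since $\sqrt{xy}\ge\min\{x,y\}$;
\item $D_{\rm H}^2(p,q')=d_{\Psi(v^\uparrow,c^\downarrow)}(p)$, since $q'$ is a minimizer over a compact set; full support of $p$ is not needed.
\end{itemize}
The small-gain obstruction you anticipated is real only for mixing, which wastes mass on outcomes that already have $Y=1$. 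For example, with $v\equiv0$ and $c=1$, mixing needs weight $1$ however small the deficit is.
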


\begin{proof}
Items (i) and (iii) specialize
\cite[Lemmas~1, 2, and~4]{appel2025regret} to the directed squared-Hellinger
convention.  Under that convention,
their point-to-set loss is exactly \(d_C(p)\); no factor of two or reversal of
the directed loss is introduced.

For item (ii), the affinity identity gives
\(\mathbb E_pb=\sum_o\sqrt{p(o)q(o)}+d=1\).  The projection first-order
condition requires a short boundary qualification.  If \(q(o_0)=0\) and
some \(\nu\in C\) had \(\nu(o_0)>0\), then
\(q_t=(1-t)q+t\nu\in C\).  At coordinate \(o_0\), its affinity with the
full-support law \(p\) gains
\(\sqrt{t\,p(o_0)\nu(o_0)}\); changes on coordinates where \(q\) is positive
are \(O(t)\), and gains at its other zero coordinates are nonnegative.
Thus \(q_t\) would have strictly larger affinity for all sufficiently small
\(t>0\), contradicting the definition of \(q\).  Hence
\begin{equation}
q(o)=0\quad\Longrightarrow\quad \nu(o)=0
\quad\text{for every }\nu\in C.
\label{eq:app-projection-common-support}
\end{equation}
On the support of \(q\), differentiating the affinity in every feasible
direction \(\nu-q\) now gives
\[
\sum_{o:q(o)>0}\nu(o)\sqrt{p(o)/q(o)}
\le
\sum_{o:q(o)>0}\sqrt{p(o)q(o)}=1-d.
\]
Since \(b(o)^{-1}\le\sqrt{p(o)/q(o)}\) on this support and
equation~\eqref{eq:app-projection-common-support} removes all remaining
coordinates from \(\mathbb E_\nu[b^{-1}]\), the second inequality in
equation~\eqref{eq:app-projection-bet-properties} follows.

Item (iv) combines the rounding result and directed squared-triangle
inequality in \cite[Lemmas~17 and~5]{appel2025regret}.  The
upward/downward rounding directions
are exactly those required by that result.
\end{proof}

We will also use two standard wealth facts.  Applying the exponential
supermartingale inequality of \cite[Lemma~A.4]{foster2021statistical} to
\(X_k=-\log b_k\) and rearranging shows that, for positive adapted
multipliers \(b_k\), with
\(\lambda_k=\mathbb E[b_k^{-1}\mid\mathcal F_{k-1}]\), one has, with
probability at least \(1-\delta\),
\begin{equation}
\sum_{k=1}^K-\log\lambda_k
\le
\log(1/\delta)+\sum_{k=1}^K\log b_k.
\label{eq:app-reciprocal-martingale}
\end{equation}
Second, the normalized multiplicative-weights recursion in
\cite[Lemma~6]{appel2025regret}, together with
\(\zeta_{K+1}(B)\le1\), gives, for a bettor \(B\) with positive prior
\(\zeta_1(B)\),
\begin{equation}
\sum_{k=1}^K\log b_{k,B}
\le
\log\frac1{\zeta_1(B)}+\sum_{k=1}^K\log Z_k,
\label{eq:app-wealth-identity}
\end{equation}
where \(Z_k\) is the episode-\(k\) market normalizer.  We use these two cited
facts without reproducing their proofs.

\subsection{Complete construction of the stage-tied market}
\label{app:learning-market}

The construction below adapts the custom tabular wealth-based estimator in
Appendices~L--N of \cite{appel2025regret}.  We retain their
directed-Hellinger fragment bets, calibration transport, full-support
reference bets, return-overprediction bet, backward row minimization, and
multiplicative wealth updates.  We change the statistical unit from a
statewise certificate to one shared layer-level certificate and use the
private original-scale code from Lemma~\ref{lem:app-deferred-coding}.
Because these changes alter both the bettor residuals and their entropy, the
construction and its bounds are proved below rather than quoted from their
statewise theorem.

Equip the compact policy space \(\Pi\) with its finite-dimensional product
topology.  Fix an episode \(k\), its pre-episode bettor posterior \(\zeta_k\),
and a queried policy \(\pi\).  We now define the exact backward predictor
\(M_{k,\pi}\).

Set
\(
\varepsilon_v=\varepsilon_c=\min\{1,8S/K\}
\).
Let \(G_v,G_c\subset[0,1]\) be uniform grids, including both endpoints,
whose mesh widths are at most \(\varepsilon_v\) and \(\varepsilon_c\),
respectively.  A stage-\(h\) certificate is
\begin{equation}
g=(h,a,v,c)\in\mathcal G_h
\coloneqq
\{h\}\times\mathcal A^{\mathcal S}
\times G_v^{\mathcal S}\times G_c^{\mathcal S}.
\label{eq:learning-certificate}
\end{equation}
The recommendation map \(a\) and threshold map \(c\) range over predecessor
states, while the continuation vector \(v\) is shared by the entire layer.
Since a scalar grid of mesh width at most \(\varepsilon\) can be chosen with
at most \(2+\varepsilon^{-1}\) points,
\begin{equation}
\log|\mathcal G_h|
\le
S\log A
+S\log(2+\varepsilon_v^{-1})
+S\log(2+\varepsilon_c^{-1}).
\label{eq:learning-certificate-entropy}
\end{equation}

For a certificate \(g=(h,a,v,c)\in\mathcal G_h\), write
\begin{align*}
a_g(s)&\coloneqq a(s),\\
C_g(s)&\coloneqq\Psi(v,c(s)),\\
x_g^{M,\pi}(s)
&\coloneqq
\pi_h(a_g(s)\mid s)
d_{C_g(s)}(M_h(s,a_g(s))).
\end{align*}
Define calibration potentials backward from the target layer by
\begin{align}
U_{g,h}^{M,\pi}(s)
&\coloneqq x_g^{M,\pi}(s),
\nonumber\\
\Phi_{g,h}^{M,\pi}(s,a')
&\coloneqq x_g^{M,\pi}(s),
\nonumber\\
\Phi_{g,j}^{M,\pi}(s,a')
&\coloneqq
\mathbb E_{(y,s')\sim M_j(s,a')}
[U_{g,j+1}^{M,\pi}(s')],
\qquad j<h,
\nonumber\\
U_{g,j}^{M,\pi}(s)
&\coloneqq
\sum_{a'\in\mathcal A}\pi_j(a'\mid s)
\Phi_{g,j}^{M,\pi}(s,a'),
\qquad j<h.
\label{eq:app-calibration-potentials}
\end{align}
Then
\(U_{g,1}^{M,\pi}(s_1)=\mathbb E_{M,\pi}[x_g^{M,\pi}(S_h)]\).

For the optimism bettor, define
\begin{align}
U_{\bullet,H+1}^{M,\pi}(s)&\coloneqq0,
\nonumber\\
\Phi_{\bullet,j}^{M,\pi}(s,a)
&\coloneqq
\mathbb E_{(y,s')\sim M_j(s,a)}
[y+U_{\bullet,j+1}^{M,\pi}(s')],
\nonumber\\
U_{\bullet,j}^{M,\pi}(s)
&\coloneqq
\sum_{a\in\mathcal A}\pi_j(a\mid s)
\Phi_{\bullet,j}^{M,\pi}(s,a).
\label{eq:app-pessimism-potentials}
\end{align}
Thus \(U_{\bullet,1}^{M,\pi}(s_1)\) is the predicted auxiliary return.

For every bettor \(B\), row \((j,s,a)\), and candidate law
\(\mu\in\Delta(\mathcal O)\), define its convex row objective as follows.
A fragment bettor for \(g\in\mathcal G_h\) uses
\begin{equation}
g_{B_g^{\rm frag}}^{j,s,a}(\mu)
\coloneqq
2d_{C_g(s)}(\mu)\mathbf 1\{j=h,\ a=a_g(s)\}.
\label{eq:app-fragment-objective}
\end{equation}
Its calibration companion uses
\begin{equation}
g_{B_g^{\rm cal}}^{j,s,a}(\mu)
\coloneqq
\begin{cases}
\displaystyle
\frac14\mathbb E_{(y,s')\sim\mu}
[U_{g,j+1}^{M,\pi}(s')],&j<h,\\[0.4em]
0,&j\ge h.
\end{cases}
\label{eq:app-calibration-objective}
\end{equation}
Let
\[
\eta\coloneqq
\min\!\left\{\frac12,
\sqrt{\frac{\log(32/\delta)}K}\right\},
\qquad
\lambda\coloneqq\eta/H.
\]
The optimism bettor uses
\begin{equation}
g_\bullet^{j,s,a}(\mu)
\coloneqq
\lambda\mathbb E_{(y,s')\sim\mu}
[y+U_{\bullet,j+1}^{M,\pi}(s')].
\label{eq:app-pessimism-objective}
\end{equation}
Finally, let \(u\) be the uniform law on \(\mathcal O\).  The reference
bettor attached to \((j_0,s_0,a_0)\) uses
\(2D_{\rm H}^2(\mu,u)\) at that row and zero at every other row.

Construct the predictor for \(j=H,H-1,\ldots,1\).  Once all later rows are
fixed, put
\begin{equation}
J_{k,\pi}^{j,s,a}(\mu)
\coloneqq
\sum_B\zeta_k(B)g_B^{j,s,a}(\mu)
\label{eq:app-aggregate-objective}
\end{equation}
and choose
\begin{equation}
M_{k,\pi,j}(s,a)
\in
\operatorname*{arg\,min}_{\mu\in\Delta(\mathcal O)}
J_{k,\pi}^{j,s,a}(\mu).
\label{eq:app-row-minimization}
\end{equation}
The recursion is acyclic: equations~\eqref{eq:app-calibration-objective} and
\eqref{eq:app-pessimism-objective} use only potentials constructed from later
stages.

Let \(N_{\mathcal G}=\sum_h|\mathcal G_h|\) and
\(\rho=\min\{1/8,1/(8K^4)\}\).  Initially assign total mass \(1/4\)
uniformly to fragment bettors, \(1/4\) uniformly to calibration bettors,
\(\rho\) uniformly to the \(HSA\) reference bettors, and
\(1/2-\rho\) to the optimism bettor.  Thus every selected fragment or
calibration witness has prior at least
\begin{equation}
w_{\min}\coloneqq\frac1{4N_{\mathcal G}}.
\label{eq:app-min-prior}
\end{equation}

After executing \(\pi_k\), consider a certificate
\(g=(h,a,v,c)\in\mathcal G_h\) and put \(M=M_{k,\pi_k}\).  Its fragment
witness is inactive, with multiplier one, unless
\(A_{k,h}=a(S_{k,h})\).  On that event, let
\begin{align*}
p&=M_h(S_{k,h},A_{k,h}),\\
q&=q_{\Psi(v,c(S_{k,h}))}(p),\\
d&=d_{\Psi(v,c(S_{k,h}))}(p).
\end{align*}
With \(O_{k,h}=(Y_{k,h},S_{k,h+1})\), its active multiplier is
\begin{equation}
b_{k,g}^{\rm frag}(O_{k,h})
=\sqrt{\frac{q(O_{k,h})}{p(O_{k,h})}}+d.
\label{eq:learning-fragment-bet}
\end{equation}
The associated layer-calibration multiplier is
\begin{equation}
\begin{split}
b_{k,g}^{\rm cal}
&=1+\frac14
\left\{
\mathbb E_{M,\pi_k}[x_g^{M,\pi_k}(S_h)]
-x_g^{M,\pi_k}(S_{k,h})
\right\}\\
&\in[3/4,5/4].
\end{split}
\label{eq:learning-calibration-bet}
\end{equation}
The reference multiplier is the same projection multiplier with \(C=\{u\}\) when
its row is visited and is one otherwise.  For any query \(\pi\), write
\[
\widehat G_k(\pi)
\coloneqq U_{\bullet,1}^{M_{k,\pi},\pi}(s_1).
\]
After sampling \(\pi_k\), abbreviate
\(G_k^Y=\sum_hY_{k,h}\) and
\(\widehat G_k=\widehat G_k(\pi_k)\).  The optimism
multiplier is
\begin{equation}
b_{k,\bullet}
=1+\frac\eta H(\widehat G_k-G_k^Y)
\in[1/2,3/2].
\label{eq:app-pessimism-multiplier}
\end{equation}
Finally update
\begin{equation}
Z_k\coloneqq\sum_B\zeta_k(B)b_{k,B},
\qquad
\zeta_{k+1}(B)
\coloneqq\frac{\zeta_k(B)b_{k,B}}{Z_k}.
\label{eq:app-posterior-update}
\end{equation}

\begin{lemma}[Existence, uniqueness, and full support]
\label{lem:app-market-existence}
At every episode, equation~\eqref{eq:app-row-minimization} has a unique
full-support minimizer at every row and for every queried policy.  All bettor
multipliers and \(Z_k\) are finite and strictly positive, and every reference
bettor retains positive posterior mass.
\end{lemma}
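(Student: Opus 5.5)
The argument is a backward induction over the layers $j=H,H-1,\ldots,1$, run for a fixed episode $k$, posterior $\zeta_k$ and queried policy $\pi$. It is coupled with an induction over episodes, which carries the positivity of the posterior.

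\textbf{Induction hypothesis over episodes.} $\zeta_k$ is a probability vector in which every reference bettor has strictly positive mass. For $k=1$ this holds because the prior assigns mass $\rho/(HSA)>0$ to each reference bettor.

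\textbf{Row minimization within an episode.} Suppose all rows at stages $j'>j$ are already fixed and have full support. Then the potentials $U_{g,j+1}^{M,\pi}$ and $U_{\bullet,j+1}^{M,\pi}$ are well defined, bounded and finite. The calibration and optimism objectives \eqref{eq:app-calibration-objective} and \eqref{eq:app-pessimism-objective} are therefore linear in $\mu$, hence continuous and convex. Each fragment objective $2d_{C_g(s)}(\mu)$ is convex by Lemma~\ref{lem:app-imported-hellinger}(iii). It is continuous because it is the minimum of a jointly continuous function over a compact set. Consequently $J_{k,\pi}^{j,s,a}$ is convex and continuous on the compact simplex, and a minimizer exists.

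For uniqueness I use the reference bettor attached to $(j,s,a)$. Its weight $\zeta_k>0$ multiplies $2D_{\rm H}^2(\mu,u)=2-2\sum_o\sqrt{\mu(o)/|\mathcal O|}$. This function is strictly convex, since $\mu\mapsto-\sum_o\sqrt{\mu(o)}$ is strictly convex. Adding it to the convex remainder makes $J$ strictly convex, so the minimizer is unique.

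\textbf{Full support.} This is the main obstacle. Suppose the minimizer $\mu^\star$ had $\mu^\star(o_0)=0$. Perturb to $\mu_t=(1-t)\mu^\star+t\,\delta_{o_0}$. The reference term then decreases by order $\sqrt t$, because its coordinate $o_0$ gains $-2\sqrt{t/|\mathcal O|}$ while the other coordinates change by $O(t)$. The linear terms change by $O(t)$. Each fragment term $t\mapsto d_C(\mu_t)$ is convex, so its one-sided derivative at $0$ exists in $[-\infty,\infty)$. Moreover $d_C(\mu_t)\le (1-t)d_C(\mu^\star)+t\,d_C(\delta_{o_0})$, so the increase is at most $O(t)$, with a constant bounded by $1$ per unit weight and total weight at most $1$. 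Hence for small $t$ the objective strictly decreases, contradicting minimality. This chord bound avoids needing differentiability of $d_C$ at boundary points.

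\textbf{Multipliers and normalizer.} With full support established, Lemma~\ref{lem:app-imported-hellinger}(i) gives a well-defined projection $q$. The fragment multiplier \eqref{eq:learning-fragment-bet} is finite because $p(O)>0$.

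Strict positivity needs more care, since $q(O)$ may vanish. If $d>0$, the multiplier is at least $d>0$. If $d=0$, then $q=p$, which has full support, so the multiplier equals $1$. Either way it is positive, and the same argument covers reference multipliers. Calibration multipliers lie in $[3/4,5/4]$, because $x_g\in[0,1]$ forces the bracket in \eqref{eq:learning-calibration-bet} to lie in $[-1,1]$. The optimism multiplier lies in $[1/2,3/2]$, because $\widehat G_k,G_k^Y\in[0,H]$ and $\eta\le1/2$.

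$Z_k$ is a $\zeta_k$-average of finite positive numbers, so it is finite and positive. The update \eqref{eq:app-posterior-update} then preserves positivity of each reference bettor's mass, which closes the induction over episodes.
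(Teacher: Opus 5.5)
Your proof is correct and follows the paper's argument: induction over episodes keeps every reference weight positive, existence comes from compactness, uniqueness from the strictly convex reference term, and full support from a perturbation in which the reference term drops like $\sqrt t$ while every other term rises by only $O(t)$. The differences are minor: you mix toward $\delta_{o_0}$ and bound the fragment terms by the convexity chord inequality instead of the paper's comparison with a fixed projection $q_C(p)$; and your explicit case split ($d>0$ versus $d=0\Rightarrow q=p$) for strict positivity of the fragment multiplier is more careful than the paper's bare appeal to the Hellinger lemma.
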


\begin{proof}
We argue by induction over episodes.  Every reference bettor has positive prior
mass in episode one.  Suppose its posterior mass is positive at the start of
episode \(k\).  The objective in equation~\eqref{eq:app-aggregate-objective}
is continuous on the compact simplex, so a minimizer exists.  The reference
term at the current row has positive coefficient and is a positive multiple of
\[
2D_{\rm H}^2(\mu,u)
=2-2\sum_{o\in\mathcal O}\sqrt{\mu(o)u(o)}.
\]
Since \(u(o)>0\) and \(z\mapsto-\sqrt z\) is strictly convex, this term is
strictly convex.  Every other term is convex or affine.  Their sum is strictly
convex, so the minimizer is unique.

Let \(p\) be that minimizer.  Suppose \(p(o_0)=0\).  Choose \(o_1\) with
\(p(o_1)>0\), and for sufficiently small \(t>0\) put
\(p_t=p+t(e_{o_0}-e_{o_1})\).  The unscaled reference objective changes by
\begin{align*}
&2D_{\rm H}^2(p_t,u)-2D_{\rm H}^2(p,u)\\
&\quad=
-2\sqrt{u(o_0)}\sqrt t
\\
&\qquad
+2\sqrt{u(o_1)}
\frac{t}{\sqrt{p(o_1)}+\sqrt{p(o_1)-t}}
\\
&\quad
\le-c\sqrt t+C_1t
\end{align*}
for constants \(c>0\) and \(C_1<\infty\).  Here is the promised uniform
boundary bound for every other directed Hellinger term.  Fix a compact convex
row \(C\), and let \(q_C(p)\) be a projection of \(p\).  Using that same
\(q_C(p)\) as a feasible comparison for \(p_t\) gives, for
\(0<t\le p(o_1)/2\),
\begin{align*}
d_C(p_t)-d_C(p)
&\le D_{\rm H}^2(p_t,q_C(p))-D_{\rm H}^2(p,q_C(p))\\
&=-\sqrt{q_C(p)(o_0)}\sqrt t\\
&\quad+\sqrt{q_C(p)(o_1)}
\{\sqrt{p(o_1)}-\sqrt{p(o_1)-t}\}\\
&\le \frac{t}{\sqrt{p(o_1)}}.
\end{align*}
Thus each such term increases by at most a constant times \(t\), including
when its projection gives positive mass to the newly opened coordinate.
Every affine term also changes by at most a constant times \(t\).  Because
the bettor family is finite and the reference coefficient is strictly
positive, the total objective change is at most
\(-c'\sqrt t+C't<0\) for all sufficiently small \(t\).  This
contradicts optimality.  Therefore \(p\) has full support.

Full support makes every projection ratio finite.  Fragment and reference
multipliers are strictly positive by
Lemma~\ref{lem:app-imported-hellinger}; calibration multipliers belong to
\([3/4,5/4]\), and the optimism multiplier belongs to \([1/2,3/2]\).
Hence \(Z_k>0\), and equation~\eqref{eq:app-posterior-update} preserves the
positive mass of every reference bettor.  This closes the induction.
\end{proof}

\begin{proposition}[Continuity and policy coherence]
\label{prop:app-predictor-continuity}
For every episode \(k\), the map
\(\pi\mapsto M_{k,\pi}\) is continuous.  Its induced finite trajectory law
is continuous and Borel measurable in \(\pi\).  It is policy coherent in the
trajectory-law sense required by \cite{appel2025regret}: for every query
\(\pi\), there is a nonanticipating outcome selector
\(\sigma^{k,\pi}\) such that the predicted law is exactly
\(\sigma^{k,\pi}\bowtie\pi\), including at histories that have zero predicted
probability.
\end{proposition}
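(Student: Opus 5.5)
We prove continuity by backward induction over the layers $j=H,H-1,\ldots,1$, strengthening the claim to joint continuity of $(\pi,M_{k,\pi,j}(s,a))$ and of all potentials $U_{g,j}^{M,\pi}$ and $U_{\bullet,j}^{M,\pi}$ in $\pi$.

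The tool is a Berge-type argument for unique minimizers. Fix episode $k$ and its posterior $\zeta_k$. At layer $j$ the aggregate objective $J_{k,\pi}^{j,s,a}(\mu)$ in \eqref{eq:app-aggregate-objective} is a finite sum of terms of three kinds:
\begin{itemize}
\item fragment terms $2d_{C_g(s)}(\mu)$, which do not depend on $\pi$;
\item reference terms $2D_{\rm H}^2(\mu,u)$, which also do not depend on $\pi$;
\item calibration and optimism terms, which are affine in $\mu$ with coefficients $U_{g,j+1}^{M,\pi}(s')$ and $U_{\bullet,j+1}^{M,\pi}(s')$.
\end{itemize}
By the induction hypothesis these coefficients are continuous in $\pi$. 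The map $\mu\mapsto d_C(\mu)$ is continuous on the whole simplex: it is the minimum of a jointly continuous function over a compact set. Hence $(\pi,\mu)\mapsto J_{k,\pi}^{j,s,a}(\mu)$ is jointly continuous on $\Pi\times\Delta(\mathcal O)$, which is compact.

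Lemma~\ref{lem:app-market-existence} gives a unique minimizer for each $\pi$. Berge's maximum theorem then makes the argmin correspondence upper hemicontinuous. Being single-valued, it is a continuous function, so $\pi\mapsto M_{k,\pi,j}(s,a)$ is continuous.

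The potentials at layer $j$ are built from three ingredients: expectations under $M_{k,\pi,j}$ of continuous later potentials, the weights $\pi_j(a'\mid s)$, and the scores $x_g^{M,\pi}(s)=\pi_h(a_g(s)\mid s)\,d_{C_g(s)}(M_h(s,a_g(s)))$. All three are continuous, and they are combined by finitely many products and sums, so the potentials are continuous. This closes the induction.

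The step needing the most care is the uniform continuity of the Hellinger terms near the simplex boundary. I would handle it through joint continuity of $(p,q)\mapsto 1-\sum_o\sqrt{p(o)q(o)}$ on the compact set $\Delta\times C$, rather than through the projection $q_C(p)$, which is only known to be continuous at full-support points (Lemma~\ref{lem:app-imported-hellinger}(i)). Since minimizers are full support anyway, this concern never affects the predictor itself.

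The induced trajectory law is finite-dimensional. Each trajectory probability is a finite product of entries $\pi_j(a\mid s)$ and $M_{k,\pi,j}(s,a)(o)$, so it is continuous in $\pi$. Continuous maps between metrizable spaces are Borel, which gives measurability.

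For policy coherence, define the selector $\sigma^{k,\pi}_j(\cdot\mid\eta_j,a)\coloneqq M_{k,\pi,j}(s_j,a)$, with $s_j$ the last state of $\eta_j$. This is a Markov, hence nonanticipating, outcome selector. It is defined at every history and every action, including those with zero predicted probability, because the predictor row is specified for every $(j,s,a)$ regardless of reachability. Composing $\sigma^{k,\pi}$ with $\pi$ by the same recursive kernel construction used in the Bellman part of the proof of Proposition~\ref{prop:fixed-identity-reduction} reproduces the predicted law exactly. Therefore the predicted law equals $\sigma^{k,\pi}\bowtie\pi$.
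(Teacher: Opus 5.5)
Your proposal is correct and follows the paper's proof essentially step for step. It uses backward induction over layers, with continuity of the later potentials making the row objective jointly continuous. Uniqueness of the minimizer (Lemma~\ref{lem:app-market-existence}) then turns the argmin into a continuous map. The trajectory law is continuous as a finite product, and the Markov selector $\sigma^{k,\pi}_j(\cdot\mid\eta_j,a)=M_{k,\pi,j}(s_j,a)$ gives policy coherence. The only cosmetic difference is that you invoke Berge's maximum theorem where the paper writes out the equivalent compactness-plus-uniqueness subsequence argument.
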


\begin{proof}
Proceed backward in \(j\).  At \(j=H\), every row objective is a finite
weighted sum of continuous convex Hellinger terms and affine terms.  Its
coefficients depend continuously on the finitely many action probabilities in
\(\pi\).  Suppose continuity has been proved for all later stages.  The finite
potential recursions in equations~\eqref{eq:app-calibration-potentials} and
\eqref{eq:app-pessimism-potentials} use only sums and products of later rows and
policy probabilities, so they preserve continuity.  Hence the stage-\(j\)
objective is jointly continuous in \((\mu,\pi)\).

Let \(\pi_n\to\pi\), and let \(\mu_n\) be the unique minimizer of one row
objective at \(\pi_n\).  Compactness provides a convergent subsequence
\(\mu_{n_\ell}\to\bar\mu\).  For every feasible \(\mu\), joint continuity
and optimality give
\[
J_{k,\pi}^{j,s,a}(\bar\mu)
=\lim_\ell J_{k,\pi_{n_\ell}}^{j,s,a}(\mu_{n_\ell})
\le
\lim_\ell J_{k,\pi_{n_\ell}}^{j,s,a}(\mu)
=J_{k,\pi}^{j,s,a}(\mu).
\]
Thus \(\bar\mu\) is the limiting minimizer.  Uniqueness from
Lemma~\ref{lem:app-market-existence} identifies it with
\(M_{k,\pi,j}(s,a)\).  Every convergent subsequence has this limit, so the full
sequence converges.  This completes the backward induction.

The probability of a finite action/outcome trajectory is a finite product of
policy probabilities and predicted-row probabilities, so the induced
trajectory law is continuous and therefore Borel.  To verify the stronger
policy-coherence statement rather than only its action-kernel consequence,
define, for every post-action history \(\eta_j\) ending in state \(s_j\),
\begin{equation}
\sigma^{k,\pi}_j(\cdot\mid\eta_j,a)
\coloneqq M_{k,\pi,j}(s_j,a).
\label{eq:app-predictor-selector}
\end{equation}
This is a Borel, nonanticipating Markov selector, specified at null histories
as well as reached ones.  Ionescu--Tulcea recursion (a finite product here)
shows that alternating the action kernels \(\pi_j(\cdot\mid s_j)\) with
\eqref{eq:app-predictor-selector} produces exactly the trajectory law used in
the definition of \(M_{k,\pi}\).  Hence that law is
\(\sigma^{k,\pi}\bowtie\pi\), which is the policy-coherence hypothesis of the
imported DEC theorem.
\end{proof}

\subsection{Market supernormalization}
\label{app:learning-supernormalization}

Appendix~M of \cite{appel2025regret} proves supernormalization for their
statewise bettor family.  Our layer-aggregated calibration multiplier has a
different residual and telescoping identity, so the analogous normalizer
bound is established here for the modified market.

\begin{lemma}[Exact telescoping identities]
\label{lem:app-telescoping}
For this lemma write \(G^Y\coloneqq\sum_{j=1}^H Y_j\).
For a calibration bettor with target stage \(h\),
\begin{align}
&\mathbb E_{M,\pi}[x_g^{M,\pi}(S_h)]-x_g^{M,\pi}(S_h)
\nonumber\\
&=U_{g,1}(s_1)-\Phi_{g,1}(S_1,A_1)
\nonumber\\
&\quad+
\sum_{j=1}^{h-1}
\{\Phi_{g,j}(S_j,A_j)-U_{g,j+1}(S_{j+1})\}
\nonumber\\
&\quad+
\sum_{j=1}^{h-1}
\{U_{g,j+1}(S_{j+1})-\Phi_{g,j+1}(S_{j+1},A_{j+1})\}.
\label{eq:app-calibration-telescope}
\end{align}
For the optimism bettor,
\begin{align}
\widehat G-G^Y
&=U_{\bullet,1}(s_1)-\Phi_{\bullet,1}(S_1,A_1)
\nonumber\\
&\quad+
\sum_{j=1}^H
\{\Phi_{\bullet,j}(S_j,A_j)-Y_j
-U_{\bullet,j+1}(S_{j+1})\}
\nonumber\\
&\quad+
\sum_{j=1}^{H-1}
\{U_{\bullet,j+1}(S_{j+1})
-\Phi_{\bullet,j+1}(S_{j+1},A_{j+1})\}.
\label{eq:app-pessimism-telescope}
\end{align}
\end{lemma}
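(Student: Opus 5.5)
Both identities are pure algebraic telescoping along a single realized trajectory $(S_1,A_1,O_1,\dots)$ of the predicted model, using only the backward definitions in \eqref{eq:app-calibration-potentials} and \eqref{eq:app-pessimism-potentials}. No probabilistic input is needed. Since $S_1=s_1$ is fixed, the left-hand sides can be rewritten as a sum of adjacent differences of potentials.

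\emph{Calibration identity.} I would start from the fact, already noted after \eqref{eq:app-calibration-potentials}, that $U_{g,1}(s_1)=\mathbb E_{M,\pi}[x_g^{M,\pi}(S_h)]$. By the base case of the recursion, $x_g^{M,\pi}(S_h)=\Phi_{g,h}(S_h,A_h)$, and this holds for any realized $A_h$ because $\Phi_{g,h}(s,a')=x_g(s)$ does not depend on $a'$. The plan is therefore to write
\[
U_{g,1}(s_1)-\Phi_{g,h}(S_h,A_h)
\]
as a telescoping sum over the chain
\[
U_{g,1}(S_1)\to\Phi_{g,1}(S_1,A_1)\to U_{g,2}(S_2)\to\Phi_{g,2}(S_2,A_2)\to\cdots\to\Phi_{g,h}(S_h,A_h).
\]
Each adjacent pair contributes one difference, and grouping them gives exactly the three lines of \eqref{eq:app-calibration-telescope}. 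The first line is the initial difference $U_{g,1}-\Phi_{g,1}$. The second line collects the differences $\Phi_{g,j}-U_{g,j+1}$ for $j=1,\dots,h-1$. The third line collects $U_{g,j+1}-\Phi_{g,j+1}$ for $j=1,\dots,h-1$. All intermediate terms cancel in pairs, leaving $U_{g,1}(s_1)-\Phi_{g,h}(S_h,A_h)$. For $h=1$ both sums are empty, and the identity reduces to $U_{g,1}(s_1)-\Phi_{g,1}(s_1,A_1)=\mathbb E[x_g(S_1)]-x_g(s_1)$, which is consistent.

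\emph{Optimism identity.} I would first show $U_{\bullet,1}(s_1)=\widehat G$ by unrolling \eqref{eq:app-pessimism-potentials}. Next I would use the chain
\[
U_{\bullet,1}\to\Phi_{\bullet,1}\to U_{\bullet,2}\to\cdots\to\Phi_{\bullet,H}\to U_{\bullet,H+1}=0,
\]
inserting $-Y_j$ into each $\Phi\to U$ step. Summing the $\Phi_{\bullet,j}-Y_j-U_{\bullet,j+1}$ terms over $j=1,\dots,H$ together with the $U-\Phi$ terms then telescopes to $U_{\bullet,1}(s_1)-\sum_j Y_j-U_{\bullet,H+1}(S_{H+1})=\widehat G-G^Y$.

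The only care point is index bookkeeping. The $U\to\Phi$ steps run over $j+1=2,\dots,H$, which is why the third sum has $H-1$ terms. The terminal potential vanishing is what removes the boundary term. There is no real obstacle beyond this bookkeeping.
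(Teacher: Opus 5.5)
Your proposal is correct and follows the paper's argument. Both proofs observe that, once the initial $U_{g,1}-\Phi_{g,1}$ (resp.\ $U_{\bullet,1}-\Phi_{\bullet,1}$) difference is set apart, every intermediate potential appears once with each sign, leaving $U_{g,1}(s_1)-\Phi_{g,h}(S_h,A_h)$ (resp.\ $U_{\bullet,1}(s_1)-G^Y-U_{\bullet,H+1}$), and these equal the left-hand sides by the base cases of the recursions. One wording point: the identity holds along any state--action path, not just trajectories of the predicted model. That matters because Lemma~\ref{lem:app-supernormalization} applies it along the actual trajectory, but your argument never uses the predicted-model qualifier, so nothing breaks.
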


\begin{proof}
Expand the two sums in equation~\eqref{eq:app-calibration-telescope}.  Every
intermediate \(\Phi_{g,j}\) and \(U_{g,j}\) appears once with each sign.
The surviving terms are
\(U_{g,1}(s_1)=\mathbb E_{M,\pi}x_g(S_h)\) and
\(-\Phi_{g,h}(S_h,A_h)=-x_g(S_h)\).  The same cancellation in
equation~\eqref{eq:app-pessimism-telescope} leaves
\(U_{\bullet,1}(s_1)=\widehat G\),
\(-\sum_jY_j=-G^Y\), and
\(-U_{\bullet,H+1}=0\).
\end{proof}

\begin{lemma}[Supernormalization]
\label{lem:app-supernormalization}
For every episode \(k\), every predictable policy law, and every feasible
nonanticipating actual selector,
\begin{equation}
\mathbb E[Z_k\mid\mathcal F_{k-1}]\le1.
\label{eq:app-one-step-supernormalization}
\end{equation}
Consequently, for every \(\delta\in(0,1)\), with probability at least
\(1-\delta\),
\begin{equation}
\sum_{k=1}^K\log Z_k\le\log(1/\delta).
\label{eq:app-normalizer-bound}
\end{equation}
\end{lemma}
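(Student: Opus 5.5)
The plan is to prove the one-step bound \eqref{eq:app-one-step-supernormalization} by a first-order optimality argument for the backward row minimization \eqref{eq:app-row-minimization}. The tail bound \eqref{eq:app-normalizer-bound} then follows from Ville's inequality.

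\emph{Reduction to a fixed policy.} Since \(\zeta_k\) is \(\mathcal F_{k-1}\)-measurable, \(\mathbb E[Z_k-1\mid\mathcal F_{k-1}]=\sum_B\zeta_k(B)\,\mathbb E[b_{k,B}-1\mid\mathcal F_{k-1}]\). The policy \(\pi_k\) is drawn from the predictable law \(p_k\), and the predictor \(M=M_{k,\pi_k}\) together with all potentials is a function of \((\mathcal F_{k-1},\pi_k)\). It therefore suffices to show
\[
\mathbb E\Bigl[\textstyle\sum_B\zeta_k(B)(b_{k,B}-1)\,\Big|\,\mathcal F_{k-1},\pi_k\Bigr]\le0
\]
and then average over \(\pi_k\). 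Fix the policy. Along the episode, write \(\nu_j\) for the actual conditional law of \(O_{k,j}\) given the analysis history through \((S_{k,j},A_{k,j})\). By Lemma~\ref{lem:app-deferred-coding} this is a well-defined element of \(\Delta(\mathcal O)\), even though the response is history dependent. Write \(p_j=M_j(S_{k,j},A_{k,j})\) for the predicted row, which has full support by Lemma~\ref{lem:app-market-existence}.

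\emph{Per-bettor decomposition into row derivatives.} The next step is to express each \(\mathbb E[b_{k,B}-1]\) as a sum over visited rows of \(-D g_B^{j,S_j,A_j}(p_j)[\nu_j-p_j]\).
\begin{itemize}
\item \emph{Fragment and reference bettors.} This is exactly \eqref{eq:app-bet-derivative} of Lemma~\ref{lem:app-imported-hellinger}(iii), applied at the single row where the bettor is active. The activity indicator \(\{j=h,\ A_h=a_g(S_h)\}\) matches the indicator in \eqref{eq:app-fragment-objective}.
\item \emph{Calibration bettors.} I would use the telescope \eqref{eq:app-calibration-telescope}. The terms \(U_{g,1}(s_1)-\Phi_{g,1}\) and \(U_{g,j+1}-\Phi_{g,j+1}\) have zero conditional mean, because actions are drawn from \(\pi\) and \(U\) is the \(\pi\)-average of \(\Phi\). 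Each remaining term \(\Phi_{g,j}(S_j,A_j)-U_{g,j+1}(S_{j+1})\) has conditional mean \(\mathbb E_{p_j}U_{g,j+1}-\mathbb E_{\nu_j}U_{g,j+1}\). Multiplying by \(1/4\), this is precisely \(-Dg^{j,S_j,A_j}_{B^{\rm cal}_g}(p_j)[\nu_j-p_j]\), since that objective is linear.
\item \emph{Optimism bettor.} The same argument with \eqref{eq:app-pessimism-telescope} and the factor \(\eta/H=\lambda\) gives the same form.
\end{itemize}
Summing with weights \(\zeta_k(B)\) and using the tower property over stages yields
\[
\mathbb E\bigl[Z_k-1\bigm|\mathcal F_{k-1},\pi_k\bigr]
=-\mathbb E\Bigl[\textstyle\sum_{j=1}^H DJ_{k,\pi_k}^{j,S_j,A_j}(p_j)[\nu_j-p_j]\Bigm|\mathcal F_{k-1},\pi_k\Bigr].
\]

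\emph{Optimality and the tail bound.} Each \(p_j\) minimizes the convex function \(J^{j,S_j,A_j}_{k,\pi_k}\) over the whole simplex. Its one-sided directional derivative toward any \(\nu\in\Delta(\mathcal O)\) is therefore nonnegative, so the right-hand side above is at most \(0\). This step needs no feasibility of \(\nu_j\), so the bound holds for every actual selector. Differentiability at \(p_j\) is guaranteed by full support (Lemma~\ref{lem:app-market-existence} together with Lemma~\ref{lem:app-imported-hellinger}(iii)). For the tail bound, \(\prod_{i\le k}Z_i\) is a nonnegative supermartingale with initial value \(1\), so \(\Pr(\prod_{k\le K}Z_k\ge1/\delta)\le\delta\).

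The main obstacle is the bookkeeping in the calibration and optimism telescopes. One has to check that the potentials \(U_{g,j+1}\) appearing there are exactly the functions used in the row objectives \eqref{eq:app-calibration-objective} and \eqref{eq:app-pessimism-objective} at the predicted \(M=M_{k,\pi_k}\). One must also confirm that conditioning on the coarser analysis history, rather than on the responder's full history, only replaces \(\nu_j\) by a barycenter; this is harmless, because the derivative expression is linear in \(\nu_j\).
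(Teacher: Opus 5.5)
Your proposal is correct and follows essentially the same route as the paper. You split each multiplier, via the telescoping identities, into zero-mean action increments and outcome increments whose conditional means are the negative directional derivatives of the matching row objectives, then use first-order optimality of the full-support row minimizer and Markov's inequality on the product supermartingale. The one detail the paper states explicitly and you leave implicit is the Borel measurability of $\pi\mapsto M_{k,\pi}$ (Proposition~\ref{prop:app-predictor-continuity}), which is needed to integrate the fixed-policy inequality against the predictable law $p_k$.
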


\begin{proof}
Fix the past and first fix a queried policy \(\pi\).  Let
\(M=M_{k,\pi}\).  At a reached post-action history at stage \(j\), write
\(p_j=M_j(S_j,A_j)\) and let \(\nu_j\) be the actual conditional law of
\((Y_j,S_{j+1})\).  The law \(\nu_j\) may depend on the full
history.

For each bettor, decompose \(b_{k,B}-1\) into outcome increments and action
increments.  For fragment and reference bettors, the outcome increment is the
active projection multiplier minus one.  For a calibration bettor targeted at
stage \(h\), use
\begin{align*}
\xi_{B,j}
&=\frac14
\{\Phi_{g,j}(S_j,A_j)-U_{g,j+1}(S_{j+1})\}
\mathbf1\{j<h\},\\
\rho_{B,j}
&=\frac14
\{U_{g,j}(S_j)-\Phi_{g,j}(S_j,A_j)\}
\mathbf1\{j\le h\}.
\end{align*}
For the optimism bettor, use
\begin{align*}
\xi_{B,j}
&=\lambda\{\Phi_{\bullet,j}(S_j,A_j)-Y_j
-U_{\bullet,j+1}(S_{j+1})\},\\
\rho_{B,j}
&=\lambda\{U_{\bullet,j}(S_j)-\Phi_{\bullet,j}(S_j,A_j)\}.
\end{align*}
Lemma~\ref{lem:app-telescoping} verifies that the increments sum exactly to
the corresponding episode multiplier minus one.

Given the history immediately before the action, the conditional mean of every
\(\rho_{B,j}\) is zero because \(A_j\sim\pi_j(\cdot\mid S_j)\).  Given the
post-action history, equations~\eqref{eq:app-bet-derivative},
\eqref{eq:app-calibration-objective}, and
\eqref{eq:app-pessimism-objective} give, for every bettor type,
\begin{equation}
\mathbb E[\xi_{B,j}\mid\text{post-action history}]
=-Dg_B^{j,S_j,A_j}(p_j)[\nu_j-p_j].
\label{eq:app-increment-derivative}
\end{equation}

The row \(p_j\) minimizes the convex function
\(J_{k,\pi}^{j,S_j,A_j}\).  For any feasible direction \(\nu_j-p_j\), put
\(\phi(t)=J((1-t)p_j+t\nu_j)\).  Minimality gives
\((\phi(t)-\phi(0))/t\ge0\) for \(t>0\); letting \(t\downarrow0\) gives
\begin{equation}
DJ_{k,\pi}^{j,S_j,A_j}(p_j)[\nu_j-p_j]\ge0.
\label{eq:app-row-first-order}
\end{equation}
Average equation~\eqref{eq:app-increment-derivative} over the finite bettor
population, use equation~\eqref{eq:app-row-first-order}, and then use the tower
property.  This yields
\begin{align*}
&\mathbb E[Z_k-1\mid\mathcal F_{k-1},\pi]\\
&\quad=-\mathbb E\!\left[
\sum_{j=1}^H
DJ_{k,\pi}^{j,S_j,A_j}(p_j)[\nu_j-p_j]
\,\middle|\,\mathcal F_{k-1},\pi
\right]
\le0.
\end{align*}
Proposition~\ref{prop:app-predictor-continuity} supplies the Borel measurability
needed to integrate this inequality against the predictable law \(p_k\), proving
equation~\eqref{eq:app-one-step-supernormalization}.

The product \(\mathcal Z_t=\prod_{k=1}^tZ_k\) is therefore a nonnegative
supermartingale with \(\mathcal Z_0=1\).  Markov's inequality gives
\(\mathbb P(\mathcal Z_K>1/\delta)\le\delta\).  Every \(Z_k\) is positive by
Lemma~\ref{lem:app-market-existence}, so taking logarithms on the complementary
event proves equation~\eqref{eq:app-normalizer-bound}.
\end{proof}

\subsection{Fragment control, layer calibration, and inaccuracy}
\label{app:learning-inaccuracy}

Fix the candidate \(N^\circ\) from the statement and choose one of its
unit-coordinate certificate tuples
\[
(a_h^\circ,v_{h+1}^\circ,c_h^\circ)_{h=1}^H
\]
in the parameterization of Section~\ref{sec:learning-target}.
Round \(v_{h+1}^\circ\) coordinatewise upward and
\(c_h^\circ\) coordinatewise downward to form
\(g_h^\circ\in\mathcal G_h\).  By
equation~\eqref{eq:app-rounded-inclusion}, every recommended row of
\(N^\circ\) is
contained in the rounded halfspace, because
\(\Psi(v_{h+1}^\circ,c_h^\circ(s))
=\Psi_H(Hv_{h+1}^\circ,Hc_h^\circ(s))\).

For a queried policy \(\pi\), put
\[
x_{k,h}^\pi(s)
\coloneqq x_{g_h^\circ}^{M_{k,\pi},\pi}(s).
\]
Define cumulative predicted and factual scores
\begin{align}
P_h
&\coloneqq
\sum_{k=1}^K\mathbb E_{\pi\sim p_k}
\mathbb E_{M_{k,\pi},\pi}[x_{k,h}^\pi(S_h)\mid\mathcal F_{k-1}],
\label{eq:app-predicted-score}\\
F_h
&\coloneqq
\sum_{k=1}^K\mathbb E_{\pi\sim p_k}
\mathbb E_{\rm actual,\pi}[x_{k,h}^\pi(S_h)\mid\mathcal F_{k-1}].
\label{eq:app-factual-score}
\end{align}

\begin{lemma}[Fragment control]
\label{lem:app-fragment-control}
For every fixed \(h\) and \(\delta\in(0,1)\), with probability at least
\(1-\delta\),
\begin{equation}
F_h
\le
\log(1/\delta)+\log(1/w_{\min})
+\sum_{k=1}^K\log Z_k.
\label{eq:app-fragment-control}
\end{equation}
\end{lemma}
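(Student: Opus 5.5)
Fix $h$ and abbreviate $g=g_h^\circ$. The plan is to study the wealth of the single fragment bettor $B_g^{\rm frag}$. We combine the reciprocal-martingale inequality \eqref{eq:app-reciprocal-martingale} with the wealth identity \eqref{eq:app-wealth-identity}, and then lower-bound the conditional reciprocal means $\lambda_k$ using the projection property in Lemma~\ref{lem:app-imported-hellinger}(ii). Applying \eqref{eq:app-reciprocal-martingale} to $b_k=b_{k,g}^{\rm frag}$, then \eqref{eq:app-wealth-identity} with prior $\zeta_1(B_g^{\rm frag})\ge w_{\min}$ from \eqref{eq:app-min-prior}, gives with probability at least $1-\delta$
\[
\sum_{k=1}^K-\log\lambda_k\le\log(1/\delta)+\log(1/w_{\min})+\sum_{k=1}^K\log Z_k,
\qquad \lambda_k=\mathbb E[b_{k,g}^{\rm frag\,-1}\mid\mathcal F_{k-1}].
\]
It therefore suffices to show that $-\log\lambda_k\ge \mathbb E_{\pi\sim p_k}\mathbb E_{\mathrm{actual},\pi}[x_{k,h}^\pi(S_h)\mid\mathcal F_{k-1}]$ for each $k$, since summing over $k$ yields $F_h$ on the left.

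To bound $\lambda_k$, we condition in the tower order $\pi_k\sim p_k$, then the actual trajectory up to $(S_{k,h},A_{k,h})$, then the outcome $O_{k,h}$. If $A_{k,h}\neq a(S_{k,h})$, the multiplier is one. On the active event, $p=M_{k,\pi_k,h}(S_{k,h},A_{k,h})$ has full support by Lemma~\ref{lem:app-market-existence}. The actual conditional law $\nu$ of $O_{k,h}$ lies in the coded row $\widetilde\Gamma_{h,S_{k,h},A_{k,h}}$; this uses Lemma~\ref{lem:app-deferred-coding} and holds even for full-history responders, because barycenters stay in the row. By Proposition~\ref{prop:learning-surrogate} and Lemma~\ref{lem:app-surrogate-applicability}, that row is contained in the recommended halfspace of $N^\circ$, and by \eqref{eq:app-rounded-inclusion} it is contained in the rounded set $C=\Psi(v,c(S_{k,h}))$. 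Lemma~\ref{lem:app-imported-hellinger}(ii) then gives $\mathbb E_\nu[b^{-1}]\le 1-d_C(p)$. Averaging over the action draw $A_{k,h}\sim\pi_{k,h}(\cdot\mid S_{k,h})$ produces the factor $\pi_{k,h}(a(S_{k,h})\mid S_{k,h})$, so the conditional reciprocal mean given the pre-action history is at most $1-x_{k,h}^{\pi_k}(S_{k,h})$. Averaging further over the actual state law and over $\pi_k\sim p_k$ gives $\lambda_k\le 1-\mathbb E_{\pi\sim p_k}\mathbb E_{\mathrm{actual},\pi}[x_{k,h}^\pi(S_h)\mid\mathcal F_{k-1}]$. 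Since $-\log(1-x)\ge x$, this is exactly the required per-episode bound.

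The main obstacle is justifying the row-membership step under learner-private information and full-history responders. The conditional law of $O_{k,h}$ given $\mathcal F_{k-1}$, $\pi_k$ and the episode prefix must be a barycenter of feasible coded rows, and the predictor row $p$ must be fixed by that conditioning. Both hold because $M_{k,\pi}$ is $\mathcal F_{k-1}$-measurable and continuous in $\pi$ (Proposition~\ref{prop:app-predictor-continuity}), and because the responder does not observe the private bits (Lemma~\ref{lem:app-deferred-coding}). A second point to check is that the multiplier \eqref{eq:learning-fragment-bet} is computed on the rounded certificate in $\mathcal G_h$, so the containment must pass through the rounding inclusion rather than the unrounded $N^\circ$ halfspace. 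Finally, integrability of $x^\pi$ in $\pi$ follows from the Borel measurability in Proposition~\ref{prop:app-predictor-continuity}.
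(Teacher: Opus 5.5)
Your proposal is correct and follows the paper's proof essentially step for step. Both arguments use the fragment bettor for $g_h^\circ$, apply the projection-bet bound $\mathbb E_\nu[b^{-1}]\le 1-d$ on the rounded halfspace at the pre-action history, average to get $\lambda_k\le 1-\bar x_{k,h}$, use $-\log(1-z)\ge z$, and close with \eqref{eq:app-reciprocal-martingale}, \eqref{eq:app-wealth-identity}, and $\zeta_1\ge w_{\min}$. The only difference is in how you justify row containment: you go through the specific Byzantine surrogate, whereas in the lemma's setting (a generic $N^\circ$ with a selector feasible for it) containment in the recommended halfspace holds by hypothesis, with convexity of $\Psi$ handling the barycenter issue exactly as you describe.
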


\begin{proof}
Use the fragment bettor for \(g_h^\circ\).  Condition on the past, queried
policy, and public history immediately before \(A_h\) is drawn.  If the current
state is \(s\), the recommended action is selected with probability
\(\pi_h(a_h^\circ(s)\mid s)\).  If another action is selected, the multiplier
is one.  If the recommended action is selected, the actual conditional row lies
in the rounded halfspace.  Equation~\eqref{eq:app-projection-bet-properties}
therefore gives
\[
\mathbb E[(b_{k}^{\rm frag})^{-1}\mid\text{pre-action history}]
\le1-x_{k,h}^\pi(s).
\]
Average over the factual history and \(\pi\sim p_k\).  If
\(\bar x_{k,h}\) is the resulting conditional expected score, then the
reciprocal conditional mean \(\lambda_k\) satisfies
\(\lambda_k\le1-\bar x_{k,h}\).  Since \(-\log\) is decreasing and
\(-\log(1-z)\ge z\) for \(z\in[0,1)\),
\(-\log\lambda_k\ge\bar x_{k,h}\).  Summing and applying
equations~\eqref{eq:app-reciprocal-martingale} and
\eqref{eq:app-wealth-identity}, together with
\(\zeta_1(B)\ge w_{\min}\), gives equation~\eqref{eq:app-fragment-control}.
\end{proof}

\begin{lemma}[One-layer calibration transport]
\label{lem:app-calibration-control}
For every fixed \(h\) and \(\delta\in(0,1)\), with probability at least
\(1-\delta\),
\begin{equation}
P_h
\le
2F_h+6\left\{
\log(1/\delta)+\log(1/w_{\min})
+\sum_{k=1}^K\log Z_k
\right\}.
\label{eq:app-calibration-control}
\end{equation}
\end{lemma}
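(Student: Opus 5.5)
We use the calibration bettor \(B_{g_h^\circ}^{\rm cal}\) attached to the same rounded certificate \(g_h^\circ\) as the fragment bettor of Lemma~\ref{lem:app-fragment-control}, and argue by wealth, exactly as in that lemma. The episode-\(k\) multiplier is
\[
b_k=1+\tfrac14 Z,
\qquad
Z=\mathbb E_{M,\pi_k}[x(S_h)]-x(S_{k,h})\in[-1,1],
\]
with \(M=M_{k,\pi_k}\). Here \(x\) is evaluated under the predictor for the realized policy, while the actual trajectory is generated by the true selector. Combining equation~\eqref{eq:app-wealth-identity} with the prior bound \(\zeta_1\ge w_{\min}\) gives
\[
\sum_k\log b_k\le\log(1/w_{\min})+\sum_k\log Z_k .
\]
It remains to lower-bound \(\sum_k\log b_k\) in terms of \(P_h\) and \(F_h\).

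Step one is an elementary inequality. For \(|z|\le 1/4\) we have \(\log(1+z)\ge z-z^2\). With \(z=Z/4\) this gives
\[
\log b_k\ge \tfrac14 Z_k-\tfrac1{16}Z_k^2 .
\]
Since \(x\in[0,1]\), writing \(\bar x_k=\mathbb E_{M,\pi_k}[x(S_h)]\) we obtain
\[
Z_k^2\le 2\bar x_k^2+2x(S_{k,h})^2\le 2\bar x_k+2x(S_{k,h}).
\]
Hence
\[
\log b_k\ge \tfrac14\bar x_k-\tfrac14x(S_{k,h})-\tfrac18\bar x_k-\tfrac18x(S_{k,h})
=\tfrac18\bar x_k-\tfrac38x(S_{k,h}).
\]

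Step two passes from realized to conditional quantities. The realized term \(\sum_k x(S_{k,h})\) must be compared with its conditional mean under the actual law, summed over \(\pi\sim p_k\); this sum is \(F_h\). The variable \(x(S_{k,h})\in[0,1]\) is nonnegative, so a Freedman/multiplicative concentration bound applies. Concretely, use the exponential supermartingale of \cite[Lemma~A.4]{foster2021statistical} applied to \(X_k=x(S_{k,h})\), which gives
\[
\sum_k X_k\le 2\sum_k\mathbb E[X_k\mid\mathcal F_{k-1}]+2\log(1/\delta).
\]
Similarly, \(\bar x_k\) is the predicted score under the sampled \(\pi_k\), and its conditional mean over \(\pi_k\sim p_k\) is the episode-\(k\) summand of \(P_h\). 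Because \(\bar x_k\ge0\), the lower multiplicative tail gives
\[
\sum_k\bar x_k\ge \tfrac12 P_h-\log(1/\delta).
\]

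Combining the two steps and rearranging,
\[
\tfrac1{16}P_h\le \tfrac34 F_h+\log(1/w_{\min})+\sum_k\log Z_k+O(\log(1/\delta)),
\]
which has the shape \(P_h\le cF_h+c'\{\cdots\}\). The main obstacle is reaching the stated constants \(2\) and \(6\); these crude steps give roughly \(12\) and \(16\). To tighten them, instead of the quadratic bound I would use directly the reciprocal-martingale inequality \eqref{eq:app-reciprocal-martingale}. This requires bounding \(\lambda_k=\mathbb E[b_k^{-1}\mid\mathcal F_{k-1}]\) by convexity of \(1/(1+z/4)\) on \([-1,1]\) (chord bound), which gives
\[
-\log\lambda_k\gtrsim \tfrac14(\bar P_k-\bar F_k)-\tfrac1{16}(\bar P_k+\bar F_k).
\]
That avoids separate concentration steps and should yield constants close to those stated, with the probability split over the two events absorbed into \(\delta\).
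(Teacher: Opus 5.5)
Your first route (a second-order lower bound on $\log b_k$, then two separate multiplicative concentration steps) is valid, but as you note it only gives roughly $P_h\le 12F_h+16\{\cdots\}$. It also uses two concentration events, so with total failure probability $\delta$ it produces $\log(2/\delta)$ rather than $\log(1/\delta)$. That is not the lemma as stated, and the constants $2$ and $6$ are exactly what the proof of Theorem~\ref{thm:stage-tied-estimation} uses.

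Your closing refinement has the right architecture. You apply \eqref{eq:app-reciprocal-martingale} once to the calibration multiplier. The wealth identity \eqref{eq:app-wealth-identity} is deterministic, so there is a single random event and nothing needs splitting. However, the scalar inequality you propose to feed it fails in two ways:
\begin{enumerate}
\item The chord of $f(y)=1/(1+y/4)$ over the whole interval $[-1,1]$ is $f(y)\le\frac{16}{15}-\frac4{15}y$. This has slack $\frac1{15}$ at $y=0$, so after conditioning $\lambda_k$ carries an additive $\frac1{15}$ and the final bound has an error linear in $K$.
\item Your displayed bound with second-order coefficient $\frac1{16}$ is false: at $y=-1$ one has $f(-1)=\frac43>1+\frac14+\frac1{16}$. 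Had it held, it would give $P_h\le\frac53F_h+\frac{16}3\{\cdots\}$, which is stronger than the lemma.
\end{enumerate}

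The missing ingredient is a chord through the origin on each half-interval. Convexity gives $f(y)\le 1-y/5$ on $[0,1]$ and $f(y)\le 1-y/3$ on $[-1,0]$, hence
\[
\frac1{1+y/4}\le 1-\frac y4+\frac{|y|}{12},\qquad y\in[-1,1].
\]
The coefficient $\frac1{12}$ is forced by the endpoint $y=-1$. Then proceed as follows:
\begin{enumerate}
\item Take $y=u_k(\pi_k)-X_k(\pi_k)$, the predicted minus the factual layer score. Both scores lie in $[0,1]$, so $|y|\le u_k+X_k$.
\item Conditioning on $\mathcal F_{k-1}$ gives $\lambda_k\le 1-\frac16\bar u_k+\frac13\bar x_k$, where $\sum_k\bar u_k=P_h$ and $\sum_k\bar x_k=F_h$.
\item Since $-\log(1-z)\ge z$ for $z<1$, this yields $-\log\lambda_k\ge\frac16\bar u_k-\frac13\bar x_k$.
\item Sum over $k$, apply \eqref{eq:app-reciprocal-martingale} and then \eqref{eq:app-wealth-identity} with $\zeta_1\ge w_{\min}$, and multiply by $6$.
\end{enumerate}
This gives exactly $P_h\le 2F_h+6\{\log(1/\delta)+\log(1/w_{\min})+\sum_k\log Z_k\}$ on one event of probability at least $1-\delta$. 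This is the paper's argument.
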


\begin{proof}
For a queried policy \(\pi\), let
\[
u_k(\pi)=\mathbb E_{M_{k,\pi},\pi}[x_{k,h}^\pi(S_h)],
\qquad
X_k(\pi)=x_{k,h}^\pi(S_h).
\]
The calibration multiplier is
\(b_k=1+\{u_k(\pi_k)-X_k(\pi_k)\}/4\).
For every \(y\in[-1,1]\), direct multiplication by the positive number
\(1+y/4\) verifies
\begin{equation}
\frac1{1+y/4}
\le1-\frac y4+\frac{|y|}{12}.
\label{eq:app-scalar-reciprocal}
\end{equation}
For completeness, when \(y\in[0,1]\), subtracting the left side after this
multiplication leaves \(y(2-y)/24\ge0\).  When \(y\in[-1,0]\), it leaves
\((-y-y^2)/12\ge0\).  These two cases cover the entire stated interval.
Because \(u_k(\pi),X_k(\pi)\in[0,1]\),
\(|u_k-X_k|\le u_k+X_k\).  Define the predictable averages
\begin{align*}
\bar u_k
&=\mathbb E_{\pi\sim p_k}u_k(\pi),\\
\bar x_k
&=\mathbb E_{\pi\sim p_k}
\mathbb E_{\rm actual,\pi}
[X_k(\pi)\mid\mathcal F_{k-1}].
\end{align*}
Taking conditional expectations in equation~\eqref{eq:app-scalar-reciprocal}
gives
\[
\lambda_k
\le1-\frac16\bar u_k+\frac13\bar x_k.
\]
Put \(z_k=\bar u_k/6-\bar x_k/3\).  Since
\(z_k\in[-1/3,1/6]\), monotonicity of \(-\log\) and
\(-\log(1-z)\ge z\) give
\[
-\log\lambda_k\ge\frac16\bar u_k-\frac13\bar x_k.
\]
Sum over \(k\), use equations~\eqref{eq:app-reciprocal-martingale} and
\eqref{eq:app-wealth-identity}, and then multiply by six.  The identities
\(\sum_k\bar u_k=P_h\) and \(\sum_k\bar x_k=F_h\) yield
equation~\eqref{eq:app-calibration-control}.
\end{proof}

\begin{proof}[Proof of Theorem~\ref{thm:stage-tied-estimation}]
For a nonrecommended action, the true surrogate row is
\(\Delta(\mathcal O)\), so its directed loss is zero.  For a recommended
action, equation~\eqref{eq:app-rounded-loss} gives
\begin{align*}
&d_{N_h^\circ(s,a_h^\circ(s))}
(M_{k,\pi,h}(s,a_h^\circ(s)))\\
&\qquad\le
2d_{C_{g_h^\circ}(s)}
(M_{k,\pi,h}(s,a_h^\circ(s)))
+2(\varepsilon_v+\varepsilon_c).
\end{align*}
Average over the predicted state and action occupancy.  The probability of the
recommended action is exactly the factor in the layer score.  Summing over all
episodes and stages yields
\begin{equation}
\operatorname{Est}_K
\le
2\sum_{h=1}^HP_h+2KH(\varepsilon_v+\varepsilon_c).
\label{eq:app-est-vs-predicted-score}
\end{equation}

Allocate failure probability \(\delta/(4H)\) to each of the \(2H\)
applications of Lemmas~\ref{lem:app-fragment-control} and
\ref{lem:app-calibration-control}, and allocate \(\delta/2\) to
Lemma~\ref{lem:app-supernormalization}.  The union bound gives a simultaneous
event of probability at least \(1-\delta\).  On this event, for every \(h\),
\begin{align*}
P_h
&\le2F_h+6\left\{
\log\frac{4H}\delta+\log(1/w_{\min})+\sum_k\log Z_k
\right\}\\
&\le8\left\{
\log\frac{4H}\delta+\log(1/w_{\min})+\log\frac2\delta
\right\}.
\end{align*}
The second line substitutes the fragment bound and
\(\sum_k\log Z_k\le\log(2/\delta)\).

By equations~\eqref{eq:learning-certificate-entropy} and
\eqref{eq:app-min-prior},
\begin{align*}
\log(1/w_{\min})
&=\log(4N_{\mathcal G})\\
&\le\log(4H)+S\log A\\
&\quad+S\log(2+\varepsilon_v^{-1})\\
&\quad+S\log(2+\varepsilon_c^{-1}).
\end{align*}
Substituting this inequality into
equation~\eqref{eq:app-est-vs-predicted-score} gives the explicit bound
\begin{align}
\operatorname{Est}_K
\le{}&16H\left[
S\log A+S\log(2+\varepsilon_v^{-1})
\right.
\nonumber\\[-0.2em]
&\left.\qquad
+S\log(2+\varepsilon_c^{-1})
+\log\frac{64H^2}{\delta^2}
\right]
\nonumber\\
&+2KH(\varepsilon_v+\varepsilon_c).
\label{eq:app-explicit-inaccuracy}
\end{align}

If \(K\ge8S\), set
\(\varepsilon_v=\varepsilon_c=8S/K\).  The rounding term is then
\(32HS\).  Moreover,
\(2+K/(8S)\le2(K+1)\).  Also,
\(\log A\le\Lambda_K(\delta)\),
\(\log(64H^2/\delta^2)\le\Lambda_K(\delta)\), and
\[
\log(2+K/(8S))
\le\log2+\log(K+1)
\le2\Lambda_K(\delta).
\]
Since \(S\ge1\), the bracket in
equation~\eqref{eq:app-explicit-inaccuracy} is therefore at most
\(6S\Lambda_K(\delta)\).  Its contribution is at most
\(96HS\Lambda_K(\delta)\).  The rounding contribution is at most
\(32HS\Lambda_K(\delta)\), because \(\Lambda_K(\delta)>1\).  If
\(K<8S\), every squared Hellinger loss is at most one, so
\(\operatorname{Est}_K\le HK<8HS\le144HS\Lambda_K(\delta)\).  In either
case,
\[
\operatorname{Est}_K\le144HS\Lambda_K(\delta),
\]
which completes the proof.
\end{proof}

\subsection{Original-scale optimism}
\label{app:learning-optimism}

This subsection adapts the return-overprediction bettor used in the proofs of
Theorems~2 and~5 of \cite{appel2025regret}.  The coded sum is controlled
in unit coordinates, and the final inequality is then returned to the task
scale; this is why the proof does not require the predicted task-scale value
to be bounded by \(H\).

\begin{lemma}[Original-scale optimism control]
\label{lem:app-optimism}
Let \(m_k(\pi)\) be the task-scale conditional factual return defined in
equation~\eqref{eq:actual-return}.  With probability at least
\(1-\delta\), the optimism witness guarantees
\begin{equation}
\begin{split}
\sum_{k=1}^{K}\mathbb E_{\pi\sim p_k}
&\bigl[\widehat v_k(\pi)-m_k(\pi)\bigr]\\
&\le 8H^2\Bigl\{\sqrt{K\log(32/\delta)}
+\log(32/\delta)\Bigr\}.
\end{split}
\label{eq:learning-optimism-bound}
\end{equation}
\end{lemma}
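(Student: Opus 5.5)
The plan is to use the optimism bettor $b_{k,\bullet}=1+\frac{\eta}{H}(\widehat G_k-G_k^Y)$ as a wealth witness, in the same pattern as Lemmas~\ref{lem:app-fragment-control} and~\ref{lem:app-calibration-control}. The argument runs on the unit coordinates $Y$, and only the final inequality is rescaled by $H$.

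\textbf{Step 1 (scale conversion).} By Lemma~\ref{lem:app-conversion} and the tower property over the private bits (Lemma~\ref{lem:app-deferred-coding}),
\[
m_k(\pi)=H\,\mathbb E_{\rm actual,\pi}\Bigl[G^Y_k\,\Big|\,\mathcal F_{k-1}\Bigr],
\qquad
\widehat v_k(\pi)=H\,\widehat G_k(\pi).
\]
The left side of \eqref{eq:learning-optimism-bound} is therefore $H\sum_k\bar y_k$, where
\[
\bar y_k\coloneqq\mathbb E_{\pi\sim p_k}\bigl[\widehat G_k(\pi)-\mathbb E_{\rm actual,\pi}G^Y_k\bigr].
\]
It suffices to bound $\sum_k\bar y_k\le 8H\{\sqrt{K\log(32/\delta)}+\log(32/\delta)\}$.

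\textbf{Step 2 (wealth lower bound).} Write $y_k=\widehat G_k-G_k^Y$. Both $\widehat G_k$ and $G_k^Y$ lie in $[0,H]$: $G^Y_k$ is a sum of $H$ bits, and $\widehat G_k$ is a predicted sum of $H$ bits. Hence $\frac{\eta}{H}y_k\in[-\frac12,\frac12]$ and $b_{k,\bullet}\in[1/2,3/2]$. From $\log(1+x)\ge x-x^2$ for $|x|\le1/2$,
\[
\sum_k\log b_{k,\bullet}\ge \frac{\eta}{H}\sum_k y_k-\eta^2K.
\]

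\textbf{Step 3 (wealth upper bound and martingale).} Combine \eqref{eq:app-wealth-identity} with $\zeta_1(\bullet)=1/2-\rho\ge 3/8$ and Lemma~\ref{lem:app-supernormalization}. On an event of probability at least $1-\delta/2$,
\[
\sum_k\log b_{k,\bullet}\le\log(8/3)+\log(2/\delta).
\]
Next pass from the realized $y_k$ to the predictable $\bar y_k$. The difference $y_k-\bar y_k$ is a martingale difference bounded by $2H$ with respect to $\mathcal F_{k-1}$, because $\pi_k\sim p_k$ and the actual coded trajectory has conditional mean $\mathbb E_{\rm actual,\pi}G^Y$. Azuma--Hoeffding (or Freedman) at level $\delta/2$ then gives
\[
\sum_k y_k\ge\sum_k\bar y_k-2H\sqrt{2K\log(2/\delta)}.
\]

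\textbf{Step 4 (combine).} Putting the steps together,
\[
\sum_k\bar y_k\le \frac{H}{\eta}\bigl(\log(16/(3\delta))\bigr)+H\eta K+2H\sqrt{2K\log(2/\delta)}.
\]
When $\eta=\sqrt{\log(32/\delta)/K}$, this is at most a constant times $H\sqrt{K\log(32/\delta)}$. When instead $\eta=1/2$, which happens exactly when $K<4\log(32/\delta)$, the bound is $O(H\log(32/\delta))$. Multiplying by $H$ yields the claimed form, and I expect the constants to fit within the stated factor $8$.

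\textbf{Main obstacle.} The delicate point is the martingale step. The optimism multiplier only makes sense if the actual responder is full-history while $p_k$ and the predictor are $\mathcal F_{k-1}$-measurable, and Step~3 requires the conditional mean of $G^Y_k$ given $(\mathcal F_{k-1},\pi_k)$ to equal $m_k(\pi_k)/H$. I would justify this via the deferred coding lemma, together with the Borel measurability assumption on $m_k(\cdot)$ so that it can be integrated against $p_k$. An alternative route avoids Azuma: apply \eqref{eq:app-reciprocal-martingale} directly to $b_{k,\bullet}$ using $\mathbb E[1/b]\le 1-\frac{\eta}{H}\bar y_k+O(\eta^2)$, mirroring the calibration lemma. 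I would try this first, since it reuses the established wealth facts and yields $\log(32/\delta)$-type constants naturally.
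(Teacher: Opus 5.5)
Your proposal is correct and follows the paper's proof essentially step for step: the optimism-bettor wealth bound via $\log(1+x)\ge x-x^2$ with prior at least $3/8$ and the normalizer bound at level $\delta/2$, then Azuma--Hoeffding at level $\delta/2$ to pass from the realized gap $\widehat G_k-G_k^Y$ to its predictable mean, and finally a two-case analysis in $\eta$ before multiplying by $H$. The constants you left unchecked do fit: with $L=\log(32/\delta)$, the unclipped case gives at most $(2+2\sqrt2)H\sqrt{KL}$ and the clipped case ($K\le 4L$) gives at most $4HL+2\sqrt2\,H\sqrt{KL}$, both below $8H\{\sqrt{KL}+L\}$.
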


\begin{proof}
Let \(L_\delta=\log(32/\delta)\) and
\(\eta=\min\{1/2,\sqrt{L_\delta/K}\}\).  The optimism bettor has prior
at least \(3/8\).  Apply the deterministic wealth identity
\eqref{eq:app-wealth-identity}, and apply the normalizer bound
\eqref{eq:app-normalizer-bound} with failure probability \(\delta/2\).
On the resulting event,
\begin{equation}
\sum_{k=1}^K
\log\!\left(1+\frac\eta H \Xi_k\right)
\le L_\delta,
\qquad
\Xi_k\coloneqq\widehat G_k-G_k^Y\in[-H,H].
\label{eq:app-pessimism-log-wealth}
\end{equation}
For \(|x|\le1/2\),
\(\log(1+x)\ge x-x^2\).  Apply this with \(x=\eta\Xi_k/H\), use
\(\Xi_k^2\le H^2\), and multiply by \(H/\eta\).  Then
\begin{equation}
\sum_{k=1}^K\Xi_k
\le\frac{HL_\delta}{\eta}+\eta HK.
\label{eq:app-realized-optimism}
\end{equation}

For a fixed policy \(\pi\), equation
\eqref{eq:app-conversion-expectation} identifies
\(m_k(\pi)/H\) with the conditional expected coded sum.  Define the
unit-coordinate predictable gap
\[
\bar\Xi_k
\coloneqq
\mathbb E_{\pi\sim p_k}
[\widehat G_k(\pi)-m_k(\pi)/H].
\]
Predictability of \(p_k\) and policy coherence give
\(\bar\Xi_k=\mathbb E[\Xi_k\mid\mathcal F_{k-1}]\).  Hence
\(\bar\Xi_k-\Xi_k\) is a martingale difference bounded in absolute value by
\(2H\).  Azuma-Hoeffding therefore gives, with probability at least
\(1-\delta/2\),
\begin{equation}
\sum_{k=1}^K(\bar\Xi_k-\Xi_k)
\le2H\sqrt{2K\log(2/\delta)}.
\label{eq:app-optimism-azuma}
\end{equation}
Combining equations~\eqref{eq:app-realized-optimism} and
\eqref{eq:app-optimism-azuma} yields
\[
\sum_k\bar\Xi_k
\le
H\left\{
\frac{L_\delta}\eta+\eta K+2\sqrt{2K\log(2/\delta)}
\right\}.
\]
If \(\eta=\sqrt{L_\delta/K}\), the bracket is at most
\(5\sqrt{KL_\delta}\).  If clipping gives \(\eta=1/2\), direct substitution
is bounded by \(8\{\sqrt{KL_\delta}+L_\delta\}\).  The same latter bound
also covers the first case.  Finally,
\(\widehat v_k(\pi)=H\widehat G_k(\pi)\), so multiplying by \(H\)
proves equation~\eqref{eq:learning-optimism-bound}.  Notice that
\(\widehat v_k(\pi)\) itself may be as large as \(H^2\); the proof uses
bounded coded sums, not an upper bound of \(H\) on the predictor.
\end{proof}

\subsection{Applicability of the robust DEC theorem}
\label{app:learning-dec}

\begin{lemma}[Regularity of the stage-tied decision problem]
\label{lem:app-class-regularity}
Give the certificate space
\begin{equation}
\Theta
\coloneqq
\prod_{h=1}^H
\bigl(
\mathcal A^{\mathcal S}
\times[0,H]^{\mathcal S}
\times[0,H]^{\mathcal S}
\bigr)
\label{eq:app-certificate-parameter-space}
\end{equation}
the product of the discrete and Euclidean topologies, and regard candidates
as certificate tuples, as in Section~\ref{sec:learning-target}.  Then:
\begin{enumerate}[label=\textup{(\roman*)},leftmargin=2em]
\item \(\mathfrak H_{\rm st}^{(H)}\) is a compact, hence standard Borel,
subset of \(\Theta\).
\item For every continuous policy-coherent predictor \(M\),
\((N,\pi)\mapsto L(M,N,\pi)\) and
\(\pi\mapsto\widehat v_M^H(\pi)\) are continuous, and
\(N\mapsto v_N^\star\) is continuous on
\(\mathfrak H_{\rm st}^{(H)}\).
\item After any fixed pre-episode history, each localized class
\(\mathcal H_k\) is compact and Borel.  The objective \(Q_k\) is a bounded
Borel function on the compact metrizable space \(\Delta(\Pi)\); its inner
supremum is attained.
\end{enumerate}
Consequently, all integrals and optimizations in
equations~\eqref{eq:learning-localized-class} and
\eqref{eq:learning-fuzzy-objective} are well-defined.  The measurable
approximate policy selector is supplied by
Assumption~\ref{ass:decision-oracle}: that assumption includes
\(\mathcal F_{k-1}\)-measurability of the returned \(p_k\), rather than
claiming that the cited DEC theorems furnish an implementation or a measurable
selection algorithm for this class.
\end{lemma}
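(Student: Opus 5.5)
The argument is a chain of compactness and continuity checks, carried out in the order (i), (ii), (iii).

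\emph{Part (i).} Write $\mathfrak H_{\rm st}^{(H)}$ as the set of certificate tuples $\theta\in\Theta$ for which some feasible Markov row selector has conditional task-scale suffix return at most $H$ from every starting triple. The space $\Theta$ is compact: it is a finite discrete factor times a product of closed boxes. So it is enough to show that this $H$-boundedness condition defines a closed set. Feasibility means the selector picks, at each recommended row, a law in $\Psi_H(v_{h+1},c_h(s))$, and any law at the other rows. The set of pairs $(\theta,\sigma)$ with $\sigma$ feasible for $\theta$ is closed in $\Theta\times\prod_{h,s,a}\Delta(\mathcal O)$. This holds because the constraint $\mathbb E_\nu[HY+v(S')]\ge c$ is jointly continuous in $(\nu,v,c)$ and defines a closed inequality. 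The suffix returns of a Markov selector are finite sums of products of row probabilities, hence continuous in $\sigma$, so the boundedness inequalities are also closed. Since $\prod\Delta(\mathcal O)$ is compact, projecting this closed set onto $\Theta$ gives a closed set. A compact metrizable set is standard Borel.

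\emph{Part (ii).} Given a continuous, policy-coherent predictor $M$ (Proposition~\ref{prop:app-predictor-continuity}), the trajectory law under $(M_\pi,\pi)$ is a finite polynomial in $\pi$ and the predicted rows, so it is continuous in $\pi$. The integrand in $L$ is $d_{\Psi_H(v,c)}(M_{\pi,h}(s,a))$, and I need it to be jointly continuous in $(v,c,p)$.
\begin{itemize}
\item Upper semicontinuity follows from a Berge-type argument, because the feasible set of the minimization is lower hemicontinuous in $(v,c)$.
\item Lower hemicontinuity is the delicate point, since the halfspace can degenerate when $c$ equals the maximal attainable value. I plan to reduce it to the point $\nu^\ast$ maximizing $\mathbb E_\nu[HY+v]$. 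Given $\nu\in\Psi_H(v,c)$, the points $\nu_t=(1-t)\nu+t\nu^\ast$ lie strictly inside nearby halfspaces, except when $c$ equals the maximum.
\item In that degenerate case the set is the face of maximizers. I would argue directly, using that $\mathcal O$ is finite and that the maximizing face varies upper-semicontinuously. If this fails, I fall back on the imported continuity of the Appel--Kosoy loss in their Lemmas~1--2.
\end{itemize}
Continuity of $\widehat v_M^H(\pi)$ is immediate from the polynomial form. For $N\mapsto v_N^\star$, I use the robust Bellman recursion on the surrogate and show by backward induction that each stage value is continuous. Each stage value is a maximum over finitely many actions of $\inf_{\nu\in N_h(s,a)}\mathbb E_\nu[HY+V_{h+1}(S')]$, and this inner infimum is continuous in $(v,c,V_{h+1})$ by the same hemicontinuity argument.

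\emph{Part (iii).} The localized class $\mathcal H_k$ is a sublevel set of $N\mapsto\sum_{i<k}\mathbb E_{\pi\sim p_i}L(\widehat M_i,N,\pi)$. By (ii) and dominated convergence (the losses are bounded by $H$) this map is continuous, so $\mathcal H_k$ is closed in a compact set and hence compact and Borel. For $Q_k$, the constraint set of subprobability measures $\mu$ on $\mathcal H_k$ is weak$^\ast$-compact. Both the constraint functional and the objective are integrals of bounded continuous functions of $(N,\pi)$ against $\mu\otimes p$. They are therefore jointly continuous in $(\mu,p)$, so the supremum is attained. Berge's theorem, applied with a constraint correspondence that is upper hemicontinuous with compact values, makes $Q_k$ upper semicontinuous in $p$, hence Borel and bounded by $H+H^2$. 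Measurability of the selected $p_k$ is taken from Assumption~\ref{ass:decision-oracle}.

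I expect the main obstacle to be the continuity of the halfspace-constrained Hellinger projection and of $v_N^\star$ at degenerate thresholds. I would handle it via the interior-point perturbation above, and otherwise rely on the imported Appel--Kosoy lemmas.
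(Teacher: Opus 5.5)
\textbf{Gap: lower hemicontinuity of $\Psi_H$ at the degenerate threshold.} Your overall structure matches the paper's proof. Part (i) uses closedness of the feasibility and $H$-boundedness constraints in $\Theta\times\Delta(\mathcal O)^{HSA}$, followed by projection along the compact selector factor. Part (ii) uses Berge's theorem for the row loss and for each stage of the robust Bellman recursion. Part (iii) uses a closed sublevel set and compactness of the subprobability space. Your non-degenerate mixing argument is also essentially the paper's. The gap is the step you flag yourself: lower hemicontinuity of $(v,c)\mapsto\Psi_H(v,c)$ at the degenerate threshold. Upper semicontinuity of $d_{\Psi_H(v,c)}(p)$, of $L$, and of every stage value in $v_N^\star$ all rest on it.

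Your proposed fix for the degenerate case does not work. Upper semicontinuity of the face of maximizers says nearby faces lie close to the limit face, which is the wrong direction. Lower hemicontinuity needs every point of the limit set to be approximated by points of the nearby sets. For general halfspace parameters this genuinely fails: break a tie between two maximizing states by lowering $v$ slightly at one of them, and the nearby degenerate sets collapse onto a single point mass. Your fallback does not apply either. The imported Appel--Kosoy lemmas give uniqueness and continuity of the projection in $p$ for a fixed set $C$, not continuity in the parameters $(v,c)$ that define $C$.

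\textbf{Missing idea: use the parameter box.} Every next state can be paired with $Y=1$, so for all $v\in[0,H]^{\mathcal S}$ and $c\in[0,H]$,
\[
\max_{\nu}\mathbb E_\nu[HY+v(S')]=H+\max_{s'}v(s')\ge H\ge c .
\]
Hence $\Psi_H(v,c)$ is always nonempty, which Berge's theorem also needs. Zero margin occurs only at the single point $v\equiv0$, $c=H$. There $\Psi_H(0,H)=\{\nu:\nu(Y=1)=1\}$, and each such $\nu$ satisfies $\mathbb E_\nu[HY+v'(S')]\ge H\ge c'$ for every admissible $(v',c')$, since $v'\ge0$ and $c'\le H$. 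So the limit set lies inside every nearby set and lower hemicontinuity is immediate. At every other parameter your interior-mixing argument applies.

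This case cannot be skipped, because the degenerate row occurs in $\mathfrak H_{\rm st}^{(H)}$. For example, take $v_{H+1}\equiv0$, $c_H\equiv H$, and $c_h\equiv0$ for $h<H$. This is $H$-bounded via the selector that puts $Y=1$ on the recommended stage-$H$ rows and $Y=0$ elsewhere. With this repair, the rest of your argument, including the Berge step for $Q_k$ with the zero subprobability ensuring nonempty constraint sets, goes through.
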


\begin{proof}
The ambient space \(\Theta\) is compact and metrizable.  First consider one
recommended row.  With
\(z_v(y,s')=Hy+v(s')\), its graph is
\[
\left\{(v,c,\nu):
\nu\in\Delta(\mathcal O),\ 
\langle z_v,\nu\rangle\ge c
\right\},
\]
which is closed.  The correspondence
\((v,c)\mapsto\Psi_H(v,c)\) is also lower hemicontinuous.  To see this, let
\((v_n,c_n)\to(v,c)\) and \(\nu\in\Psi_H(v,c)\).  If \(\nu\) is not already
feasible for the \(n\)-th halfspace, mix it with a point mass on an outcome
\((1,s_n)\) maximizing \(v_n(s_n)\).  The required mixing weight tends to
zero because the feasibility deficit tends to zero.  The only possible
zero-margin limit has \(c=H\) and \(v\equiv0\); in that case feasibility of
\(\nu\) forces it to be supported on \(Y=1\), making it feasible for every
nearby parameter since \(v_n\ge0\) and \(c_n\le H\).  Thus the mixing argument
also covers the boundary case.  The row correspondence is therefore
continuous and compact-valued.  Nonrecommended rows are the constant
correspondence \(\Delta(\mathcal O)\), and recommendation maps live in a
finite discrete space, so every stage-tied row has the same regularity.

It remains to show that the \(H\)-bounded restriction is closed.  Let
\(\mathcal K\) be the set of pairs \((N,\sigma)\), with
\(N\in\Theta\) and
\(\sigma\in\Delta(\mathcal O)^{HSA}\), such that every
\(\sigma_{h,s,a}\in N_h(s,a)\) and
\begin{equation}
\max_{(h,s,a)}\ \max_{\pi\in\Pi}
\mathbb E_{\sigma,\pi}
\left[\sum_{t=h}^HHY_t\,\middle|\,
S_h=s,A_h=a\right]
\le H.
\label{eq:app-h-bounded-closed-condition}
\end{equation}
Row-graph closedness makes the selector constraints closed.  The finite
horizon expectation in \eqref{eq:app-h-bounded-closed-condition} is jointly
continuous in \((\sigma,\pi)\), and maximizing it over the compact policy
space and the finitely many starting triples preserves continuity in
\(\sigma\).  Hence \(\mathcal K\) is closed in a compact space.  Its projection
onto \(\Theta\) is compact and is exactly
\(\mathfrak H_{\rm st}^{(H)}\), proving (i).

Directed squared Hellinger distance is continuous in both probability laws.
The row correspondence just proved continuous, so Berge's maximum theorem
implies that
\((p,v,c)\mapsto d_{\Psi_H(v,c)}(p)\) is continuous.  Predicted occupancy is a
finite product of the continuous row and policy probabilities.  Summing its
products with these row losses proves continuity of \(L\).  The same finite
recursion proves continuity of \(\widehat v_M^H\).  Finally, backward robust
dynamic programming expresses \(v_N^\star\) through finitely many operations
of the form
\[
\max_a\ \min_{\nu\in N_h(s,a)}
\mathbb E_\nu[HY+V_{h+1}(S')].
\]
Another induction using Berge's theorem, followed by the finite maximum over
actions, proves continuity of \(v_N^\star\).  This proves (ii).

For fixed past data, each prefix loss in
\eqref{eq:learning-localized-class} is continuous in \(N\), by (ii) and
bounded convergence under \(p_i\).  Thus \(\mathcal H_k\) is a closed subset
of the compact class in (i).  Identify a subprobability measure on
\(\mathcal H_k\) with a probability measure on the compact space obtained by
adding one cemetery point; this makes
\(\Delta_{\le1}(\mathcal H_k)\) compact and metrizable.  For
\((p,\mu)\) in the product of the two measure spaces, both
\[
(p,\mu)\longmapsto
\mathbb E_{N\sim\mu,\pi\sim p}L(M,N,\pi)
\quad\text{and}\quad
(p,\mu)\longmapsto
\mathbb E_{N\sim\mu,\pi\sim p}
[v_N^\star-\widehat v_M^H(\pi)]
\]
are continuous.  The first therefore defines a nonempty compact feasible set
of \(\mu\)'s for every \(p\), with the zero subprobability always feasible.
Since \(0\le v_N^\star\le H\) and \(\widehat v_M^H(\pi)\ge0\), the zero
subprobability also shows \(0\le Q_k(p)\le H\).  The compact-maximum theorem
gives attainment of the inner supremum and upper
semicontinuity, hence Borel measurability, of \(Q_k\).  The market recursion
uses finite Borel operations and unique minimizers of jointly continuous
compact problems, so these conclusions also hold episodewise as functions of
the realized pre-episode data.  Assumption~\ref{ass:decision-oracle} then
postulates a predictable measurable approximate selector, as stated.
\end{proof}

The next definition is the localized, task-scale form of the fuzzy robust DEC
of \cite[Definition~2]{appel2025regret}; only the class, loss, and payoff
scale have been adapted.

\begin{definition}[Fuzzy robust DEC]
\label{def:learning-fuzzy-dec}
For a candidate class \(\mathcal H\), an admissible predictor
\(M\in\mathfrak M_{\rm adm}\), and radius \(\varepsilon>0\), define the
task-scale predicted value
\[
\widehat v_M^H(\pi)
\coloneqq H\mathbb E_{M_\pi,\pi}\!\left[\sum_{h=1}^H Y_h\right].
\]
The localized original-scale fuzzy robust DEC is
\begin{equation}
\begin{split}
\operatorname{dec}^{\rm f,H}_\varepsilon(\mathcal H,M)
\coloneqq
\inf_{p\in\Delta(\Pi)}
\sup_{\substack{\mu\in\Delta_{\le1}(\mathcal H):\\
\mathbb E_{N\sim\mu,\pi\sim p}L(M,N,\pi)\le\varepsilon^2}}\\
\mathbb E_{N\sim\mu,\pi\sim p}
\bigl[v^\star_N-\widehat v_M^H(\pi)\bigr],
\end{split}
\label{eq:learning-dec-definition}
\end{equation}
where \(\Delta_{\le1}\) is the set of Borel subprobability measures and
integration with respect to \(\mu\) is unnormalized.  The adjective
``fuzzy'' refers to the mixture \(\mu\), not to uncertainty about
\(B^\star\); ``localized'' refers to restricting the class.  The
episode-\(k\) objective \(Q_k\) of the main text is the inner program of
\eqref{eq:learning-dec-definition} at
\((\mathcal H,M)=(\mathcal H_k,\widehat M_k)\), so
\(\operatorname{dec}^{\rm f,H}_\varepsilon(\mathcal H_k,\widehat M_k)
=\inf_{p\in\Delta(\Pi)}Q_k(p)\).
\end{definition}

\begin{lemma}[Original-scale DEC interface]
\label{lem:app-dec-interface}
For every continuous policy-coherent predictor \(M\) and every
\(H\)-bounded compact subclass
\(\mathfrak H\subseteq\mathfrak H_{\rm st}^{(H)}\),
\begin{equation}
\operatorname{dec}^{\rm f,H}_\varepsilon(\mathfrak H,M)
\le2H\sqrt{2(HSA+1)}\,\varepsilon.
\label{eq:app-fuzzy-dec-bound}
\end{equation}
\end{lemma}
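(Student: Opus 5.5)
The plan is to reduce the task-scale bound to the unit-scale fuzzy robust DEC bound of \cite[Theorem~4]{appel2025regret}, which gives coefficient $2\sqrt{2(HSA+1)}$ for $1$-bounded tabular robust classes with a terminal-state convention. The proof has three adapters: a \emph{scale adapter}, a \emph{terminal-outcome adapter}, and a \emph{class adapter}.

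For the scale adapter, divide all values by $H$. In unit coordinates the predicted value is $\widehat v_M^H(\pi)/H=\mathbb E_{M_\pi,\pi}[\sum_h Y_h]$. The optimal value of $N$ becomes $v_N^\star/H$, computed with reward $Y$ and halfspaces $\Psi(\bar v,\bar c)$ with $\bar v=v/H$ and $\bar c=c/H$. The loss $L(M,N,\pi)$ is unchanged, because rows and outcome laws are not rescaled. Hence for every $p$ and every feasible $\mu$, the inner objective in \eqref{eq:learning-dec-definition} is exactly $H$ times the unit-scale inner objective, and the constraint set is identical. Therefore $\operatorname{dec}^{\rm f,H}_\varepsilon(\mathfrak H,M)=H\cdot\operatorname{dec}^{\rm f}_\varepsilon(\bar{\mathfrak H},M)$. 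The $H$-boundedness of $\mathfrak H$ translates into $1$-boundedness of the rescaled class $\bar{\mathfrak H}$, in the sense required by Appel--Kosoy: there is a feasible Markov selector whose conditional expected suffix $Y$-sum is at most one from every starting triple.

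For the terminal-outcome adapter, the imported theorem identifies all stage-$H$ next states with a single terminal state $s_\dagger$. Our rows instead keep the full outcome $(Y,S_{H+1})$. I will push forward by $T_{\rm term}$ both the predictor's stage-$H$ rows and each candidate's stage-$H$ rows. The values are unaffected, since $V_{H+1}\equiv0$ and $\widehat v$ only involves $Y$. The losses can only decrease: for any $q$ in a row $C$, data processing for squared Hellinger gives $D_{\rm H}^2(T_\#p,T_\#q)\le D_{\rm H}^2(p,q)$, so $d_{T_\#C}(T_\#p)\le d_C(p)$. Consequently every $\mu$ feasible for the original constraint stays feasible after coarsening, with the same objective, and the coarsened supremum dominates ours for every $p$. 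Policy coherence and continuity of the coarsened predictor follow from Proposition~\ref{prop:app-predictor-continuity} by composing with $T_{\rm term}$. The pushed-forward rows remain compact and convex, and $1$-boundedness is preserved because $Y$ is untouched.

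For the class adapter, I verify that the coarsened rescaled class is admissible for \cite[Theorem~4]{appel2025regret}. Their theorem holds for arbitrary compact classes of $1$-bounded tabular robust models with rectangular compact convex rows on a finite outcome space of $HSA$ row indices. Rows of the form $\Psi(\bar v,\bar c)$ and $\Delta(\mathcal O)$ are compact and convex, and Lemma~\ref{lem:app-class-regularity} supplies the compactness and measurability needed to define $\Delta_{\le1}$ and the suprema. Applying the theorem to the coarsened class gives unit-scale DEC at most $2\sqrt{2(HSA+1)}\,\varepsilon$; multiplying by $H$ yields \eqref{eq:app-fuzzy-dec-bound}. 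The main obstacle is this step. I need to confirm that the imported theorem really is uniform over subclasses, so that localization is harmless. I also need to check that subprobability mixtures and the directed (rather than symmetric) Hellinger convention match theirs exactly. Lemma~\ref{lem:app-imported-hellinger}(i) already states that no factor of two or reversal enters. If the cited bound is stated only for full classes, I would instead use monotonicity of the sup under shrinking $\mathfrak H$, since restricting $\mu$ to a subclass can only decrease the inner supremum.
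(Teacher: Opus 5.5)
Your proposal is correct and follows the paper's proof essentially step for step. You rescale by $H$ using positive homogeneity, then push the stage-$H$ rows of both the predictor and the candidates forward through $T_{\rm term}$, using Hellinger data processing to keep every witness feasible with an unchanged objective, and finally apply the unit-scale coefficient of \cite[Theorem~4]{appel2025regret} to the compact $1$-bounded image subclass.

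Two pieces of bookkeeping are left implicit or slightly misattributed. First, the paper obtains the $+1$ in $HSA+1$ by prepending Appel--Kosoy's deterministic one-action, zero-reward, zero-loss time-zero row, which changes no value or loss. Second, for a generic $M$ the continuity and policy coherence of $T_{\rm term\#}M$ follow from the lemma's own hypotheses, not from Proposition~\ref{prop:app-predictor-continuity}, which concerns only the market's predictor.
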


\begin{proof}
Evaluate the same outcome laws with unit reward \(Y\) rather than
task-scale reward \(HY\), and divide every candidate and predicted value by
\(H\).  The feasible Hellinger-loss set is unchanged, while the objective
in Definition~\ref{def:learning-fuzzy-dec} is divided by \(H\).  The
candidate class is then 1-bounded.  Proposition
\ref{prop:app-predictor-continuity} verifies that the predictor produced by our
market belongs to \(\mathfrak M_{\rm adm}\).  For a generic
\(M\in\mathfrak M_{\rm adm}\), policy coherence is part of the definition:
for every query \(\pi\), define at every post-action history \(\eta_h\) ending
in \(s_h\)
\[
\sigma_h^\pi(\cdot\mid\eta_h,a)
\coloneqq M_{\pi,h}(s_h,a).
\]
Finite-horizon recursion then gives exactly
\(M_\pi=\sigma^\pi\bowtie\pi\), including at null histories.  Finally,
Lemma~\ref{lem:app-class-regularity} verifies the Borel regularity needed for
the mixtures below.

We next adapt the terminal convention before invoking the imported bound.
Define
\(T_{\rm term}(y,s')=(y,s_\dagger)\) at stage \(H\)
and leave earlier outcomes unchanged.  For a generic candidate, define
\[
(T_{\rm term\#}N)_H(s,a)
\coloneqq
\{T_{\rm term\#}\nu:\nu\in N_H(s,a)\};
\]
we do not identify the original row with the inverse image of its coarsening.
Because \(T_{\rm term}\) retains \(Y\), minimizing expected terminal reward
over the image is identical to minimizing it over the original row.  Backward
induction therefore preserves all candidate values and predicted returns, and
pushing forward an \(H\)-bounded selector preserves boundedness.

For any deterministic map \(T\), Cauchy--Schwarz on every fiber gives
Hellinger data processing:
\begin{equation}
D_{\rm H}^2(T_\#p,T_\#q)\le D_{\rm H}^2(p,q).
\label{eq:app-hellinger-data-processing}
\end{equation}
Indeed, the affinity after coarsening is
\[
\sum_y
\sqrt{\sum_{T(o)=y}p(o)\sum_{T(o)=y}q(o)}
\ge\sum_o\sqrt{p(o)q(o)}.
\]
Taking the infimum over \(q\in C\) gives
\[
d_{T_\#C}(T_\#p)\le d_C(p).
\]
Let \(L_{\rm AK}(T_{\rm term\#}M,T_{\rm term\#}N,\pi)\) denote the loss evaluated on the terminal-coarsened outcome
space.  The preceding rowwise inequality and the unchanged predicted
occupancies imply
\begin{equation}
L_{\rm AK}(T_{\rm term\#}M,T_{\rm term\#}N,\pi)
\le L(M,N,\pi).
\label{eq:app-loss-comparison}
\end{equation}
Thus, for each fixed policy law, a subprobability witness feasible under our
full-row loss pushes forward to one feasible under the coarsened loss with the
same unit objective.  Our inner supremum is therefore at most the coarsened
inner supremum over the image class.  The image is a compact 1-bounded subclass
of the finite tabular robust models covered by Theorem~4 of
\cite{appel2025regret}, so its DEC is at most their universal unit-scale
coefficient.

Finally prepend their deterministic one-action, zero-reward, zero-loss time-zero
row.  This changes no value or loss and accounts for the \(+1\) in
\(HSA+1\).  The imported theorem therefore bounds the unit coefficient by
\(2\sqrt{2(HSA+1)}\,\varepsilon\).  Multiplying the objective by \(H\)
gives equation~\eqref{eq:app-fuzzy-dec-bound}.  This positive-homogeneity
step, the generic terminal pushforward, and the dummy-row bookkeeping are
our adapters; the numerical unit-scale coefficient is the cited result.  No
task reward or observation is normalized in the algorithm.
\end{proof}

\subsection{E2D and the approximate-oracle extension}
\label{app:learning-e2d}

\cite[Theorem~1]{appel2025regret} give the E2D master inequality for an
exact policy-selection oracle, and
\cite[Remark~4.1]{foster2021statistical} explain that generic E2D analyses
accommodate inexact minimizers.  For transparency, we derive the explicit
\(2K\theta\) contribution for our localized fuzzy objective and keep the
predictability and measurability requirements visible.
Neither that theorem nor the general DEC formalism of
\cite{foster2021statistical} supplies an efficient oracle, or a measurable
approximate-selection procedure, for our localized stage-tied class.
Assumption~\ref{ass:decision-oracle} explicitly supplies the predictable
measurable \(p_k\); Lemma~\ref{lem:app-class-regularity} ensures that its
objective is Borel, and the following lemma records exactly what follows
conditional on this oracle assumption.

\begin{lemma}[Predictable approximate-oracle E2D]
\label{lem:app-approximate-e2d}
Let \(N^\circ\in\mathfrak H_{\rm st}^{(H)}\) be a fixed candidate with
\(v_{N^\circ}^\star=v^\star\), let \(\mathcal H_k\) be exactly the localized
class in equation~\eqref{eq:learning-localized-class}, and let
\(\varepsilon=\sqrt{\beta/K}\).  Suppose the predictable law \(p_k\) obeys
the approximate-oracle inequality
\[
Q_k(p_k)\le\inf_{p\in\Delta(\Pi)}Q_k(p)+\theta
\]
and that \(N^\circ\) satisfies, each with probability at least \(1-\delta\),
\begin{align}
\sum_{k=1}^K\mathbb E_{\pi\sim p_k}
L(\widehat M_k,N^\circ,\pi)&\le\beta,
\label{eq:app-e2d-inaccuracy}\\
\sum_{k=1}^K\mathbb E_{\pi\sim p_k}
[\widehat v_k(\pi)-m_k(\pi)]&\le\alpha.
\label{eq:app-e2d-optimism}
\end{align}
On the event in equation~\eqref{eq:app-e2d-inaccuracy}, nonnegativity of the
loss and the prefix definition of \(\mathcal H_k\) imply
\(N^\circ\in\mathcal H_k\) for every \(k\).  Then
\begin{equation}
\mathbb E\Reg_K
\le
2K\sup_{M\in\mathfrak M_{\rm adm}}
\operatorname{dec}^{\rm f,H}_{\varepsilon}
(\mathfrak H_{\rm st}^{(H)},M)
+2K\theta+\alpha+2HK\delta.
\label{eq:app-e2d-master}
\end{equation}
\end{lemma}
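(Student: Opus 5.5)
We prove \eqref{eq:app-e2d-master} with the standard E2D per-episode decomposition, specialized to the localized fuzzy objective. Let $\mathcal E$ be the intersection of the two events in \eqref{eq:app-e2d-inaccuracy}--\eqref{eq:app-e2d-optimism}, so $\Pr(\mathcal E^c)\le2\delta$. Off $\mathcal E$ each summand $v^\star-m_k$ lies in $[-H,H]$, and hence $\mathcal E^c$ contributes at most $2HK\delta$ to $\mathbb E\Reg_K$. On $\mathcal E$, the prefix definition \eqref{eq:learning-localized-class} together with nonnegativity of $L$ gives $N^\circ\in\mathcal H_k$ for every $k$. For each $k$, write
\[
v^\star-\mathbb E_{\pi\sim p_k}m_k(\pi)
=\mathbb E_{\pi\sim p_k}\bigl[v_{N^\circ}^\star-\widehat v_k(\pi)\bigr]
+\mathbb E_{\pi\sim p_k}\bigl[\widehat v_k(\pi)-m_k(\pi)\bigr].
\]
Because $p_k$ is predictable and $m_k$ is a conditional expectation given $\mathcal F_{k-1}$, we get $\mathbb E[v^\star-m_k\mid\mathcal F_{k-1}]=v^\star-\mathbb E_{\pi\sim p_k}m_k(\pi)$ (using the assumed Borel measurability in $\pi$). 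Summed over $k$, the second term is at most $\alpha$ on $\mathcal E$.

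For the first term, write $\ell_k\coloneqq\mathbb E_{\pi\sim p_k}L(\widehat M_k,N^\circ,\pi)$ and $g_k\coloneqq\mathbb E_{\pi\sim p_k}[v^\star_{N^\circ}-\widehat v_k(\pi)]$. We use the subprobability witness $\mu_k\coloneqq t_k\delta_{N^\circ}$ with $t_k\coloneqq\min\{1,\varepsilon^2/\ell_k\}$. It is feasible for $Q_k(p_k)$, since its unnormalized loss is $t_k\ell_k\le\varepsilon^2$ and $N^\circ\in\mathcal H_k$. Feasibility gives $t_kg_k\le Q_k(p_k)$. In episodes where $\ell_k\le\varepsilon^2$ we have $t_k=1$, so $g_k\le Q_k(p_k)$. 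In episodes where $\ell_k>\varepsilon^2$ we bound $g_k\le H$, since $v^\star_{N^\circ}\le H$ and $\widehat v_k\ge0$ (note $g_k$ may be negative, which only helps). There are at most $\beta/\varepsilon^2=K$ such episodes. Summing $\ell_k$ to at most $\beta$ with $\varepsilon^2=\beta/K$ therefore gives
\[
\sum_k g_k\le\sum_kQ_k(p_k)+H\cdot\#\{k:\ell_k>\varepsilon^2\}.
\]
The second term is $O(HK)$, which is too crude. This is the step I expect to be the main obstacle.

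The remedy is the scaled comparison used in \cite[Theorem~1]{appel2025regret}. Positivity of $g_k$ gives $g_k\le Q_k(p_k)/t_k=Q_k(p_k)\max\{1,\ell_k/\varepsilon^2\}$. Since $Q_k(p_k)\le\sup_M\operatorname{dec}+\theta\eqqcolon d+\theta$ by the oracle inequality, we obtain
\[
\sum_kg_k\le(d+\theta)\sum_k\Bigl(1+\frac{\ell_k}{\varepsilon^2}\Bigr)\le(d+\theta)\Bigl(K+\frac{\beta}{\varepsilon^2}\Bigr)=2K(d+\theta).
\]
This yields exactly the factor $2K$ in front of both $\sup\operatorname{dec}$ and $\theta$. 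The bound $\inf_pQ_k(p)=\operatorname{dec}^{\mathrm f,H}_\varepsilon(\mathcal H_k,\widehat M_k)\le\sup_M\operatorname{dec}^{\mathrm f,H}_\varepsilon(\mathfrak H_{\rm st}^{(H)},M)$ holds by monotonicity of the supremum in the class ($\mathcal H_k\subseteq\mathfrak H_{\rm st}^{(H)}$) and because $\widehat M_k\in\mathfrak M_{\rm adm}$ by Proposition~\ref{prop:app-predictor-continuity}. Lemma~\ref{lem:app-class-regularity} guarantees that all of these objects are well defined and Borel.

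Combining the pieces: on $\mathcal E$, $\sum_k\mathbb E[v^\star-m_k\mid\mathcal F_{k-1}]\le2K(d+\theta)+\alpha$, and the failure event adds at most $2HK\delta$. One point needs care. $\mathcal E$ is not $\mathcal F_{k-1}$-measurable, but the bound is pathwise in the predictable quantities $g_k$ and $\ell_k$, so taking the expectation of the sum restricted to $\mathcal E$, plus the $\mathcal E^c$ contribution, is legitimate. This gives \eqref{eq:app-e2d-master}.
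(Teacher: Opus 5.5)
Your proposal is correct and is essentially the paper's argument: you use the same witness $t_k\delta_{N^\circ}$ with $t_k=\min\{1,\varepsilon^2/\ell_k\}$, obtain $g_k\le(d+\theta)(1+\ell_k/\varepsilon^2)$ and sum it to $2K(d+\theta)$ via $\sum_k\ell_k\le\beta=K\varepsilon^2$, then add $\alpha$ on the good event and $2HK\delta$ for the failure event. One detail should be stated explicitly: $Q_k(p_k)\ge0$ because the zero subprobability is feasible, and this is what makes the case $g_k<0$ automatic.
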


\begin{proof}
For \(\theta=0\), this is the task-scale on-policy specialization of
Theorem~1 of \cite{appel2025regret}: one round is one episode, a decision
is a Markov policy, and the observation is the coded public trajectory.
On the estimation event, nonnegativity of loss and the prefix form of
\eqref{eq:app-e2d-inaccuracy} place \(N^\circ\) in every
\(\mathcal H_k\).

We now isolate the effect of \(\theta\).  Put
\begin{align*}
\ell_k
&=\mathbb E_{\pi\sim p_k}L(\widehat M_k,N^\circ,\pi),\\
r_k
&=v_{N^\circ}^\star-
\mathbb E_{\pi\sim p_k}\widehat v_k(\pi),\\
w_k
&=\min\{1,\varepsilon^2/\ell_k\},
\end{align*}
with \(w_k=1\) when \(\ell_k=0\).  The subprobability measure of mass
\(w_k\) concentrated at \(N^\circ\) is feasible for the episode-\(k\) fuzzy
constraint because \(w_k\ell_k\le\varepsilon^2\).  If
\[
D=\sup_{M\in\mathfrak M_{\rm adm}}
\operatorname{dec}^{\rm f,H}_\varepsilon
(\mathfrak H_{\rm st}^{(H)},M),
\]
then localization gives
\(\operatorname{dec}^{\rm f,H}_\varepsilon(\mathcal H_k,\widehat M_k)
\le D\).  Because \(N^\circ\in\mathcal H_k\), approximate minimization
therefore gives
\begin{equation}
w_kr_k\le D+\theta.
\label{eq:app-approximate-witness}
\end{equation}
The zero subprobability is feasible, so \(D\ge0\).  If \(r_k<0\), the desired
upper bound below is automatic.  If \(r_k\ge0\), divide
equation~\eqref{eq:app-approximate-witness} by \(w_k>0\).  In both cases,
\[
r_k
\le(D+\theta)w_k^{-1}
\le(D+\theta)(1+\ell_k/\varepsilon^2).
\]
Summing and using
\(\sum_k\ell_k\le\beta=K\varepsilon^2\) gives
\begin{equation}
\sum_{k=1}^Kr_k\le2K(D+\theta).
\label{eq:app-approximate-oracle-sum}
\end{equation}
For every \(k\),
\[
r_k+\mathbb E_{\pi\sim p_k}
[\widehat v_k(\pi)-m_k(\pi)]
=
\mathbb E_{\pi\sim p_k}
[v_{N^\circ}^\star-m_k(\pi)].
\]
Thus adding equation~\eqref{eq:app-e2d-optimism} gives the good-event bound
\(2KD+2K\theta+\alpha\) on the sum of conditional expected factual-return
gaps.

On the union of the two failure events, each true conditional regret gap is
at most \(H\): \(v_{N^\circ}^\star=v^\star\le H\) and
\(m_k(\pi)\ge0\).  Thus the failure contribution is at most
\(2HK\delta\).  This argument does not bound the predictor, which may be as
large as \(H^2\).  Finally, predictability of \(p_k\) gives
\[
\mathbb E\Reg_K
=
\mathbb E\sum_{k=1}^K\mathbb E_{\pi\sim p_k}
[v^\star-m_k(\pi)].
\]
Taking expectations proves
equation~\eqref{eq:app-e2d-master}.
\end{proof}

\subsection{Final substitution and security guarantee}
\label{app:learning-final-proof}

\begin{proof}[Proof of Theorem~\ref{thm:stage-tied-regret}]
Lemma~\ref{lem:app-surrogate-applicability} supplies a fixed true candidate
\(N^\circ\) for every episode.  This is where the assumptions that
\(B^\star\) and the physical kernel remain fixed are used.  The algorithm does
not need to know this candidate.  Lemma~\ref{lem:app-conversion} identifies
its task-scale comparator and factual returns with \(v^\star\) and
\(m_k(\pi)\), respectively.

If \(K=1\), the one-episode return-regret gap is at most \(H\).  Moreover,
\[
\Lambda_1(1)=\log(512H^2A)>1,
\]
and hence \(114H^2S\sqrt{A\Lambda_1(1)}\ge H\).  Consequently the minimum in
equation~\eqref{eq:learning-return-regret} equals its deterministic one-episode
branch \(H\).  This also avoids applying a confidence theorem with the invalid
input \(\delta=1\).

At the confidence level used by Algorithm~\ref{alg:stage-tied-e2d} for \(K\ge2\),
\begin{equation}
\Lambda_K(K^{-2})
=\log(128H^2A)+2\log(K+1)+4\log K,
\label{eq:learning-expanded-log}
\end{equation}
so \(\Lambda_K(K^{-2})=\Theta(\log K)\) for fixed \(H\) and \(A\).

Assume \(K\ge2\), set \(\delta=\theta=K^{-2}\), and abbreviate
\[
\Lambda=\Lambda_K(K^{-2}),
\qquad
L=\log(32K^2).
\]
If \(\Lambda\ge K\), the deterministic cap gives
\[
\mathbb E\Reg_K
\le HK
\le H^2S\sqrt{AK\Lambda}.
\]
This is only a finite-parameter fallback; equation~\eqref{eq:learning-expanded-log}
shows that \(\Lambda\) is normally logarithmic in \(K\) for fixed \(H,A\).

It remains to treat \(0<\Lambda<K\).  Theorem~\ref{thm:stage-tied-estimation}
allows \(\beta=144HS\Lambda\), and
Lemma~\ref{lem:app-optimism} allows
\[
\alpha=8H^2\{\sqrt{KL}+L\}.
\]
Apply Lemma~\ref{lem:app-approximate-e2d} and then
Lemma~\ref{lem:app-dec-interface}.  We obtain
\begin{align}
\mathbb E\Reg_K
&\le
4H\sqrt{2(HSA+1)\beta K}+\alpha+\frac{2(1+H)}K
\nonumber\\
&\le
8H\sqrt{HSA\,\beta K}+\alpha+\frac{4H}K.
\label{eq:app-main-substitution}
\end{align}
The second inequality uses \(HSA+1\le2HSA\) and \(H\ge1\).  Substituting
\(\beta=144HS\Lambda\) makes the first term
\begin{equation}
8H\sqrt{HSA\cdot144HS\Lambda\cdot K}
=96H^2S\sqrt{AK\Lambda}.
\label{eq:app-leading-term}
\end{equation}
Furthermore,
\[
\Lambda-L
=\log\!\left(4H^2A(K+1)^2K^2\right)>0,
\]
so \(L\le\Lambda\).  Because \(0<\Lambda<K\),
\(\Lambda\le\sqrt{K\Lambda}\).  Therefore
\begin{equation}
\alpha
\le8H^2\{\sqrt{K\Lambda}+\Lambda\}
\le16H^2S\sqrt{AK\Lambda}.
\label{eq:app-optimism-domination}
\end{equation}
Finally, \(K\ge2\) gives \(4H/K\le2H\), while
\(H^2S\sqrt{AK\Lambda}\ge H\); hence
\begin{equation}
\frac{4H}K\le2H^2S\sqrt{AK\Lambda}.
\label{eq:app-residual-domination}
\end{equation}
Combining equations~\eqref{eq:app-main-substitution}-
\eqref{eq:app-residual-domination} and using \(96+16+2=114\) proves
\[
\mathbb E\Reg_K
\le114H^2S\sqrt{AK\Lambda}.
\]
This is already in the original task units and proves the statistical
branch of equation~\eqref{eq:learning-return-regret}.  The independent
pathwise cap \(HK\) proves its minimum form.

For every episode, the realized response is feasible, so
\(m_k\ge W(\pi_k)\).  Therefore \(D_K\ge0\), and the algebraic identity
\[
\operatorname{Reg}^{\rm sec}_K
=\operatorname{Reg}_K+D_K
\]
holds pathwise.  Taking expectations and using the return bound gives the
universal security bound with \(+\mathbb E[D_K]\).  Independently, every
security-regret summand belongs to \([0,H]\), so security regret is at most
\(HK\) pathwise.  Taking the smaller proves
equation~\eqref{eq:learning-security-regret}.  If
\(m_k-W(\pi_k)\le\epsilon_k\), then
\(D_K\le\sum_k\epsilon_k\), which proves
  \begin{equation}
\mathbb E[\Regsec_K]
\le
\min\!\left\{HK,\,
\mathcal C_K+\sum_{k=1}^K\mathbb E[\epsilon_k]\right\}. 
\end{equation}
Exact worst responses have
\(D_K=0\).
\end{proof}

\end{document}